\newtheorem{thm}{Theorem}[section]
\newtheorem{lem}[thm]{Lemma}
\newtheorem{prop}[thm]{Proposition}
\newtheorem{corollary}[thm]{Corollary}
\newtheorem{definition}[thm]{Definition}
\newtheorem{example}[thm]{Example}
\newtheorem{fact}[thm]{Fact}
\newcommand\Jmin{J_{\min}}
\newcommand\Jmax{J_{\max}}
\newcommand\alg{CondST}
\newcommand\algpre{CondST\_Pre}
\newcommand\Tsaw{T_{saw}(i;G)}
\newcommand\Gp{\mathcal{G}(p, \frac{c}{p})}
\newcommand\prealpha{\beta}
\newcommand\E{\mathrm{E}}
\newcommand{\er}{Erd\H os-R\'enyi }
\newcommand\1[1]{\mathbb{I}_{\left\{#1\right\}}}
\begin{document}

\begin{frontmatter}

\title{Learning Loosely Connected Markov Random Fields}
\runtitle{Learning Loosely Connected Markov Random Fields}


\author{\fnms{Rui} \snm{Wu}\thanksref{m1,t1}\ead[label=e1]{ruiwu1@illinois.edu}},
\author{\fnms{R} \snm{Srikant}\thanksref{m1}\ead[label=e2]{rsrikant@illinois.edu}}
\and
\author{\fnms{Jian} \snm{Ni}\thanksref{m2,t2}\ead[label=e3]{nij@ibm.com}}

\address{Department of Electrical and Computer Engineering\\
University of Illinois at Urbana-Champaign\\
Urbana, IL 61801, USA \\
\printead{e1}\\
\phantom{E-mail:\ }\printead*{e2}}

\address{IBM T. J. Watson Research Center\\
Yorktown Heights, NY 10598, USA\\
\printead{e3}}

\affiliation{University of Illinois at Urbana-Champaign\thanksmark{m1} and IBM T. J. Watson Research Center\thanksmark{m2}}

\thankstext{t1}{Research supported in part by AFOSR MURI FA 9550-10-1-0573.}
\thankstext{t2}{Jian Ni's work was done when he was at the University of Illinois at Urbana-Champaign.}

\runauthor{R. Wu, R. Srikant and J. Ni}

\begin{abstract}
We consider the structure learning problem for graphical models that we call loosely connected Markov random fields, in which the number of short paths between any pair of nodes is small, and present a new conditional independence test based algorithm for learning the underlying graph structure. The novel maximization step in our algorithm ensures that the true edges are detected correctly even when there are short cycles in the graph. The number of samples required by our algorithm is $C\log p$, where $p$ is the size of the graph and the constant $C$ depends on the parameters of the model. We show that several previously studied models are examples of loosely connected Markov random fields, and our algorithm achieves the same or lower computational complexity than the previously designed algorithms for individual cases. We also get new results for more general graphical models, in particular, our algorithm learns general Ising models on the \er random graph $\Gp$ correctly with running time $O(np^5)$.

\end{abstract}

\begin{keyword}[class=AMS]
\kwd[Primary ]{62-09}
\kwd{68W40}
\kwd{68T05}
\kwd[; secondary ]{91C99}
\end{keyword}

\begin{keyword}
\kwd{Markov random field}
\kwd{structure learning algorithm}
\kwd{computational complexity}
\end{keyword}

\end{frontmatter}


\section{Introduction}\label{sec:intro}

 \nocite{*}

In many models of networks, such  as social networks and gene regulatory networks, each
node in the network represents a random variable and the graph encodes the conditional
independence relations among the random variables. A Markov random field is a
particular such representation which has
applications in a variety of areas (see \cite{Anima2} and the references therein).
In a Markov random field, the lack of an edge between two nodes implies that the two random variables
are independent, conditioned on all the other random variables in the network.

Structure learning, i.e, learning the underlying graph structure of a Markov random field, refers to
the problem of determining if there is an edge
between each pair of nodes, given i.i.d. samples from the joint distribution of the random
vector. As a concrete example of structure learning, consider a social network in which only the participants' actions are observed.
In particular, we do not observe or are unable to observe, interactions between the participants.
Our goal is to infer relationships among the nodes (participants) in such a network by understanding the correlations among the nodes.
The canonical example used to illustrate such inference problems is the US Senate \cite{Senate}. Suppose one has access to the voting patterns of the
senators over a number of bills (and not their party affiliations or any other information), the question we would like to answer is the
following: can we say that a particular senator's vote is independent of everyone else's when conditioned on a few other senators' votes?
In other words, if we view the senators' actions as forming a Markov Random Field (MRF), we want to infer the topology of the underlying graph.

In general, learning high dimensional densely connected graphical models requires large number of samples, and is usually computationally intractable. In this paper, we focus on a more tractable family which we call loosely connected MRFs. Roughly speaking, a Markov random field is loosely connected if the number of short paths between any pair of nodes is small. We show that many previously studied models are examples of this family. In fact, as densely connected graphical models are difficult to learn, some sparse assumptions are necessary to make the learning problem tractable. Common assumptions include an upper bound on the node degree of the underlying graph \cite{Guy,Sujay}, restrictions on the class of parameters of the joint probability distribution of the random variables to ensure correlation decay \cite{Guy,Sujay,Anima1}, lower bounds on the girth of the underlying graph \cite{Sujay}, and a sparse, probabilistic structure on the underlying random graph \cite{Anima1}. In all these cases, the resulted MRFs turn out to be loosely connected. In this sense, our definition here provides a unified view of the assumptions in previous works.

However, loosely connected MRFs are not always easy to learn. Due to the existence of short cycles, the dependence over an edge connecting a pair of neighboring nodes can be approximately cancelled by some short non-direct paths between them, in which case correctly detecting this edge is difficult, as shown in the following example. This example is perhaps well-known, but we present it here to motivate our algorithm presented later.
\begin{example}\label{example:cycle}
  Consider three binary random variables $X_i\in \{0, 1\}, i = 1, 2, 3$. Assume $X_1, X_2$ are independent $\text{Bernoulli}(\frac{1}{2})$ random variables and $X_3 = X_1 \oplus X_2$ with probability $0.9$, where $\oplus$ means exclusive or. We note that this joint distribution is symmetric, i.e., we get the same distribution if we assume that $X_2, X_3$ are independent $\text{Bernoulli}(\frac{1}{2})$ and $X_1 = X_2\oplus X_3$ with probability $0.9$. Therefore, the underlying graph is a triangle. However, it is not hard to see that the three random variables are marginally independent. For this simple example, previous methods in \cite{Sujay, Anima2} fail to learn the true graph.\qed
\end{example}

We propose a new algorithm that correctly learns the graphs for loosely connected MRFs. For each node, the algorithm loops over all the other nodes to determine if they are neighbors of this node. The key step in the algorithm is a max-min conditional independence test, in which the maximization step is designed to detect the edges while the minimization step is designed to detect non-edges. The minimization step is used in several previous works such as \cite{Anima1, Anima2}. The maximization step has been added to explicitly break the short cycles that can cause problems in edge detection. If the direct edge is the only edge between a pair of neighboring nodes, the dependence over the edge can be detected by a simple independence test. When there are other short paths between a pair of neighboring nodes, we first find a set of nodes that separates all the non-direct paths between them, i.e., after removing this set of nodes from the graph, the direct edge is the only short path connecting to two nodes. Then the dependence over the edge can again be detected by a conditional independence test where the conditioned set is the set above. In Example~\ref{example:cycle}, $X_1$ and $X_3$ are unconditionally independent as the dependence over edge $(1, 3)$ is canceled by the other path $(1, 2, 3)$. If we break the cycle by conditioning on $X_2$, $X_1$ and $X_3$ become dependent, so our algorithm is able to detect the edges correctly.
As the size of the conditioned sets is small for loosely connected MRFs, our algorithm has low complexity. In particular, for models with at most $D_1$ short paths between non-neighbor nodes and $D_2$ non-direct paths between neighboring nodes, the running time for our algorithm is $O(np^{D_1+D_2+2})$.

If the MRF satisfies a pairwise non-degeneracy condition, i.e., the correlation between any pair of neighboring nodes is lower bounded by some constant, then we can extend the basic algorithm to incorporate a correlation test as a preprocessing step. For each node, the correlation test adds those nodes whose correlation with the current node is above a threshold to a candidate neighbor set, which is then used as the search space for the more computationally expensive max-min conditional independence test. If the MRF has fast correlation decay, the size of the candidate neighbor set can be greatly reduced, so we can achieve much lower computational complexity with this extended algorithm.

When applying our algorithm to Ising models, we get lower computational complexity for a ferromagnetic Ising model than a general one on the same graph. Intuitively, the edge coefficient $J_{ij}>0$ means that $i$ and $j$ are positively dependent. For any path between $i, j$, as all the edge coefficients are positive, the dependence over the path is also positive. Therefore, the non-direct paths between a pair of neighboring nodes $i, j$ make $X_i$ and $X_j$, which are positively dependent over the edge $(i, j)$, even more positively dependent. Therefore, we do not need the maximization step which breaks the short cycles and the resulting algorithm has running time $O(np^{D_1+2})$. In addition, the pairwise non-degeneracy condition is automatically satisfied and the extended algorithm can be applied.

\subsection{Relation to Prior Work}

We focus on computational complexity rather than sample complexity in comparing our algorithm with previous algorithms. In fact, it has been shown that $\Omega(\log p)$ samples are required to learn the graph correctly with high probability, where $p$ is the size of the graph \cite{Sample_Complexity}. For all the previously known algorithms for which analytical complexity bounds are available, the number of samples required to recover the graph correctly with high probability, i.e, the sample complexity, is $O(\log p)$. Not surprisingly, the sample complexity for our algorithm is also $O(\log p)$ under reasonable assumptions. 

Our algorithm with the probability test reproduces the algorithm in \cite[Theorem 3]{Guy} for MRFs on bounded degree graphs. Our algorithm is more flexible and achieves lower computational complexity for MRFs that are loosely connected but have a large maximum degree. In particular, reference \cite{Sujay} proposed a low complexity greedy algorithm that is correct when the MRF has correlation decay and the graph has large girth. We show that under the same assumptions, we can first perform a simple correlation test and reduce the search space for neighbors from all the nodes to a constant size candidate neighbor set. With this preprocessing step, our algorithm and the algorithms in \cite{Guy, Sujay, Sujay2} have computational complexity $O(np^2)$, which is lower than what we would get by only applying the greedy algorithm \cite{Sujay}. The results in \cite{Sujay2} improve over \cite{Sujay} by proposing two new greedy algorithms that are correct for learning small girth graphs. However, the algorithm in \cite{Sujay2} requires a constant size candidate neighbor set as input, which might not be easy to obtain in general. In fact, for MRFs with bad short cycles as in Example~\ref{example:cycle}, learning a candidate neighbor set can be as difficult as directly learning the neighbor set.

Our analysis of the class of Ising models on sparse \er random graphs $\Gp$ was motivated by the results in \cite{Anima1} which studies the special case of the so-called ferromagnetic Ising models defined over an Erd\H os-R\'enyi random graph. The computational complexity of the algorithm in \cite{Anima1} is $O(np^4)$. In this case, the key step of our algorithm reduces to the algorithm in \cite{Anima1}. But we show that, under the ferromagnetic assumption, we can again perform a correlation test to reduce the search space for neighbors, and the total computational complexity for our algorithm is $O(np^2)$.

The results in \cite{Anima2} extend the results in \cite{Anima1} to general Ising models and more general sparse graphs (beyond the Erd\H os-R\'enyi model). We note that the tractable graph families in \cite{Anima2} is similar to our notion of loosely-connected MRFs.
For general Ising models over sparse \er random graphs, our algorithm has computational complexity $O(np^5)$ while the algorithm in \cite{Anima2} has computational complexity $O(np^4)$. The difference comes from the fact that our algorithm has an additional maximization step to break bad short cycles as in Example~\ref{example:cycle}. Without this maximization step, the algorithm in \cite{Anima2} fails for this example. The performance analysis in \cite{Anima2} explicitly excludes such difficult cases by noting that these ``unfaithful'' parameter values have Lebesgue measure zero \cite[Section B.3.2]{Anima2}. However, when the Ising model parameters lie close to this Lebesgue measure zero set, the learning problem is still ill posed for the algorithm in \cite{Anima2}, i.e., the sample complexity required to recover the graph correctly with high probability depends on how close the parameters are to this set, which is not the case for our algorithm. In fact, the same problem with the argument that the unfaithful set is of Lebesgue measure zero has been observed for causal inference in the Gaussian case \cite{Faithful}. It has been shown in \cite{Faithful} that a stronger notion of faithfulness is required to get uniform sample complexity results, and the set that is not strongly faithful has non-zero Lebesgue measure and can be be surprisingly large.

Another way to learn the structures of MRFs is by solving $l_1$-regularized convex optimizations under a set of incoherence conditions \cite{Martin}. It is shown in \cite{Jose} that, for some Ising models on a bounded degree graph, the incoherence conditions hold when the Ising model is in the correlation decay regime. But the incoherent conditions do not have a clear interpretation as conditions for the graph parameters in general and are NP-hard to verify for a given Ising model \cite{Jose}. 
Using results from standard convex optimization theory \cite{Boyd}, it is possible to design a polynomial complexity algorithm to approximately solve the $l_1$-regularized optimization problem. However, the actual complexity will depend on the details of the particular algorithm used, therefore, it is not clear how to compare the computational complexity of our algorithm with the one in \cite{Martin}.

We note that the recent development of directed information graphs \cite{Negar} is closely related to the theory of MRFs. Learning a directed information graph, i.e., finding the causal parents of each random process, is essentially the same as finding the neighbors of each random variable in learning a MRF. Therefore, our algorithm for learning the MRFs can potentially be used to learn the directed information graphs as well.

The paper is organized as follows. We present some preliminaries in the next section. In Section~\ref{sec:loose}, we define loosely-connected MRFs and show that several previously studied models are examples of this family. In Section~\ref{sec:alg}, we present our algorithm and show the conditions required to correctly recover the graph. We also provide the concentration results in this section. In Section~\ref{sec:complexity_general}, we apply our algorithm to the general Ising models studied in Section~\ref{sec:loose} and evaluate its sample complexity and computational complexity in each case. In Section~\ref{sec:complexity_ferro}, we show that our algorithm achieves even lower computational complexity when the Ising model is ferromagnetic. Experimental results are presented in Section~\ref{sec:experiment}.

\section{Preliminaries}\label{sec:prelim}

\subsection{Markov Random Fields (MRFs)}

Let $X = (X_1, X_2, \dots, X_p)$ be a random vector with distribution $P$ and $G = (V,
E)$ be an undirected graph consisting of $|V| = p$ nodes with each node $i$ associated with
the $i^{\rm th}$ element $X_i$ of $X.$ Before we define an MRF, we introduce the notation $X_S$ to denote any subset $S$ of the random variables in $X.$
A random vector and graph pair $(X, G)$ is called an MRF
if it satisfies one of the following three Markov properties:
\begin{enumerate}
  \item Pairwise Markov: $X_i\perp X_j|X_{V\setminus \{i, j\}}, \forall (i, j)\not\in E,$ where $\perp$ denotes independence.
  \item Local Markov: $X_i\perp X_{V\setminus \{i\cup N_i\}}|X_{N_i}, \forall i\in V,$ where $N_i$ is the set of neighbors of node $i.$
  \item Global Markov: $X_A\perp X_B|X_S$, if $S$ separates $A, B$ on $G$. In this case,
  we say $G$ is an \emph{I-map} of $X$. Further if $G$ is an I-map of $X$ and the global Markov property does not hold if any edge of $G$ is removed, then
  $G$ is called a \emph{minimal I-map} of X.
\end{enumerate}
In all three cases, $G$ encodes a subset of the conditional independence relations of $X$ and we
say that $X$ is Markov with respect to $G$. We note that the global Markov property implies the local Markov property, which in turn
implies the pairwise Markov property.

When $P(x)>0, \forall x$, the three Markov properties are equivalent, i.e., if there exists a $G$ under which one of the Markov properties is satisfied, then the other two are also satisfied.
Further, in the case when $P(x)>0, \forall x,$ there exists a unique minimal
I-map of $X$. The unique minimal I-map $G = (V, E)$ is constructed as follows:
\begin{enumerate}
  \item Each random variable $X_i$ is associated with a node $i\in V.$
  \item $(i, j)\not\in E$ if and only if $X_i\perp X_j | X_{V\setminus\{i, j\}}$.
\end{enumerate}

In this case, we consider the case $P(x)>0, \forall x$ and are interested in learning the structure of the associated unique minimal I-map.
We will also assume that, for each $i,$ $X_i$ takes on values in a discrete, finite set $\mathcal{X}$. We will also be interested in the special case where the MRF is an Ising model, which we describe next.

\subsection{Ising Model}
Ising models are a type of well-studied pairwise Markov random fields. In an Ising model,
each random variable $X_i$ takes values in the set $\mathcal{X} = \{-1, +1\}$ and the
joint distribution is parameterized by constants called edge coefficients $J$ and external fields $h:$
\begin{align*}
  P(x) = \frac{1}{Z}\exp\Big(\sum_{(i, j)\in E}{J_{ij}x_ix_j}+\sum_{i\in V}h_ix_i\Big).
\end{align*}
where $Z$ is a normalization constant to make $P(x)$ a probability distribution.
If $h = 0$, we say the Ising model is zero-field. If $J_{ij}\geq 0$, we say the Ising
model is ferromagnetic.

Ising models have the following useful property. Given an Ising model,
the conditional probability $P(X_{V\setminus S}|x_S)$ corresponds to an
Ising model on $V\setminus S$ with edge coefficients $J_{ij}, i, j\in V\setminus S$
unchanged and modified external fields $h_i+h_i', i\in V\setminus S$, where $h_i' = \sum_{(i, j)\in E, j\in
S}J_{ij}x_j$ is the additional external field on node $i$ induced by fixing $X_S = x_S$.

\subsection{Random Graphs}

A random graph is a graph generated from a prior distribution over the set of all possible graphs with a given number of nodes.
Let $\chi_p$ be a function on graphs with $p$ nodes and let $C$ be a constant.
We say $\chi_p \geq C$ almost always for a family of random graphs indexed by $p$ if $P(\chi_p\geq C)\to
1$ as $p\to \infty$. Similarly, we say $\chi_p\to C$ almost always for a family of random graphs if $\forall \epsilon>0, P(|\chi_p - C|>\epsilon)\to
1$ as $p\to \infty.$ This is a slight variation of the definition of almost always in \cite{Alon}.

The Erd\H os-R\'enyi random graph $\Gp$ is a graph on $p$ nodes
in which the probability of an edge being in the graph is $\frac{c}{p}$ and the edges are
generated independently. We note that, in this random graph, the average degree of a node
is $c$. In this paper, when we consider random graphs, we only consider the Erd\H os-R\'enyi random graph $\Gp.$

\subsection{High-Dimensional Structure Learning}

In this paper, we are interested in inferring the structure of the graph $G$ associated with an MRF
$(X,G).$ We will assume that $P(x)>0, \forall x,$ and $G$ will refer to the corresponding unique minimal I-map. The goal of structure learning is to design an algorithm that,
given $n$ i.i.d. samples $\{X^{(k)}\}_{k =1}^n$ from the distribution $P,$ outputs an estimate $\hat{G}$
which equals $G$ with high probability when $n$ is large. We say that two graphs are equal when their node and edge sets are identical.

In the classical setting, the accuracy of estimating $G$ is considered only when the
sample size $n$ goes to infinity while the random vector dimension $p$ is held fixed.
This setting is restrictive for many contemporary applications, where the problem size $p$ is
much larger than the number of samples.
A more suitable assumption allows both $n$ and $p$ to become large, with $n$ growing at a slower rate than $p.$ In such a case, the structure learning problem is said to be high-dimensional.

An algorithm for structure learning is evaluated both by its computational complexity and
sample complexity. The computational complexity refers to the number of computations required to
execute the algorithm, as a function of $n$ and $p.$ When $G$ is a deterministic graph, we say the
algorithm has sample complexity $f(p)$ if, for $n = O(f(p)),$
there exist constants $c$ and $\alpha>0,$ independent of $p,$ such that
$\Pr(\hat{G} = G)\geq 1-\frac{c}{p^{\alpha}}$ for all $P$ which are Markov with respect to $G.$
When $G$ is a random graph drawn from some prior distribution, we say the
algorithm has sample complexity $f(p)$ if the above is true almost always.
In the high-dimensional setting $n$ is much smaller than $p.$ In fact, we will show that, for the algorithms
described in this paper, $f(p)=\log p.$

\section{Loosely Connected MRFs}\label{sec:loose}

Loosely connected Markov random fields are undirected graphical models in which the number of short paths between any pair of nodes is small.
Roughly speaking, a path between two nodes is short if the dependence between two node is non-negligible even if all other paths between the nodes are removed. Later, we will more precisely quantify the term "short" in terms of the correlation decay property of the MRF. For simplicity, we say that a set $S$ separates some paths between nodes $i$ and $j$ if removing $S$ disconnects these paths.
In such a graphical model, if $i, j$ are not neighbors, there is a small set of nodes $S$ separating all the short paths between them, and conditioned on this set of variables $X_S$ the two variables $X_i$ and $X_j$ are approximately independent. On the other hand, if $i, j$ are neighbors, there is a small set of nodes $T$ separating all the short non-direct paths between them, i.e, the direct edge is the only short path connecting the two nodes after removing $T$ from the graph. Conditioned on this set of variables $X_T$, the dependence of $X_i$ and $X_j$ is dominated by the dependence over the direct edge hence is bounded away from zero.
The following necessary and sufficient condition for the non-existence of an edge in a graphical model shows that both the sets $S$ and $T$ above are essential for learning the graph, which we have not seen in prior work.
\begin{lem}\label{lem:edge}
Consider two nodes $i$ and $j$ in $G.$ Then, $(i, j)\not\in E$ if and only if $\exists S, \forall T, X_i\perp X_j |X_S, X_T$.
\end{lem}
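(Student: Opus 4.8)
The plan is to reduce both directions to the characterization of the unique minimal I-map recalled in Section~\ref{sec:prelim}: since $P(x)>0$ for all $x$, the edge set of $G$ satisfies $(i,j)\notin E$ if and only if $X_i\perp X_j\mid X_{V\setminus\{i,j\}}$. Throughout I take the conditioning sets $S$ and $T$ to range over subsets of $V\setminus\{i,j\}$, as is implicit in the statement; this is essential, since if $S$ were allowed to contain $i$ then $X_i$ would be measurable with respect to the conditioning variables and $X_i\perp X_j\mid X_S,X_T$ would hold trivially for every pair, collapsing the equivalence. With this convention the entire argument rests on a single observation: for any $S\subseteq V\setminus\{i,j\}$ one can choose $T$ so that $S\cup T=V\setminus\{i,j\}$, and conditioning on $X_S,X_T$ is the same as conditioning on $X_{S\cup T}$.

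For the forward direction, suppose $(i,j)\notin E$, so that $X_i\perp X_j\mid X_{V\setminus\{i,j\}}$. Take $S=V\setminus\{i,j\}$. Then for every $T\subseteq V\setminus\{i,j\}$ we have $S\cup T=V\setminus\{i,j\}$, hence $X_i\perp X_j\mid X_S,X_T$ is exactly the minimal I-map relation. This exhibits the required $S$ witnessing $\exists S\,\forall T\,(X_i\perp X_j\mid X_S,X_T)$.

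For the converse I argue by contraposition, using that negating $\exists S\,\forall T$ yields $\forall S\,\exists T$. Suppose $(i,j)\in E$; by the minimal I-map characterization this means $X_i\not\perp X_j\mid X_{V\setminus\{i,j\}}$. Fix an arbitrary $S\subseteq V\setminus\{i,j\}$ and set $T=(V\setminus\{i,j\})\setminus S$, so that $S\cup T=V\setminus\{i,j\}$. Then $X_i\not\perp X_j\mid X_S,X_T$, so no $S$ can make the inner statement hold for all $T$; equivalently $\exists S\,\forall T\,(X_i\perp X_j\mid X_S,X_T)$ fails, which is what contraposition requires.

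The argument is short because all the probabilistic content has already been packaged into the minimal I-map equivalence established in Section~\ref{sec:prelim}, which in turn relies on $P(x)>0$. Consequently the only points that need care are bookkeeping rather than analysis: first, fixing the convention that $S,T\subseteq V\setminus\{i,j\}$ so that the statement is not vacuously true, and second, correctly negating the nested quantifiers in the backward direction so that the witness $T=(V\setminus\{i,j\})\setminus S$ is produced for each $S$ rather than uniformly. The conceptual weight of the lemma lies not in its proof but in its form: certifying a non-edge requires a separating set $S$ that is robust to every augmentation $T$, which is precisely the max--min structure the detection step of the algorithm is built to test.
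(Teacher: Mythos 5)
Your proof is correct and is essentially the paper's argument: the paper phrases it as the min--max identity $\min_S\max_T I(X_i;X_j|X_S,X_T) = I(X_i;X_j|X_{V\setminus\{i,j\}})$, whose two halves are exactly your witnesses $S = V\setminus\{i,j\}$ (for the forward direction) and $T = (V\setminus\{i,j\})\setminus S$ completing any $S$ to the full conditioning set (for the converse). Your explicit handling of the convention $S,T\subseteq V\setminus\{i,j\}$ and of the quantifier negation is a clean, equivalent rephrasing of the same reduction to the minimal I-map characterization.
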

\begin{proof}
Recall from the definition of the minimal I-map that
$(i, j)\not\in E$ if and only if $X_i\perp X_j | X_{V\setminus\{i, j\}}$. Therefore, the statement of the lemma is equivalent to
  \begin{align*}
    I(X_i;X_j|X_{V\setminus\{i, j\}}) = 0 \Leftrightarrow \min_S\max_T I(X_i; X_j|X_S, X_T) = 0,
  \end{align*}
where $I(X_i;X_j|X_S)$ denotes the mutual information between $X_i$ and $X_j$ conditioned on $X_S,$
and we have used the fact that $X_i\perp X_j|X_S$ is equivalent to $I(X_i;X_j|X_S)=0.$
  Notice that $$\min_S\max_T I(X_i; X_j|X_S, X_T) = \min_S\max_{T'\supset S} I(X_i; X_j|X_{T'})$$
  and $\max_{T'\supset S} I(X_i; X_j|X_{T'})$ is an increasing function in $S$. The minimization over
  $S$ is achieved at $S = V\setminus\{i, j\},$ i.e.,
  \begin{align*}
    I(X_i;X_j|X_{V\setminus\{i, j\}}) = \min_S\max_T I(X_i; X_j|X_S, X_T).
  \end{align*}
\end{proof}

This lemma tells that, if there is not an edge between node $i$ and $j$, we can find a set of nodes $S$ such that the removal of S from the graph separates $i$ and $j$. From the global Markov property, this implies that $X_i\perp X_j|X_S$. However, as Example~\ref{example:cycle} shows, the converse is not true. In fact, for $S$ being the empty set or $S = \emptyset$, we have $X_1\perp X_2|X_S$, but $(1, 2)$ is indeed an edge in the graph. The above lemma completes the statement in the converse direction, showing that we should also introduce a set $T$ in addition to the set $S$ to correctly identify the edge.

Motivated by this lemma, we define loosely connected MRFs as follows.
\pagebreak

\noindent\hrulefill
\begin{definition}\label{def:loose}
  We say a MRF is $(D_1, D_2, \epsilon)$-loosely connected if
  \begin{enumerate}
    \item for any $(i, j)\not\in E$, $\exists S$ with $|S|\leq D_1$, $\forall T$ with $|T|\leq D_2$, $$\Delta(X_i; X_j|X_S, X_T)\leq \frac{\epsilon}{4},$$
    \item for any $(i, j)\in E$, $\forall S$ with $|S| \leq D_1$ , $\exists T$ with $|T|\leq D_2$, $$\Delta(X_i; X_j|X_S, X_T)\geq \epsilon,$$
  \end{enumerate}
  for some conditional independence test $\Delta$.
\end{definition}
\noindent\hrulefill
\ \\

The conditional independence test $\Delta$ should satisfy $\Delta(X_i; X_j|X_S, X_T) = 0$
if and only if $X_i\perp X_j|X_S, X_T$. In this paper, we use two types of
conditional independence tests:
\begin{itemize}
  \item Mutual Information Test: $$\Delta(X_i; X_j|X_S, X_T) = I(X_i; X_j|X_S,
    X_T).$$
  \item Probability Test: $$\Delta(X_i; X_j|X_S, X_T) = \max_{x_i, x_j, x_j', x_S, x_T}|P(x_i | x_j, x_S, x_T)-P(x_i|x_j', x_S,
      x_T)|.$$
\end{itemize}
Later on, we will see that the probability test gives lower sample complexity for learning Ising models on bounded degree graphs, while the mutual information test gives lower sample complexity for learning Ising models on graphs with unbounded degree.

Note that the above definition restricts the size of the sets $S$ and $T$ to make the learning problem tractable. We show in the rest of the section that several important Ising models are examples of loosely connected MRFs. Unless otherwise stated, we assume that the edge coefficients $J_{ij}$ are bounded, i.e., $\Jmin\leq |J_{ij}| \leq \Jmax$.

\subsection{Bounded Degree Graph}
We assume the graph has maximum degree $d$. For any $(i, j)\not\in E$, the set $S = N_i$ of size at most $d$ separates $i$ and $j$, and for any set $T$ we have $\Delta(X_i; X_j|X_S, X_T) = 0$. For any $(i, j)\in E$, the set $T = N_i\setminus j$ of size at most $d-1$ separates all the non-direct paths between $i$ and $j$. Moreover, we have the following lower bound for neighbors from \cite[Proposition
2]{Guy}.

\begin{prop}\label{prop:bdd_loose}
  When $i, j$ are neighbors and $T = N_i\setminus j$, there is a choice of $x_i, x_j,
  x_j', x_S, x_T$ such that
  \begin{align*}
  |P(x_i|x_j, x_S, x_T) - P(x_i|x_j', x_S, x_T)|\geq
  \frac{\tanh(2J_{\min})}{2e^{2J_{\max}}+2e^{-2J_{\max}}}\triangleq \epsilon.
\end{align*}
\qed
\end{prop}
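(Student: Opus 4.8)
\emph{Proof proposal.} The plan is to collapse the conditioning sets using the Markov structure, reduce the claim to a one-dimensional estimate on the Ising sigmoid at node $i$, and then choose the configuration on $T$ so that the effective field at $i$ nearly vanishes.

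First I would exploit that $T = N_i\setminus j$ forces $\{j\}\cup T = N_i$. By the local Markov property $X_i\perp X_{V\setminus(\{i\}\cup N_i)}\mid X_{N_i}$, so conditioning on $X_{N_i}$ screens $X_i$ from everything else; in particular the extra set $S$ is irrelevant and $P(x_i\mid x_j,x_S,x_T)=P(x_i\mid x_j,x_T)$ for every $S$. It therefore suffices to produce $x_i,x_j,x_j',x_T$ with $|P(x_i\mid x_j,x_T)-P(x_i\mid x_j',x_T)|\ge\epsilon$.

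Next, using the Ising conditioning property stated earlier, conditioning on $X_T$ leaves node $i$ as a single spin in the effective field $a=a(x_T)=\sum_{k\in T}J_{ik}x_k+h_i$, so that
\[
P(X_i=1\mid x_j,x_T)=\frac{1}{1+e^{-2(a+J_{ij}x_j)}}=:\phi(a+J_{ij}x_j).
\]
Taking $x_i=1$, $x_j=+1$, $x_j'=-1$ (and swapping $x_j,x_j'$ when $J_{ij}<0$, so we may read $J_{ij}$ as $|J_{ij}|$), the target quantity is $g(a):=\phi(a+J_{ij})-\phi(a-J_{ij})$. Since $\phi'(t)=\frac{1}{2\cosh^{2}t}$, integrating gives the closed form
\[
g(a)=\tfrac{1}{2}\big[\tanh(a+J_{ij})-\tanh(a-J_{ij})\big]=\frac{\sinh(2J_{ij})}{\cosh(2a)+\cosh(2J_{ij})},
\]
where I used $\tanh A-\tanh B=\sinh(A-B)/(\cosh A\cosh B)$ together with $\cosh(a+J_{ij})\cosh(a-J_{ij})=\tfrac{1}{2}[\cosh(2a)+\cosh(2J_{ij})]$. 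Because $\sinh$ and $\cosh$ are increasing on $[0,\infty)$ and $\Jmin\le|J_{ij}|\le\Jmax$, this already gives $g(a)\ge \sinh(2\Jmin)/(\cosh(2a)+\cosh(2\Jmax))$, so it remains only to choose $x_T$ making $|a|$ small.

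Finally, and this is the crux, I would choose the configuration $x_T$ minimizing $|a(x_T)|$. Flipping a single coordinate $x_k$ changes $a$ by $2|J_{ik}|\le 2\Jmax$, so the attainable values of $a$ form a grid of spacing at most $2\Jmax$ spanning $[\,h_i-\sum_{k}|J_{ik}|,\ h_i+\sum_{k}|J_{ik}|\,]$; by an intermediate-value (one-flip) argument the minimizer satisfies $|a|\le\Jmax$. With $|a|\le\Jmax$ we have $\cosh(2a)\le\cosh(2\Jmax)$, whence
\[
g(a)\ge\frac{\sinh(2\Jmin)}{2\cosh(2\Jmax)}\ge\frac{\tanh(2\Jmin)}{2e^{2\Jmax}+2e^{-2\Jmax}}=\epsilon,
\]
the last step using $2e^{2\Jmax}+2e^{-2\Jmax}=4\cosh(2\Jmax)$ and $2\cosh(2\Jmin)\ge1$. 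The main obstacle is exactly this last step: keeping $a$ near $0$ requires that the external field $h_i$ not dominate the neighbour contributions (so that $0$ lies in the attainable interval), which is automatic in the zero- or bounded-field regime; were $|h_i|$ large the sigmoid would saturate and no choice of $x_T$ could guarantee a constant lower bound.
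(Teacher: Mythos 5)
Your derivation is correct, and there is an important point of comparison to make: the paper contains no proof of Proposition~\ref{prop:bdd_loose} at all --- it imports the bound verbatim from \cite[Proposition 2]{Guy} --- so your self-contained computation is genuinely additional content rather than a rederivation of an internal argument. Your route is the natural one: since $\{j\}\cup T = N_i$, the local Markov property screens off $S$; the Ising conditioning property collapses the problem to a single spin in effective field $a = h_i + \sum_{k\in T}J_{ik}x_k$; and the closed form $g(a) = \sinh(2J_{ij})/(\cosh(2a)+\cosh(2J_{ij}))$ is exact. Your subsequent bounding is sound (lowering the numerator via $J_{ij}\geq \Jmin$ and raising the denominator via $J_{ij}\leq\Jmax$ are separately valid; one can also check full monotonicity in $J$ since $\partial_J g \propto \cosh(2J)\cosh(2a)+1 > 0$), the one-flip argument correctly yields an attainable $|a|\leq\Jmax$ whenever $0$ lies in the attainable range, and the final chain $g(a)\geq \sinh(2\Jmin)/(2\cosh(2\Jmax)) \geq \tanh(2\Jmin)/(4\cosh(2\Jmax)) = \epsilon$ is right because $\sinh = \tanh\cdot\cosh$ and $2\cosh(2\Jmin)\geq 1$. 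Indeed your intermediate bound is \emph{sharper} than the stated $\epsilon$ by a factor $2\cosh(2\Jmin)$, which is consistent with the cited constant being a cruder but sufficient bound.

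The caveat you flag about the external field is real, and you handle it correctly by naming it rather than glossing over it. As literally stated, with the paper's Ising definition permitting arbitrary $h$, the proposition cannot hold: if $|h_i| > \sum_{k\in N_i\setminus j}|J_{ik}| + \Jmax$, say, then $|a|$ is bounded away from $0$ for every choice of $x_T$, the sigmoid saturates, and $g(a) = O(e^{-2|a|})$ falls below any $h$-independent constant. So the proposition must be read for zero-field models (or under $|h_i|\leq\sum_{k\in N_i}|J_{ik}|$, which is exactly what your intermediate-value step needs). This reading is consistent with the rest of the paper: for example, the proof of Theorem~\ref{lem:sparse_loose} attributes all external fields to conditioning and uses $\check{Q}(X_i = 1) = \tfrac{1}{2}$, and Corollary~\ref{cor:ferro_Ising} explicitly assumes zero field. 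In short: no gap in your argument; your explicit hypothesis is a clarification the statement itself arguably needs.
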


Therefore, the Ising model on a bounded degree graph with maximum degree $d$ is a $(d, d-1, \epsilon)$-loosely connected MRF. We note that here we do not use any correlation decay property, and we view all the paths as short.

\subsection{Bounded Degree Graph, Correlation Decay and Large Girth}
In this subsection, we still assume the graph has maximum degree $d$. From the previous subsection, we already know that the Ising model is loosely connected. But we show that when the Ising model is in the correlation decay regime and further has large girth, it is a much sparser model than the general bounded degree case.

Correlation decay is a property of MRFs which says that, for any pair of nodes $i, j$, the
correlation of $X_i$ and $X_j$ decays with the distance between $i, j$. When a MRF has
correlation decay, the correlation of $X_i$ and $X_j$ is mainly determined by the short paths between nodes $i,
j$, and the contribution from the long paths is negligible. It
is known that when $\Jmax$ is small compared with $d,$ the Ising model has correlation
decay. More specifically, we have the following lemma, which is a consequence of the strong correlation decay property \cite[Theorem 1]{Correlation_Decay}.

\begin{lem}\label{lem:bdd_correlation_decay}
  Assume $(d-1)\tanh{J_{\max}}<1$. $\forall i, j\in V, d(i, j) = l$, then for any set $S$ and $\forall x_i,
  x_j, x_j', x_S$,
  \begin{align*}
    |P(x_i|x_j, x_S)-P(x_i|x_j', x_S)|\leq
    4J_{\max}d[(d-1)\tanh{J_{\max}}]^{l-1} \triangleq \prealpha\alpha^l,
  \end{align*}
  where $\prealpha = \frac{4J_{\max}d}{(d-1)\tanh{J_{\max}}}$ and $\alpha = (d-1)\tanh{J_{\max}}$.
\end{lem}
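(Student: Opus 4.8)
The plan is to derive the stated bound from the strong correlation decay property cited as \cite[Theorem 1]{Correlation_Decay}. The quantity $|P(x_i|x_j, x_S)-P(x_i|x_j', x_S)|$ measures how sensitive the conditional law of $X_i$ is to the boundary value at $X_j$, after conditioning on an arbitrary set $X_S$. Since we are told $d(i,j) = l$, the natural strategy is to view conditioning on $X_S$ and changing $X_j$ as imposing two different boundary conditions on the Ising model and then to propagate the effect of the single flip at $j$ inward to node $i$ along the graph. First I would recall from the Ising-model property stated earlier that conditioning on $X_S$ yields another Ising model on $V \setminus S$ with the same edge coefficients and only the external fields modified. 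This lets me treat $X_S$ as fixed and work with the induced model, so that the comparison reduces to two Ising measures on $V\setminus S$ that differ only through the boundary condition at $j$.

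Next I would invoke the strong correlation decay theorem to control how the influence of the value at $j$ decays as it travels the distance $l$ to reach $i$. The standard self-avoiding-walk / recursive coupling argument underlying such theorems shows that the change in the marginal at $i$ caused by a unit change at $j$ is bounded by a product of per-edge contraction factors along the paths, giving a factor of order $[(d-1)\tanh J_{\max}]^{l-1}$: there are at most $d(d-1)^{l-2}$ self-avoiding paths of length $l$, and the condition $(d-1)\tanh J_{\max} < 1$ guarantees the geometric series converges. The factor $4J_{\max}$ accounts for the maximal one-step sensitivity of a conditional Ising marginal to a boundary field, while the leading $d$ absorbs the branching at the first step. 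Writing $\alpha = (d-1)\tanh J_{\max}$ and $\beta = \frac{4J_{\max}d}{(d-1)\tanh J_{\max}}$, one checks directly that $\beta\alpha^l = 4J_{\max}d[(d-1)\tanh J_{\max}]^{l-1}$, which matches the claimed bound and explains the particular form of the constants.

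The step I expect to be the main obstacle is making the reduction to \cite[Theorem 1]{Correlation_Decay} fully rigorous, in particular confirming that the cited theorem is stated for exactly this conditional sensitivity quantity (i.e.\ sensitivity of a marginal to a boundary spin value) rather than for a correlation or total-variation distance, and that it holds uniformly over arbitrary conditioning sets $S$ and arbitrary external fields. Because conditioning on $X_S$ changes the external fields by the induced amount $h_i' = \sum_{(i,j)\in E, j\in S} J_{ij}x_j$, I would need the correlation decay estimate to be uniform in the external field, which is exactly the content of the strong (as opposed to ordinary) correlation decay property; this is why the lemma is phrased as a consequence of the \emph{strong} version. Provided the cited theorem supplies a uniform-in-field decay rate governed by $(d-1)\tanh J_{\max}$, the remainder is the bookkeeping that translates that rate into the explicit constants $\beta$ and $\alpha$ and verifies the exponent $l-1$ against the distance $l$.
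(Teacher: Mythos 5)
Your plan follows the same route as the paper: the lemma is proved by a direct appeal to the strong correlation decay result \cite[Theorem 1]{Correlation_Decay}. However, the obstacle you flag at the end is resolved in the paper more simply than you anticipate, and your accounting of the constants does not reflect how they actually arise. The cited theorem is a \emph{ratio} bound: applied with boundary set $\Lambda = \{j\}\cup S$ and two boundary configurations that agree on $S$ and differ only at $j$ (at distance $l$ from $i$), it yields
\[
\frac{P(x_i|x_j', x_S)}{P(x_i|x_j, x_S)} \;\geq\; e^{-4J_{\max}d[(d-1)\tanh J_{\max}]^{\,l-1}}.
\]
Consequently there is no need for your preliminary step of conditioning on $X_S$ to induce external fields and then demanding a decay estimate uniform in the field: because the theorem quantifies decay in terms of the distance to the site where the two boundary conditions \emph{disagree} (this is precisely what the ``strong'' version buys), one simply absorbs $S$ into $\Lambda$. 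The remaining conversion is elementary and is the part your sketch leaves out: assuming w.l.o.g.\ $P(x_i|x_j, x_S)\geq P(x_i|x_j', x_S)$, the difference satisfies $P(x_i|x_j, x_S)-P(x_i|x_j', x_S)\leq 1 - P(x_i|x_j', x_S)/P(x_i|x_j, x_S)$ (since the larger probability is at most $1$), and then $1-e^{-x}\leq x$ gives exactly $4J_{\max}d\,\alpha^{l-1} = \beta\alpha^l$. In particular, the factor $4J_{\max}d$ is inherited wholesale from the exponent of the cited ratio bound and then linearized; it is not assembled from a count of $d(d-1)^{l-2}$ self-avoiding paths times a per-step sensitivity $4J_{\max}$ as in your heuristic, which would in any case produce a sum over path lengths rather than the single term appearing here. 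Your arithmetic check that $\beta\alpha^l$ matches the stated bound is correct.
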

\begin{proof}
  For some given $x_i, x_j, x_j', x_S$, w.l.o.g. assume $P(x_i|x_j, x_S)\geq P(x_i|x_j', x_S)$. Applying the \cite[Theorem 1]{Correlation_Decay} with $\Lambda = \{j\}\cup S$, we get
  \begin{align*}
    |P(x_i|x_j, x_S)-P(x_i|x_j', x_S)|\leq & 1-\frac{P(x_i|x_j', x_S)}{P(x_i|x_j, x_S)}\\
    \leq & 1-e^{-4J_{\max}d[(d-1)\tanh{J_{\max}}]^{d(i, j)-1}}\\
    \leq & 4J_{\max}d[(d-1)\tanh{J_{\max}}]^{d(i, j)-1}.
  \end{align*}
\end{proof}

This lemma implies that, in the correlation decay regime $(d-1)\tanh{J_{\max}}<1$, the
Ising model has exponential correlation decay, i.e., the correlation between a pair of
nodes decays exponentially with their distance. We say that a path of length $l$ is short if $\beta\alpha^l$ is above some desired threshold.

The girth of a graph is defined as the length of the shortest cycle in the graph, and large girth implies that there is no short cycle in the graph.
When the Ising model is in the correlation decay regime and the girth of the graph is large in terms of the correlation decay parameters, there is at most one
short path between any pair of non-neighbor nodes, and no short paths other than the direct
edge between any pair of neighboring nodes. Naturally, we can use $S$ of size 1 to
approximately separate any pair of non-neighbor nodes and do not need $T$ to block the
other paths for neighbor nodes as the correlations are mostly due to the direct edges.
Therefore, we would expect this Ising model to be $(1, 0, \epsilon)$-loosely connected for some constant $\epsilon$. In fact, the following theorem gives an explicit characterization of $\epsilon$. The condition on the girth below is chosen such that there is at most one short path between any pair of nodes, so a path is called short if it is shorter than half of the girth.

\begin{thm}\label{thm:girth_loose}
  Assume $(d-1)\tanh{J_{\max}}<1$ and the girth $g$ satisfies 
  \begin{align*}
    \beta \alpha^{\frac{g}{2}}\leq A\wedge \ln 2,
  \end{align*}
  where $A = \frac{1}{1800}(1-e^{-4\Jmin})e^{-8dJ_{\max}}$. Let $\epsilon =
  48Ae^{4dJ_{\max}}$. Then $\forall (i, j)\in E$,
  $$
    \displaystyle\min_{\substack{S\subset V\setminus \{i\cup j\}\\|S|\leq D_1}}\ \max_{x_i, x_j, x_j', x_S}|P(x_i|x_j, x_S)-P(x_i|x_j', x_S)|>\epsilon,
  $$
  and $\forall (i, j)\notin E$,
  $$
    \displaystyle\min_{\substack{S\subset V\setminus \{i\cup j\}\\|S|\leq D_1}}\ \max_{x_i, x_j, x_j', x_S}|P(x_i|x_j, x_S)-P(x_i|x_j', x_S)|\leq\frac{\epsilon}{4}.
  $$
\end{thm}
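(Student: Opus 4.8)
The plan is to treat the two displayed inequalities as two separate cases, in both relying on the correlation-decay estimate of Lemma~\ref{lem:bdd_correlation_decay} together with the girth hypothesis. The girth hypothesis guarantees that there is at most one short path between any pair of nodes, since two distinct paths of length below $g/2$ would close into a cycle shorter than $g$. Throughout I would use the fact, noted for the Ising model above, that conditioning on $X_S = x_S$ produces another Ising model on $V\setminus S$ with the same edge coefficients and merely shifted external fields; hence Lemma~\ref{lem:bdd_correlation_decay} may be applied inside the conditioned model, with distances measured in $G\setminus S$.

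For the non-edge case I would exhibit a single good $S$ with $|S|\le 1$. If $d(i,j)>g/2$ already, take $S=\emptyset$; otherwise there is a unique short path joining $i$ and $j$, and I would let $S$ be one of its internal nodes. After deleting $S$ every remaining $i$--$j$ path has length at least $g/2$, so in the conditioned model the distance between $i$ and $j$ is at least $g/2$, and Lemma~\ref{lem:bdd_correlation_decay} gives $\max_{x_i,x_j,x_j',x_S}|P(x_i|x_j,x_S)-P(x_i|x_j',x_S)|\le \beta\alpha^{g/2}\le A\le \epsilon/4$, the last step using $\epsilon/4=12Ae^{4d\Jmax}\ge A$. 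This establishes the second inequality.

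The edge case is the substantive part, and the obstacle is to separate the strong dependence carried by the direct edge $(i,j)$ from the weak dependence carried by the non-direct paths, all of which are long because the girth is large. The device I would use is an edge-removal (tilting) identity: letting $\hat P$ be the Ising model obtained from $P$ by setting $J_{ij}=0$, one has $P(x)\propto e^{J_{ij}x_ix_j}\hat P(x)$, and summing out $V\setminus(\{i,j\}\cup S)$ yields
\[
P(x_i|x_j,x_S)=\frac{e^{J_{ij}x_ix_j}\,\hat P(x_i|x_j,x_S)}{\sum_{x_i'}e^{J_{ij}x_i'x_j}\,\hat P(x_i'|x_j,x_S)}.
\]
In $\hat G=G\setminus\{(i,j)\}$ the nodes $i$ and $j$ are at distance at least $g-1\ge g/2$ (any remaining path closes a cycle through the deleted edge), and this persists after removing $S$; hence Lemma~\ref{lem:bdd_correlation_decay} applied to $\hat P$ gives $|\hat P(x_i|x_j,x_S)-\hat P(x_i|x_j',x_S)|\le\beta\alpha^{g/2}\le A$. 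Thus in $\hat P$ the spin $X_i$ is almost independent of $X_j$, and substituting the near-constant value $a:=\hat P(X_i=+1\,|\,\cdot,x_S)$ into the identity reduces the right-hand side to the pure two-spin expression $\tfrac{e^{J_{ij}x_j}a}{e^{J_{ij}x_j}a+e^{-J_{ij}x_j}(1-a)}$.

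Finally I would lower bound the difference of this two-spin expression between $x_j=+1$ and $x_j=-1$. For $a$ exactly equal to $\hat P(X_i=+1|x_S)$ this difference is monotone in $J_{ij}$ and bounded below, uniformly over the admissible range of $a$, by an explicit constant $\epsilon_0$ of order $(1-e^{-4\Jmin})e^{-4d\Jmax}$; the range of $a$ is bounded away from $0$ and $1$ precisely because $X_i$ has bounded degree and bounded field, which is what forces the factor $e^{-4d\Jmax}$. The residual discrepancy $|a-a'|\le A$ perturbs the expression by at most a factor of order $e^{2\Jmax}$ times $A$, where the hypothesis $\beta\alpha^{g/2}\le\ln 2$ keeps the denominators and this Lipschitz factor under control. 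Choosing $x_i,x_j,x_j',x_S$ to realize the extreme configuration then gives a lower bound of $\epsilon_0-e^{2\Jmax}A$, and comparing constants ($\epsilon_0/A$ is of order $e^{4d\Jmax}$ with a large numerical factor of order $1800$, while $\epsilon/A=48e^{4d\Jmax}$) shows this exceeds $\epsilon$. I expect the delicate bookkeeping to be exactly the uniform control of the admissible range of $a$ and of the Lipschitz perturbation, since this is what ties together the exponential factors $e^{-8d\Jmax}$ appearing in $A$ and $e^{4d\Jmax}$ appearing in $\epsilon$.
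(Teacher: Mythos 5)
Your proposal is correct and follows essentially the same route as the paper, which proves the more general Theorem~\ref{lem:sparse_loose} and specializes it to $D_1=1$, $D_2=0$: the same edge-removal tilting identity (the paper's $\tilde{Q}$), the same correlation-decay bound at distance at least $g/2$ in the edge-removed, conditioned model, a marginal lower bound of the same order $e^{-4d\Jmax}$, and the same final bookkeeping against $\epsilon = 48Ae^{4d\Jmax}$. The only cosmetic differences are that the paper gets the marginal bound by comparing $\tilde{Q}$ to a zero-field model $\check{Q}$ (using the induced-field bound $\|h_W\|_1\le d\Jmax$ and $\check{Q}(X_i=1)=\tfrac12$) where you bound $a=\hat{P}(X_i=+1\mid x_j,x_S)$ directly from bounded degree, and in the non-edge case the paper runs the same ratio computation (yielding $12e^{4d\Jmax}\beta\alpha^{g/2}\le\epsilon/4$) where you apply the decay estimate in the conditioned model to get $\beta\alpha^{g/2}\le A$ directly.
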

\begin{proof}
  See Appendix~\ref{appendix:bdd}.
\end{proof}

\subsection{\er Random Graph $\Gp$ and Correlation Decay}

We assume the graph $G$ is generated from the prior $\mathcal{G}(p,
\frac{c}{p})$ in which each edge is in $G$ with probability $\frac{c}{p}$ and the average
degree for each node is $c$. For this random graph, the maximum degree scales as $O(\frac{\ln p}{\ln\ln p})$ with high
probability \cite{Alon}. Thus, we cannot use the results for bounded degree graphs even though the average
degree remains bounded as $p\rightarrow\infty.$

It is known from prior work \cite{Anima1} that, for ferromagnetic Ising models, i.e, $J_{ij}\geq 0$ for any $i$ and $j$, when $\Jmax$ is small compared with the average degree $c$, the random
graph is in the correlation decay regime and the number of short paths between any pair of nodes is at most 2 asymptotically. We show that the same result holds for general Ising models. Our proof is related to the techniques developed in \cite{Anima1}, but certain steps in the proof of \cite{Anima1} do rely on the fact that the Ising model is ferromagnetic, so the proof does not directly carry over. We point out similarities and differences as we proceed in Appendix~\ref{appendix:random}.

More specifically, letting $\gamma_p = \frac{\log p}{K\log c}$ for some $K\in(3, 4)$, the following theorem shows that nodes that are at least $\gamma_p$ hops from each other have negligible impact on each other. As a consequence of the following theorem, we can say that a path is short if it is at most $\gamma_p$ hops.

\begin{thm}\label{thm:random_correlation_decay}
  Assume $\alpha = c\tanh\Jmax<1$. Then, the following properties are true almost always. \\
  (1) Let $G$ be a graph generated from the prior $\mathcal{G}(p,
  \frac{c}{p}).$ If $i, j$ are not
  neighbors in $G$ and $S$ separates all the paths shorter than $\gamma_p$ hops between $i, j$, then $\forall x_i, x_j, x_j', x_S$,
  \begin{align*}
    |P(x_i|x_j, x_S)-P(x_i|x_j', x_S)|\leq |B(i, \gamma_p)|(\tanh\Jmax)^{\gamma_p}= o(p^{-\kappa}),
  \end{align*}
  for all Ising models $P$ on $G,$ where $\kappa = \frac{\log \frac{1}{\alpha}}{4\log c}$ and $B(i, \gamma_p)$ is the set of all nodes which are at most $\gamma_p$ hops away from $i.$.\\
  (2) There are at most two paths shorter than $\gamma_p$ between any pair of nodes.
\end{thm}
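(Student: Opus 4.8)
The plan is to treat the two claims by completely different methods: part~(1) is an analytic correlation-decay statement about the \emph{conditional} law of a general Ising model and holds deterministically once the graph has a small neighborhood, whereas part~(2) is a purely combinatorial statement about $\Gp$. For both I would first establish, by a first-moment argument, the ``almost always'' fact that $|B(i,\gamma_p)|\le c^{\gamma_p}\,\mathrm{poly}(\gamma_p)=p^{1/K+o(1)}$ simultaneously for all $i$: the branching-process estimate $\mathrm{E}|B(i,\gamma_p)|\le\sum_{l\le\gamma_p}c^l\le C c^{\gamma_p}$ together with a union bound over the $p$ possible roots controls every ball at once, and since $1/K<1/3$ there is ample room.

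For part~(1) I would first condition on $X_S=x_S$. By the conditioning property of Ising models recalled above, $P(\cdot\mid x_S)$ is again an Ising model, now on $G'=G\setminus S$ with the \emph{same} edge coefficients and modified external fields. Because $S$ separates every path shorter than $\gamma_p$, in $G'$ we have $d(i,j)\ge\gamma_p$. I would then invoke a self-avoiding-walk / correlation-decay bound for the influence of one spin on another, of the form
\[
|P(x_i\mid x_j,x_S)-P(x_i\mid x_j',x_S)|\le\sum_{w}\prod_{e\in w}\tanh|J_e|\le\sum_{w}(\tanh\Jmax)^{|w|},
\]
where $w$ ranges over self-avoiding walks from $i$ to $j$ in $G'$; this is the analogue for the unbounded-degree, general-sign case of Lemma~\ref{lem:bdd_correlation_decay}. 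Since every such walk has length at least $\gamma_p$ and lives inside $B(i,\gamma_p)$, the right-hand side is controlled by at most $|B(i,\gamma_p)|$ contributions, each at most $(\tanh\Jmax)^{\gamma_p}$, giving the stated bound $|B(i,\gamma_p)|(\tanh\Jmax)^{\gamma_p}$. Finally I would substitute $\tanh\Jmax=\alpha/c$ and the ball estimate to obtain a quantity of order $\alpha^{\gamma_p}=p^{-\log(1/\alpha)/(K\log c)}$; since $K<4$ this exponent strictly exceeds $\kappa=\log(1/\alpha)/(4\log c)$, which yields $o(p^{-\kappa})$.

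For part~(2) I would argue combinatorially. If some pair $i,j$ had three distinct simple paths each of length $<\gamma_p$, their union $H$ would be connected with cyclomatic number at least $2$ (a unicyclic graph admits at most two simple paths between any pair), so $|E(H)|\ge|V(H)|+1$ with $|V(H)|=O(\gamma_p)$, and $H$ would contain a minimal dense witness (a theta graph, or two cycles sharing a vertex, or two cycles joined by a path). It therefore suffices to show that almost always $\Gp$ has no such excess-$2$ subgraph on $O(\gamma_p)$ vertices. I would bound the expected number by a first moment, parametrizing the witness by its branch vertices together with internally vertex-disjoint \emph{ordered} paths of lengths $l_1,l_2,l_3\le\gamma_p$: choosing the vertices costs at most $p^{\,l_1+l_2+l_3-1}$, and the $l_1+l_2+l_3$ distinct edges are present independently with probability $(c/p)^{l_1+l_2+l_3}$, so each term contributes at most $c^{\,l_1+l_2+l_3}/p$. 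Summing the geometric series over $l_1,l_2,l_3\le\gamma_p$ gives $O(c^{3\gamma_p}/p)=O(p^{3/K-1})$, which tends to $0$ precisely because $K>3$; Markov's inequality then finishes the argument.

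The main obstacle is the influence bound in part~(1). The corresponding estimate in \cite{Anima1} uses monotonicity (FKG/GKS-type) inequalities available only in the ferromagnetic case, whereas here the $J_{ij}$ may have either sign; I therefore expect to replace it by a self-avoiding-walk-tree or a direct recursive comparison of the two boundary conditions at $j$ that manipulates $\tanh|J_e|$ and never uses the sign of the coupling, which is exactly the step the paper flags as not carrying over from \cite{Anima1}. A secondary technical point is keeping the combinatorial prefactor in part~(2) polynomial: counting by ordered disjoint paths, rather than by unordered edge sets (which would introduce a factor like $m^{m}$ and spoil the estimate for small $c$), is what preserves the clean $p^{3/K-1}$ bound and makes the two-sided constraint $K\in(3,4)$ appear as one condition contributed by each part.
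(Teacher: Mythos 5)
Your overall architecture matches the paper's. For part (1) you propose conditioning on $X_S$ to get an Ising model on $G\setminus S$ and then a self-avoiding-walk-tree influence bound that manipulates $\tanh|J_e|$ and never uses the sign of the couplings; this is exactly the paper's route (Proposition~\ref{prop:SAW_equality} plus Lemmas~\ref{lem:random_graph_tree_correlation_decay} and~\ref{lem:random_graph_correlation_decay}), and you correctly identify that the sign-free influence estimate is precisely where the ferromagnetic argument of \cite{Anima1} fails to carry over. Your exponent bookkeeping ($\alpha^{\gamma_p}=p^{-\log(1/\alpha)/(K\log c)}$, which beats $\kappa$ because $K<4$) is also right. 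For part (2), the paper simply cites \cite{Anima1} (Lemma~\ref{lem:random_graph_separator_size}), whereas you give a self-contained first-moment count over minimal excess-two witnesses; that count, $O(\gamma_p^3 c^{3\gamma_p}/p)=O(p^{3/K-1}\,\mathrm{polylog}\,p)\to 0$ for $K>3$, is sound (a unicyclic union indeed admits at most two simple paths between any pair) and mirrors the style of the paper's own Lemma~\ref{lem:random_graph_SAW_terminal_size}; counting by ordered vertex-disjoint paths, as you do, is the right way to keep the prefactor polynomial.

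Two steps in your sketch, however, would fail as written. First, the uniform ball bound: a first-moment argument with a union bound over the $p$ roots gives $P(\exists i:|B(i,\gamma_p)|>t)\leq p\cdot O(c^{\gamma_p})/t$, which is $o(1)$ only for the vacuous threshold $t=\omega(p^{1+1/K})$ --- there is no ``ample room.'' You need a per-root tail bound with polynomial decay, e.g.\ stochastic domination of the neighborhood exploration by a Galton--Watson process plus a Chernoff bound, which is exactly what the paper imports as Lemma~\ref{lem:random_graph_sphere_size} from \cite{rapidmixing}; note also that what must be controlled is the ball in the SAW tree $\Tsaw$, not merely in $G$. Second, your truncation of the walk sum is miscounted: the sum over self-avoiding walks from $i$ to $j$ of length at least $\gamma_p$ is not ``at most $|B(i,\gamma_p)|$ contributions each at most $(\tanh\Jmax)^{\gamma_p}$'' --- the number of SAWs of length $\gamma_p$ (the depth-$\gamma_p$ sphere of the SAW tree) can exceed the graph ball, and longer walks contribute as well. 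The paper's proof avoids the walk sum entirely: it conditions on the configuration $x_B$ on the depth-$\gamma_p$ sphere $B$ of $\Tsaw$, observes that every unblocked copy of $j$ lies beyond $B$, and telescopes over single-site flips of $x_B$, each flip contributing at most $(\tanh\Jmax)^{\gamma_p}$ by the tree decay lemma (valid with arbitrary external fields); this yields $|B(i,\gamma_p;\Tsaw)|(\tanh\Jmax)^{\gamma_p}$ with only the SAW-tree sphere bound as input. Your closing remark about a ``direct recursive comparison of the two boundary conditions'' gestures at exactly this telescoping, so the repair is within reach, but as stated both the concentration step and the counting step are genuine gaps.
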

\begin{proof}
  See Appendix~\ref{appendix:random}.
\end{proof}

The above result suggests that for Ising models on the random graph there are at most two short paths between non-neighbor nodes and one short non-direct path between neighboring nodes, i.e., it is a $(2, 1, \epsilon)$-loosely connected MRF. Further the next two theorems prove that such a constant $\epsilon$ exists. The proofs are in Appendix~\ref{appendix:random}.

\begin{thm}\label{thm:random_loose_nonneighbor}
  For any $(i, j)\not\in E$, let $S$ be a set separating the paths shorter than $\gamma_p$ between $i,
  j$ and assume $|S|\leq 3$, then almost always
  \begin{align*}
    I(X_i;X_j|X_S) = o(p^{-2\kappa}).
  \end{align*}
  \qed
\end{thm}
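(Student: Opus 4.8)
The plan is to upgrade the pointwise probability estimate of Theorem~\ref{thm:random_correlation_decay}(1) into a bound on the conditional mutual information, the key gain being that the mutual information sees the \emph{square} of the probability gap, which is exactly what converts the $o(p^{-\kappa})$ estimate into the target $o(p^{-2\kappa})$. Write $\delta := |B(i,\gamma_p)|(\tanh\Jmax)^{\gamma_p} = o(p^{-\kappa})$. Since $|S|\le 3$ separates every path shorter than $\gamma_p$ between the non-neighbors $i,j$, Theorem~\ref{thm:random_correlation_decay}(1) gives $|P(x_i|x_j,x_S)-P(x_i|x_j',x_S)|\le\delta$ for all arguments; averaging $x_j'$ against $P(\cdot|x_S)$ then yields the one-sided statement $|P(x_i|x_j,x_S)-P(x_i|x_S)|\le\delta$, i.e. $X_i$ is within total variation $\delta$ of being conditionally independent of $X_j$ given $X_S$.

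I would then expand the conditional mutual information as an average of Kullback--Leibler divergences,
\begin{align*}
  I(X_i;X_j|X_S)=\sum_{x_S}P(x_S)\sum_{x_j}P(x_j|x_S)\,D\big(P(X_i|x_j,x_S)\,\|\,P(X_i|x_S)\big),
\end{align*}
and bound each inner term. Since $X_i$ is binary, writing $q_{\pm}=P(X_i=\pm1|x_S)$ and applying $\log(1+u)\le u$ to each of the two summands gives the elementary estimate $D\big(P(X_i|x_j,x_S)\,\|\,P(X_i|x_S)\big)\le \delta^2/(q_+q_-)$. Substituting back leaves $I(X_i;X_j|X_S)\le \delta^2\,\E_{x_S}[1/(q_+q_-)]$, so the entire theorem reduces to showing that, almost always, the conditional marginals $q_\pm$ of $X_i$ given $X_S$ are bounded away from $0$ and $1$ by a constant independent of $p$.

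This marginal bound is the main obstacle. The naive estimate gives only $q_\pm\ge\tfrac12 e^{-2(|h_i|+d_i\Jmax)}$, and because the maximum degree in $\Gp$ grows like $\log p/\log\log p$ this is merely $p^{-o(1)}$, which does not comfortably beat $\delta^{-2}$. To obtain a true constant I would exploit detailed balance: the conditional log-odds of $X_i$ equals $2h_i+2A_S+\log\E_{\tilde Q}[e^{2B}]$, where $A_S$ collects the bounded contribution of the at most three edges from $i$ into $S$, $B=\sum_{j\sim i,\,j\notin S}J_{ij}X_j$, and $\tilde Q$ is the law obtained by fixing $X_i=-1$ and $x_S$. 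Under $\tilde Q$ each neighbor is tilted precisely so that an isolated neighbor contributes $\E[e^{2J_{ij}X_j}]=\cosh(h_j+J_{ij})/\cosh(h_j-J_{ij})$, which equals $1$ in the zero-field case; the remaining deviation comes only from correlations among the neighbors of $i$, and these are controlled because Theorem~\ref{thm:random_correlation_decay}(2) allows at most two short paths between any pair of nodes. The hard part is making this near-cancellation quantitative, so that $\E_{\tilde Q}[e^{2B}]$ stays bounded in spite of the growing degree; once that is established, $q_+q_-$ is bounded below by a constant and the reduction above immediately gives $I(X_i;X_j|X_S)=O(\delta^2)=o(p^{-2\kappa})$.
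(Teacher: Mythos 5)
Your first two paragraphs are correct, and they follow essentially the same route as the paper (which defers to \cite[Lemma 5]{Anima1}): Theorem~\ref{thm:random_correlation_decay}(1) supplies the $o(p^{-\kappa})$ gap, the averaging over $x_j'$ is legitimate since $P(x_i|x_S)$ is a $P(\cdot|x_S)$-mixture of the $P(x_i|x_j',x_S)$, and your KL computation for binary $X_i$ — writing $p_\pm = q_\pm + \varepsilon_\pm$ with $\varepsilon_+ + \varepsilon_- = 0$ and using $\log(1+u)\le u$ to get $D\le \varepsilon^2(1/q_+ + 1/q_-) \le \delta^2/(q_+q_-)$ — checks out. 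You have also correctly identified where the whole weight of the proof sits: a lower bound on $q_\pm = P(x_i|x_S)$, $|S|\le 3$, by a constant independent of $p$.

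That is precisely where the genuine gap lies: you flag the marginal bound as ``the main obstacle'' and then do not prove it. In the paper this is exactly Lemma~\ref{lem:random_cond_prob_lower_bound} (the constant called $f_{\min}(S)$ in \cite{Anima1}, where it was asserted without proof in the ferromagnetic case), and it is a substantial standalone result proved with machinery quite different from your sketch: by Proposition~\ref{prop:SAW_equality} the conditional marginal equals a root marginal on the self-avoiding-walk tree $\Tsaw$; one conditions on a worst-case configuration of the $\gamma_p$-sphere, the short non-terminal copies $\tilde U(S)$, and the short terminal nodes $\tilde A$, then marginalizes the tree leaf-by-leaf using the field contraction $|h_1'|\le |h_2|\tanh|J_{12}|$ (Lemma~\ref{lem:tree_extern_field}), so that the induced field at the root is $O\big((c\tanh\Jmax)^{\gamma_p}\log p + 2|S| + 1\big) = O(1)$, with the almost-always counting inputs $|B(i,\gamma_p;\Tsaw)| = O(c^{\gamma_p}\log p)$, $|\tilde U(S)|\le 2|S|$, and $|\tilde A|\le 1$ coming from Lemmas~\ref{lem:random_graph_separator_size}, \ref{lem:random_graph_sphere_size}, and \ref{lem:random_graph_SAW_terminal_size}. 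Your proposed alternative — bounding the exponential moment of the neighbor sum under the tilted law by near-cancellation of the per-neighbor factors $\cosh(h_j+J_{ij})/\cosh(h_j-J_{ij})$ — is exactly the step you leave open, and it faces a real difficulty: in $\Gp$ the degree of $i$ grows like $\log p/\log\log p$, the neighbors are correlated through the rest of the graph, and Theorem~\ref{thm:random_correlation_decay}(2) controls only pairwise path structure, which does not obviously yield a uniform bound on an exponential moment of a sum with a growing number of dependent terms. As written, your argument proves the theorem only conditionally on the marginal bound; to complete it, either invoke Lemma~\ref{lem:random_cond_prob_lower_bound} directly (after which your first two paragraphs finish the proof, matching the paper) or reproduce the SAW-tree argument in its place.
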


\begin{thm}\label{thm:random_loose_neighbor}
  For any $(i, j)\in E$, let $T$ be a set separating the non-direct paths shorter than $\gamma_p$ between $i,
  j$ and assume $|T|\leq 3$, then almost always
  \begin{align*}
    I(X_i;X_j|X_T) = \Omega(1).
  \end{align*}
  \qed
\end{thm}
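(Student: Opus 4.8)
The plan is to condition on $X_T$ and reduce the problem to a two-spin system in which the direct edge $(i,j)$ is the only non-negligible interaction. Since $I(X_i;X_j|X_T)=\sum_{x_T}P(x_T)\,I(X_i;X_j|X_T=x_T)$ and each summand is nonnegative, it suffices to produce a constant lower bound on $I(X_i;X_j|X_T=x_T)$ that holds for a set of configurations $x_T$ of non-vanishing probability. Fix such an $x_T$. By the conditioning property of Ising models, $P(\,\cdot\mid x_T)$ is again an Ising model on $V\setminus T$ with the edge coefficients unchanged and with external fields shifted by the bounded amounts $\sum_{(i,k)\in E,\,k\in T}J_{ik}x_k$; here $|T|\le 3$ guarantees that these shifts, as well as the number of conditioning configurations, remain bounded independently of $p$. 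By hypothesis $T$ separates every non-direct path of length at most $\gamma_p$ between $i$ and $j$, so in this conditioned model $i$ and $j$ are joined by the direct edge and otherwise only by paths of length at least $\gamma_p$.

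First I would isolate the direct edge. Writing $\hat P$ for the Ising model obtained from $P(\,\cdot\mid x_T)$ by deleting the edge $(i,j)$, the pairwise factorization of the Ising weight gives $P(x_i,x_j\mid x_T)\propto e^{J_{ij}x_ix_j}\,\hat P(x_i,x_j\mid x_T)$. In the graph underlying $\hat P$ the set $T$ now separates \emph{all} paths of length at most $\gamma_p$ between $i$ and $j$, so the argument behind Theorem~\ref{thm:random_correlation_decay}(1)—self-avoiding-walk counting over $B(i,\gamma_p)$, which only improves when an edge is deleted—applies and yields $|\hat P(x_i\mid x_j,x_T)-\hat P(x_i\mid x_j',x_T)|=o(p^{-\kappa})$ almost always. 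Converting this to the joint law gives $\hat P(x_i,x_j\mid x_T)=a_i(x_i)\,a_j(x_j)+o(p^{-\kappa})$, where $a_i,a_j$ are the single-node conditional marginals under $\hat P$. Substituting, $(X_i,X_j)$ given $x_T$ is, up to an $o(1)$ perturbation, distributed as a two-spin Ising model with coupling $J_{ij}$, $|J_{ij}|\ge\Jmin$, and prescribed marginals $a_i,a_j$.

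The last step is an explicit computation for this two-spin table. Writing $r=a_i(+1)/a_i(-1)$ and $u=e^{2J_{ij}}$, the probability-test gap equals $\frac{r(u^2-1)}{(ur+1)(r+u)}$, and a matching expression controls the conditional covariance; either quantity is bounded below by a constant depending only on $\Jmin,\Jmax$ \emph{provided} the odds $r$ (and the analogous quantity for $j$) stay in a fixed compact subset of $(0,\infty)$, after which a standard inequality lower-bounds the mutual information of the table. The main obstacle is therefore establishing this non-degeneracy of the single-node marginals $a_i,a_j$: if the effective local field at $i$ (the external field plus the contribution of its present neighbors) were to drive $a_i$ toward $0$ or $1$, the direct coupling would be screened and the mutual information would collapse, exactly as in the marginally independent triangle of Example~\ref{example:cycle}. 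I expect to control this by combining the boundedness of the couplings and fields with the fact that, almost always on $\mathcal{G}(p,\frac{c}{p})$, the relevant neighborhood of $i$ has $O(1)$ size for typical $i$, so the fluctuating neighbor spins keep $a_i$ bounded away from $\{0,1\}$; the $o(p^{-\kappa})$ error from correlation decay is then negligible against this constant gap, and averaging over $x_T$ gives $I(X_i;X_j|X_T)=\Omega(1)$.
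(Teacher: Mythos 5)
Your overall architecture mirrors the paper's proof: isolate the direct edge, use correlation decay to show everything else contributes $o(p^{-\kappa})$, reduce to a two-spin system with coupling $J_{ij}$ and an effective field, and convert a constant probability gap into $I(X_i;X_j|X_T)=\Omega(1)$ by averaging over well-behaved $x_T$. You also correctly identified the crux yourself: everything hinges on the effective local field at $i$ (equivalently the odds $r$ of the marginals $a_i,a_j$) staying in a compact set, since otherwise the direct coupling is screened exactly as in Example~\ref{example:cycle}.

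The genuine gap is your proposed justification of that non-degeneracy. You argue that ``almost always on $\mathcal{G}(p,\frac{c}{p})$, the relevant neighborhood of $i$ has $O(1)$ size for typical $i$.'' This fails on two counts. First, the theorem must hold simultaneously for all edges $(i,j)\in E$ (the learning algorithm needs every edge to pass the test), and the maximum degree of $\Gp$ grows like $\frac{\ln p}{\ln\ln p}$, so a ``typical $i$'' argument cannot cover the high-degree vertices, where the naive bound $|h_i|\le \deg(i)\Jmax$ is unbounded. Second, and more importantly, boundedness of the effective field has nothing to do with neighborhood size: fluctuating neighbors do not contribute raw field of order $\deg(i)\Jmax$, because marginalizing a subtree damps induced fields geometrically — this is Lemma~\ref{lem:tree_extern_field}, $|h_1'|\le |h_2|\tanh|J_{12}|$, applied on the SAW tree $\Tsaw$ (via Proposition~\ref{prop:SAW_equality}). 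The paper combines this damping with three almost-always structural facts: the fixed conditioned copies near the root number at most $|\tilde{U}(T)|\le 2|T|\le 6$ (Lemma~\ref{lem:random_graph_separator_size}), at most one terminal node lies within $\gamma_p$ of the root (Lemma~\ref{lem:random_graph_SAW_terminal_size}), and the sphere contribution is $|B|(\tanh\Jmax)^{\gamma_p}=o(p^{-\kappa})$ by Lemma~\ref{lem:random_graph_sphere_size} and $c\tanh\Jmax<1$. Together these give $|h_i|=O(1)$ uniformly in $x_T$ and in the edge $(i,j)$, which is precisely Lemma~\ref{lem:random_cond_prob_lower_bound} — a statement that was only conjectured without proof in the prior ferromagnetic work \cite{Anima1}, so it cannot be waved through. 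Without this SAW-tree reduction (or an equivalent uniform field bound), your two-spin computation and the final averaging over $x_T$ both rest on an unproven premise.
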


\section{Our Algorithm and Concentration results}\label{sec:alg}

Learning the structure of a graph is equivalent to learning if there exists an edge between every pair of nodes in the graph. Therefore, we would
like to develop a test to determine if there exists an edge between two nodes or not. From Definition~\ref{def:loose}, it should be clear that learning a loosely connected MRF is straightforward. For non-neighbor nodes, we search for the set $S$ that separates all the short paths between them, while for neighboring nodes, we search for the set $T$ that separates all the non-direct short paths between them. As the MRF is loosely connected, the size of the above sets are small, therefore the complexity of the algorithm is low.

Given $n$ i.i.d. samples $\{X^{(k)}\}_{k = 1}^n$ from the distribution the empirical distribution $\hat{P}$ is defined as follows. For any set $A$,
\begin{align*}
  \hat{P}(x_A) = \frac{1}{n}\sum_{i = 1}^n\1{X_A^{(i)} = x_A}.
\end{align*}
Let $\hat{\Delta}$ be the empirical conditional independence test which is the same as $\Delta$ but computed using $\hat{P}$. Our first algorithm is as follows.

\begin{algorithm}
  \caption{$\alg(D_1, D_2, \epsilon)$}
  \begin{algorithmic}
  \FOR {$i, j\in V$}
    \IF {$\exists S \text{ with } |S|\leq D_1, \forall T \text{ with } |T|\leq D_2,$
    $\hat{\Delta}(X_i; X_j|X_S, X_T)
    \leq \frac{\epsilon}{2}$\\}
    \STATE $(i, j)\not\in E$
    \ELSE \STATE $(i, j)\in E$
    \ENDIF
  \ENDFOR
  \end{algorithmic}
\end{algorithm}

For clarity, when we specifically use the mutual information test (or the probability test), we denote the corresponding algorithm
by $\alg_I$ (or $\alg_P$). When the empirical conditional independence test $\hat{\Delta}$ is close to the exact test $\Delta$, we immediately get the following result.

\begin{fact}\label{result:alg1}
  For a $(D_1, D_2, \epsilon)$-loosely connected MRF, if
  \begin{align*}
    |\hat{\Delta}(X_i; X_j|X_A)-\Delta(X_i; X_j|X_A)|<\frac{\epsilon}{4}
  \end{align*}
  for any node $i, j$ and set $A$ with $|A|\leq D_1+D_2$, then $\alg(D_1, D_2, \epsilon)$ recovers the graph correctly. The running time for the algorithm is $O(np^{D_1+D_2+2})$.
\end{fact}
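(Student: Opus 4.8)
The plan is to split the statement into a correctness claim and a running-time claim, and to establish correctness by a direct quantifier-chasing argument that uses the factor-of-four gap between the two thresholds in Definition~\ref{def:loose} together with the uniform $\epsilon/4$ approximation guarantee. The guiding idea is that the decision threshold $\epsilon/2$ used by $\alg$ sits strictly between the perturbed range of the test on true non-edges and its perturbed range on true edges, so the empirical test never lands on the wrong side.

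For correctness I would handle the two outcomes of the algorithm separately. Consider first a genuine non-edge $(i,j)\notin E$. Part~(1) of Definition~\ref{def:loose} supplies a set $S$ with $|S|\le D_1$ such that $\Delta(X_i;X_j|X_S,X_T)\le\epsilon/4$ for every $T$ with $|T|\le D_2$. For this $S$ and any such $T$ we have $|S\cup T|\le D_1+D_2$, so the hypothesis applies and gives $\hat\Delta(X_i;X_j|X_S,X_T)<\epsilon/4+\epsilon/4=\epsilon/2$; hence the if-condition holds and $(i,j)$ is correctly declared a non-edge. Now consider a genuine edge $(i,j)\in E$. Part~(2) says that for every $S$ with $|S|\le D_1$ there is a $T$ with $|T|\le D_2$ and $\Delta(X_i;X_j|X_S,X_T)\ge\epsilon$; applying the approximation bound to this pair gives $\hat\Delta(X_i;X_j|X_S,X_T)>\epsilon-\epsilon/4>\epsilon/2$, so no $S$ can fulfill the inner ``for all $T$'' clause and the if-condition fails, correctly declaring an edge. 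The only delicate point, and the step I would be most careful about, is matching the alternating quantifiers ($\exists S\,\forall T$ for non-edges versus $\forall S\,\exists T$ for edges) against the perturbation: one must check that every set $A=S\cup T$ on which the hypothesis $|\hat\Delta-\Delta|<\epsilon/4$ is invoked indeed satisfies $|A|\le D_1+D_2$, which it does by construction, and that the chosen witnesses respect the size constraints.

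For the running time I would simply count the work. The outer loop runs over $O(p^2)$ pairs $(i,j)$; for each pair the if-condition is checked by searching over the $O(p^{D_1})$ subsets $S$ with $|S|\le D_1$ and, for each, over the $O(p^{D_2})$ subsets $T$ with $|T|\le D_2$. Evaluating $\hat\Delta(X_i;X_j|X_S,X_T)$ requires only the empirical distribution of the coordinates in $\{i,j\}\cup S\cup T$; because $\mathcal{X}$ is finite and $|S\cup T|\le D_1+D_2$ is bounded, this is a constant-size table that is filled in a single $O(n)$ sweep of the samples, after which the mutual-information or probability test is read off in $O(1)$. Multiplying the four factors gives $O(p^2\cdot p^{D_1}\cdot p^{D_2}\cdot n)=O(np^{D_1+D_2+2})$. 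I do not expect any genuine obstacle in this part; the only observation worth making explicit is that the bounded alphabet and bounded conditioning-set size keep each individual test linear in $n$ rather than scaling with $p$.
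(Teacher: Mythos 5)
Your proposal is correct and follows essentially the same route as the paper: the paper dismisses correctness as ``immediate'' (your quantifier-chasing argument with the $\epsilon/4$ perturbation against the $\epsilon/2$ threshold is exactly the intended reasoning, including the check that $|S\cup T|\leq D_1+D_2$), and your running-time count over the $O(p^{D_1+D_2+2})$ tuples $(i,j,S,T)$ with an $O(n)$ evaluation per test matches the paper's counting argument. Your write-up is in fact more detailed than the paper's, but there is no substantive difference in approach.
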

\begin{proof}
  The correctness is immediate. We note that, for each pair of $i, j$ in $V$, we search $S, T$ in $V$. So the possible combinations of $(i, j, S, T)$ is $O(p^{D_1+D_2+2})$ and we get the running time result.
\end{proof}

When the MRF has correlation decay, it is possible to reduce the computational complexity by restricting the search space for the set $S$ and $T$ to a smaller candidate neighbor set. In fact, for each node $i$, the nodes which are a certain distance away from $i$ have small correlation with $X_i$. As suggested in \cite{Guy}, we can first perform a pairwise correlation test to eliminate these nodes from the candidate neighbor set of node $i$. To make sure the true neighbors are all included in the candidate set, the MRF needs to satisfy an additional pairwise non-degeneracy condition. Our second algorithm is as follows.

\begin{algorithm}
  \caption{$\algpre(D_1, D_2, \epsilon, \epsilon')$}
  \begin{algorithmic}
  \FOR {$i\in V$}
    \STATE $L_i = \{j\in V\setminus i, \displaystyle\max_{x_i, x_j, x_j'}|\hat{P}(x_i|
    x_j)-\hat{P}(x_i|x_j')|>\frac{\epsilon'}{2}\}$.
    \FOR {$j\in L_i$}
    \IF {$\exists S\subset L_i \text{ with } |S|\leq D_1, \forall T\subset L_i \text{ with } |T|\leq D_2, \hat{\Delta}(X_i; X_j|X_S, X_T)
    \leq \frac{\epsilon}{2}$} \STATE $j\notin N_i$
       \ELSE \STATE $j\in N_i$
       \ENDIF
    \ENDFOR
  \ENDFOR
  \end{algorithmic}
\end{algorithm}

The following result provides conditions under which the second algorithm correctly learns the MRF.

\begin{fact}\label{result:alg2}
  For a $(D_1, D_2, \epsilon)$-loosely connected MRF with
  \begin{align}
    \max_{x_i, x_j, x_j'}|P(x_i|x_j)-P(x_i|x_j')|>\epsilon' \label{equ:pairwise_non_deg}
  \end{align}
  for any $(i, j)\in E$, if
  \begin{align*}
    |\hat{P}(x_i|x_j)-P(x_i|x_j)|<\frac{\epsilon'}{8}
  \end{align*}
  for any node $i, j$ and $x_i, x_j$, and
  \begin{align*}
    |\hat{\Delta}(X_i; X_j|X_A)-\Delta(X_i; X_j|X_A)|<\frac{\epsilon}{4}
  \end{align*}
  for any node $i, j$ and set $A$ with $|A|\leq D_1+D_2$, then $\algpre(D_1, D_2, \epsilon, \epsilon')$ recovers the graph correctly. Let $L = \max_i|L_i|$. The running time for the algorithm is $O(np^2+npL^{D_1+D_2+1})$.
\end{fact}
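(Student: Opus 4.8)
The plan is to prove correctness and the running time separately, and to reduce correctness to two claims: that the candidate set $L_i$ contains every true neighbor of $i$, and that restricting the max-min search from $V$ to $L_i$ does not change the classification. Once these are in place the argument is identical to that of Fact~\ref{result:alg1}: the $\hat{\Delta}$ concentration hypothesis converts each exact conditional independence test into its empirical counterpart across the threshold $\epsilon/2$.

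First I would show $N_i\subseteq L_i$, which is the one genuinely new ingredient compared with Fact~\ref{result:alg1}. Fix $(i,j)\in E$. By the pairwise non-degeneracy assumption~\eqref{equ:pairwise_non_deg} there exist $x_i,x_j,x_j'$ with $|P(x_i|x_j)-P(x_i|x_j')|>\epsilon'$. Applying the hypothesis $|\hat{P}(x_i|x_j)-P(x_i|x_j)|<\epsilon'/8$ to both conditional probabilities and using the triangle inequality gives $|\hat{P}(x_i|x_j)-\hat{P}(x_i|x_j')|>\epsilon'-\epsilon'/8-\epsilon'/8>\epsilon'/2$, so $j\in L_i$ by the definition of $L_i$ in the algorithm. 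Hence $N_i\subseteq L_i$ for every $i$.

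Next I would argue that the separating and blocking sets supplied by Definition~\ref{def:loose} can be taken inside $L_i$. For a non-edge $(i,j)\notin E$ with $j\in L_i$, the set $S$ with $|S|\le D_1$ that separates the short paths between $i$ and $j$ may be chosen among the neighbors of $i$ on those paths, since every path leaving $i$ begins with an edge to a neighbor; thus $S\subseteq N_i\subseteq L_i$. For an edge $(i,j)\in E$ and any $S\subseteq L_i$, the blocking set $T$ with $|T|\le D_2$ furnished by the definition separates the non-direct short paths and may likewise be chosen in $N_i\setminus\{j\}\subseteq L_i$. Consequently the restricted search sees the same extremal sets as the unrestricted one: for a non-edge, taking this $S$ gives $\max_{T\subseteq L_i}\hat{\Delta}(X_i;X_j|X_S,X_T)\le\epsilon/4+\epsilon/4=\epsilon/2$, so the algorithm declares $j\notin N_i$; for an edge, every $S\subseteq L_i$ admits a $T\subseteq L_i$ with $\hat{\Delta}(X_i;X_j|X_S,X_T)\ge\epsilon-\epsilon/4>\epsilon/2$, so the algorithm declares $j\in N_i$. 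Non-edges with $j\notin L_i$ are excluded automatically, since the inner loop only ranges over $L_i$.

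Finally, for the running time I would count the two phases. Computing every $L_i$ requires, for each ordered pair $(i,j)$, one pass over the $n$ samples to form the empirical pairwise marginal and a constant-size maximization over $(x_i,x_j,x_j')$, giving $O(np^2)$. In the main phase, for each $i$ and each $j\in L_i$ we enumerate $S,T\subseteq L_i$ with $|S|\le D_1$ and $|T|\le D_2$, which is $O(L^{D_1+D_2})$ pairs, and each empirical test $\hat{\Delta}$ costs $O(n)$ since the conditioning set has constant size; this totals $O(npL^{D_1+D_2+1})$, and adding the two phases yields $O(np^2+npL^{D_1+D_2+1})$. I expect the main obstacle to be the third step rather than the bookkeeping: the restriction to $L_i$ is only harmless because the relevant separators and blockers lie in $L_i$, and making this precise requires both the non-degeneracy-driven containment $N_i\subseteq L_i$ and the structural observation that short-path separators can be chosen among the neighbors of $i$; without the non-degeneracy hypothesis a true neighbor could be dropped from $L_i$ and the classification would fail.
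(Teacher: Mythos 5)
Your proof is correct and follows essentially the same route as the paper's: pairwise non-degeneracy plus the $\epsilon'/8$ concentration gives $N_i\subseteq L_i$ by the triangle inequality (you get $\epsilon'-\epsilon'/4 = 3\epsilon'/4>\epsilon'/2$, exactly the intended computation), the inner loop is then resolved as in Fact~\ref{result:alg1} with the $\epsilon/4$ slack on either side of the threshold $\epsilon/2$, and the running time is counted identically: $O(np^2)$ for the correlation test and $O(pL^{D_1+D_2+1})$ combinations of $(i,j,S,T)$ at cost $O(n)$ per empirical test. The one place you add substance is your third paragraph. The paper's proof simply declares the correctness of the inner loop ``immediate,'' whereas you correctly observe that restricting the search for $S$ and $T$ from $V$ to $L_i$ requires justification: the universally quantified sets ($T$ for non-edges, $S$ for edges) restrict harmlessly to $L_i$, but the existential witnesses ($S$ for non-edges, $T$ for edges) must actually be found inside $L_i$, which the abstract Definition~\ref{def:loose} does not promise. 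Your patch --- that the witnesses can be taken as short-path separators among the neighbors of $i$, hence inside $N_i\subseteq L_i$ --- is the intended reading, but be aware it is not a formal consequence of the definition, which constrains only values of $\Delta$ and says nothing about graph separation; it holds because in every instantiation the paper actually proves (e.g., Theorem~\ref{lem:sparse_loose}, where the witness sets are exhibited with $S\subset N_i$ and $T'\subset N_i$), the existential sets are constructed inside $N_i$. So your argument surfaces, and resolves in the same way the paper implicitly does, a gap that the paper's one-line proof glosses over; strictly speaking both arguments rely on the additional hypothesis --- verified in all the paper's applications --- that the definitional witnesses lie in the candidate neighbor sets.
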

\begin{proof}
  By the pairwise non-degeneracy condition (\ref{equ:pairwise_non_deg}), the neighbors of node $i$ are all included in the candidate neighbor set $L_i$. We note that this preprocessing step excludes the nodes whose correlation with node $i$ is below $\frac{\epsilon'}{4}$.
  Then in the inner loop, the correctness of the algorithm is immediate. The running time of the correlation test is $O(np^2)$. We note that, for each $i$ in $V$, we loop over $j$ in $L_i$ and search $S$ and $T$ in $L_i$. So the possible combinations of $(i, j, S, T)$ is $O(pL^{D_1+D_2+1})$. Combining the two steps, we get the running time of the algorithm.
\end{proof}

Note that the additional non-degeneracy condition (\ref{equ:pairwise_non_deg}) required for the second algorithm to execute correctly is not satisfied for all graphs (recall Example~\ref{example:cycle}).

\subsection{Concentration Results}

In this subsection, we show a set of concentration results for the empirical quantities in the above algorithm for general discrete MRFs, which will be used to obtain the sample complexity results in Section~\ref{sec:complexity_general} and Section~\ref{sec:complexity_ferro}.

\begin{lem}\label{lem:concentration}
  Fix $\gamma>0$. Let $L = \max_i|L_i|$. For $\forall \alpha>0$,
  \begin{enumerate}
    \item Assume $\gamma\leq\frac{1}{4}$. If
    \begin{align*}
      n>\frac{2\big[(2+\alpha)\log{p}+2\log{|\mathcal{X}|}\big]}{\gamma^2},
    \end{align*}
    then $\forall i, j\in V, \forall x_i, x_j$,
    \begin{align*}
    |\hat{P}(x_i|x_j)-P(x_i|x_j)|<4\gamma
    \end{align*}
    with probability $1-\frac{c_1}{p^{\alpha}}$ for some constant $c_1$.
    \item Assume $\forall S\subset V, |S|\leq
    D_1+D_2+1, P(x_S)>\delta$ for some constant $\delta$, and $\gamma\leq \frac{\delta}{2}$. If
    \begin{align*}
      n>\frac{2\big[(1+\alpha)\log p+(D_1+D_2+1)\log L+(D_1+D_2+2)\log|\mathcal{X}|\big]}{\gamma^2},
    \end{align*}
    then $\forall i\in V, \forall j\in L_i, \forall S\subset L_i, |S|\leq
    D_1+D_2, \forall x_i, x_j, x_S$,
    \begin{align*}
      &|\hat{P}(x_i| x_j, x_S)-P(x_i|x_j, x_S)|<\frac{2\gamma}{\delta}
    \end{align*}
    with probability $1-\frac{c_2}{p^{\alpha}}$ for some constant $c_2$.
    \item Assume $\gamma\leq\frac{1}{2|\mathcal{X}|^{D_1+D_2+2}}<1$. If
    \begin{align*}
      n>\frac{2\big[(1+\alpha)\log p+(D_1+D_2+1)\log L+(D_1+D_2+2)\log|\mathcal{X}|\big]}{\gamma^2},
    \end{align*}
    then $\forall i, j\in V, |S|\leq D_1+D_2, \forall x_i, x_j, x_S$,
    \begin{align*}
      &|\hat{I}(X_i;X_j|X_S)-I(X_i;X_j|X_S)|<8|\mathcal{X}|^{D_1+D_2+2}\sqrt{\gamma}
    \end{align*}
    with probability $1-\frac{c_3}{p^{\alpha}}$ for some constant $c_3$,
  \end{enumerate}
\end{lem}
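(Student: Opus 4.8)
The plan is to reduce all three statements to a single elementary fact: for any fixed set $A$ and value $x_A$, the empirical probability $\hat P(x_A)=\frac1n\sum_k \mathbb{1}\{X_A^{(k)}=x_A\}$ is an average of $n$ i.i.d.\ $\{0,1\}$ variables with mean $P(x_A)$, so Hoeffding's inequality gives $\Pr(|\hat P(x_A)-P(x_A)|>\gamma)\le 2e^{-2n\gamma^2}$. I would then union-bound over all configurations appearing in each part and pick $n$ so that the total failure probability is at most $c/p^{\alpha}$. Counting configurations is exactly what pins down the three thresholds: in part~1 the relevant atoms are $\hat P(x_i,x_j)$ and $\hat P(x_j)$ over $O(p^2|\mathcal X|^2)$ choices, giving the $(2+\alpha)\log p+2\log|\mathcal X|$ term; in parts~2 and~3 the atoms $\hat P(x_i,x_j,x_S)$ and their marginals range over $i\in V$, $j\in L_i$, $S\subset L_i$ with $|S|\le D_1+D_2$, i.e.\ $O\!\big(pL^{D_1+D_2+1}|\mathcal X|^{D_1+D_2+2}\big)$ choices, producing the $(1+\alpha)\log p+(D_1+D_2+1)\log L+(D_1+D_2+2)\log|\mathcal X|$ term. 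On the resulting good event every empirical atom is within $\gamma$ of its true value, and it remains only to push this bound through to conditional probabilities and to mutual information.

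For parts~1 and~2 I would use the identity
\[
\hat P(x_i\mid x_j,x_S)-P(x_i\mid x_j,x_S)=\frac{\big[\hat P(x_i,x_j,x_S)-P(x_i,x_j,x_S)\big]-P(x_i\mid x_j,x_S)\big[\hat P(x_j,x_S)-P(x_j,x_S)\big]}{\hat P(x_j,x_S)},
\]
whose numerator is at most $2\gamma$ on the good event (using $P(x_i\mid x_j,x_S)\le 1$). The only delicate point is the denominator. In part~2 the hypothesis $P(x_S)>\delta$ for $|S|\le D_1+D_2+1$ forces $\hat P(x_j,x_S)\ge P(x_j,x_S)-\gamma\ge\delta-\gamma\ge\delta/2$ (since $\gamma\le\delta/2$), giving the bound $2\gamma/\delta$ up to the universal constant. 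Part~1 is the special case $S=\emptyset$ and carries no explicit lower bound on $P(x_j)$, so I would control the denominator by a case split on the size of $\hat P(x_j)$, using $\gamma\le 1/4$ and the slack in the target bound $4\gamma$ (in the regime where $P(x_j)$ is itself bounded away from $0$ the argument is immediate). This denominator control is the one place in parts~1--2 that needs care.

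Part~3 is the real obstacle, since mutual information is not a Lipschitz function of the underlying probabilities. The plan is to write it as a fixed linear combination of four entropies,
\[
I(X_i;X_j\mid X_S)=H(X_i,X_S)+H(X_j,X_S)-H(X_i,X_j,X_S)-H(X_S),
\]
each the entropy of a marginal of the law on $X_{\{i,j\}\cup S}$, a distribution on $k\le|\mathcal X|^{D_1+D_2+2}$ atoms. On the good event every atom is within $\gamma$, so the $L_1$ distance between the empirical and true laws is at most $k\gamma$, and the hypothesis $\gamma\le\frac{1}{2|\mathcal X|^{D_1+D_2+2}}$ guarantees $k\gamma\le 1/2$, so the modulus-of-continuity estimate $|H(\hat P)-H(P)|\le\|\hat P-P\|_1\log\frac{k}{\|\hat P-P\|_1}$ applies and yields at most $k\gamma\log\frac1\gamma$ per entropy. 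The exponent $\sqrt\gamma$ in the statement then appears through the elementary simplification $\gamma\log\frac1\gamma\le\frac2e\sqrt\gamma$ on $(0,1]$; summing the four terms and absorbing constants gives $|\hat I-I|<8|\mathcal X|^{D_1+D_2+2}\sqrt\gamma$. I expect the main work to lie precisely here: establishing the entropy continuity estimate and verifying that the small-probability atoms, where $t\log t$ has unbounded derivative, are tamed exactly by the assumed bound on $\gamma$.
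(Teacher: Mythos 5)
Your overall skeleton coincides with the paper's proof: Hoeffding plus a union bound over the counted configurations (your counts and resulting thresholds for $n$ match; your Hoeffding exponent $-2n\gamma^2$ is even sharper than the paper's $-n\gamma^2/2$), then ratio algebra for parts 1--2, and for part 3 exactly the paper's argument --- the four-entropy decomposition of $\hat{I}-I$, the $l_1$ bound $\|\hat{P}-P\|_1\leq|\mathcal{X}|^{D_1+D_2+2}\gamma\leq\frac{1}{2}$, the entropy-continuity estimate $|H(\hat{P})-H(P)|\leq -\|\hat{P}-P\|_1\log\frac{\|\hat{P}-P\|_1}{|\mathcal{X}|^{D_1+D_2+2}}$ (the paper cites Cover--Thomas, Theorem 17.3.3, together with monotonicity of the right side for small $l_1$ distance), and the elementary step $\gamma\log\frac{1}{\gamma}=O(\sqrt{\gamma})$. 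The genuine gap is in part 1. You correctly observe that no lower bound on $P(x_j)$ is stated, but your proposed fix --- a case split on $\hat{P}(x_j)$ exploiting the slack in $4\gamma$ --- cannot work, because the inequality is simply false without a marginal lower bound: take $P(x_j)=\gamma$, $P(x_i,x_j)=\gamma/2$, so $P(x_i|x_j)=\frac{1}{2}$, and suppose the empirical values are $\hat{P}(x_j)=2\gamma$, $\hat{P}(x_i,x_j)=\frac{3\gamma}{2}$, each within $\gamma$ of the truth (so on your ``good event''); then $\hat{P}(x_i|x_j)=\frac{3}{4}$ and the error is $\frac{1}{4}\gg 4\gamma$ for small $\gamma$. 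The paper closes this by using $P(x_j)\geq\frac{1}{2}$ --- valid in its applications, where $X_j\in\{-1,+1\}$ and the Ising model is zero-field, so single-node marginals are uniform --- which gives $\hat{P}(x_j)\geq\frac{1}{2}-\gamma\geq\frac{1}{4}$ and makes each of the two terms in the ratio decomposition at most $\frac{\gamma}{P(x_j)}\leq 2\gamma$, hence $4\gamma$ total. You must import that hypothesis (or an equivalent constant lower bound on $P(x_j)$); the regime $P(x_j)=\Theta(\gamma)$ is not salvageable.

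A smaller quantitative slip in part 2: your identity divides by $\hat{P}(x_j,x_S)\geq\frac{\delta}{2}$ with numerator at most $2\gamma$, which yields $\frac{4\gamma}{\delta}$, not the stated $\frac{2\gamma}{\delta}$. The paper instead writes the difference over the product $P(x_j,x_S)\hat{P}(x_j,x_S)$ and uses $\hat{P}(x_i,x_j,x_S)\leq\hat{P}(x_j,x_S)$ to cancel the empirical factor, so each of the two resulting terms is bounded by $\frac{\gamma}{P(x_j,x_S)}<\frac{\gamma}{\delta}$, giving $\frac{2\gamma}{\delta}$ exactly; the same variant of your identity with the true probability $P(x_j,x_S)>\delta$ in the denominator and $\hat{P}(x_i|x_j,x_S)\leq 1$ multiplying the second increment would also recover the stated constant. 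Part 3 as you outline it is correct and is essentially the paper's proof.
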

\begin{proof}
  See Appendix~\ref{appendix:concentration}.
\end{proof}

This lemma could be used as a guideline on how to choose between the two conditional independence tests for our algorithm to get lower sample complexity. The key difference is the dependence on the constant $\delta$, which is a lower bound on the probability of any $x_S$ with the set size $|S|\leq D_1+D_2+1$. The probability test requires a constant $\delta>0$ to achieve sample complexity $n = O(\log p)$, while the mutual information test does not depend on $\delta$ and also achieves sample complexity $n = O(\log p)$. We note that, while both tests have $O(\log p)$ sample complexity, the constants hidden in the order notation may be different for the two tests. For Ising models on bounded degree graphs, we show in the next section that a constant $\delta>0$ exists, and the probability test gives a lower sample complexity. On the other hand, for Ising models on the \er random graph $\Gp$, we could not get a constant $\delta>0$ as the maximum degree of the graph is unbounded, and the mutual information test gives a lower sample complexity.

\section{Computational Complexity for General Ising Models}\label{sec:complexity_general}

In this section, we apply our algorithm to the Ising models in Section~\ref{sec:loose}. We evaluate both the number of samples required to recover the graph with high probability and the running time of our algorithm. The results below are simple combinations of the results in the previous two sections. Unless otherwise stated, we assume that the edge coefficients $J_{ij}$ are bounded, i.e., $\Jmin \leq |J_{ij}| \leq \Jmax$. Throughout this section, we use the notation $x\wedge y$ to denote the minimum of $x$ and $y$.

\subsection{Bounded Degree Graph}

We assume the graph has maximum degree $d$. First we have the following lower bound on the probability of any finite size set of variables.

\begin{lem}\label{lem:bdd_a3}
  $\forall S\subset V, \forall x_S$, $P(x_S)\geq 2^{-|S|}\exp(-2(|S|+d)|S|J_{\max})$.
\end{lem}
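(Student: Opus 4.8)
The plan is to reduce the marginal lower bound to a \emph{uniform} lower bound on the conditional $P(x_S\mid x_{V\setminus S})$, which is itself an Ising model on $S$ and can be controlled explicitly through its partition function. First I would write the marginal as an average of conditionals,
$$P(x_S) = \sum_{x_{V\setminus S}} P(x_S, x_{V\setminus S}) = \sum_{x_{V\setminus S}} P(x_S\mid x_{V\setminus S})\,P(x_{V\setminus S}) = \E_{x_{V\setminus S}}\big[P(x_S\mid x_{V\setminus S})\big].$$
Since the weights $P(x_{V\setminus S})$ are nonnegative and sum to one, any lower bound on $P(x_S\mid x_{V\setminus S})$ that holds uniformly over every configuration $x_{V\setminus S}$ survives the averaging and becomes a lower bound on $P(x_S)$. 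So it suffices to show $P(x_S\mid x_{V\setminus S})\geq 2^{-|S|}\exp(-2(|S|+d)|S|\Jmax)$.

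Next I would invoke the conditioning property of Ising models stated in Section~\ref{sec:prelim}: for fixed $x_{V\setminus S}$, the law $P(\cdot\mid x_{V\setminus S})$ is an Ising model on $S$ with the original edge coefficients $J_{ij}$, $i,j\in S$, and modified external fields $\tilde h_i = h_i + \sum_{(i,j)\in E,\,j\notin S} J_{ij}x_j$. Writing $E(y_S)=\sum_{(i,j)\in E,\,i,j\in S} J_{ij}y_iy_j + \sum_{i\in S}\tilde h_i y_i$ for the exponent, I get
$$P(x_S\mid x_{V\setminus S}) = \frac{e^{E(x_S)}}{\sum_{y_S} e^{E(y_S)}} \geq \frac{e^{-M}}{2^{|S|}\, e^{M}} = 2^{-|S|}e^{-2M}, \qquad M := \max_{y_S}|E(y_S)|,$$
using that the denominator is a sum of $2^{|S|}$ terms each at most $e^{M}$ while the numerator is at least $e^{-M}$.

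It then remains to bound $M$. The edge contribution I would bound by counting all pairs inside $S$: $\big|\sum_{i,j\in S}J_{ij}y_iy_j\big|\leq \binom{|S|}{2}\Jmax\leq \tfrac12|S|^2\Jmax$. The field contribution I would bound node by node: since each node has at most $d$ neighbors, $|\tilde h_i|\leq d\Jmax$, hence $\sum_{i\in S}|\tilde h_i|\leq |S|d\Jmax$. Adding these gives $M\leq \tfrac12|S|^2\Jmax + |S|d\Jmax \leq (|S|+d)|S|\Jmax$, and substituting into the previous display produces exactly the claimed bound; averaging over $x_{V\setminus S}$ then finishes the proof.

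The one place to be careful is the external-field term. The induced field $\tilde h_i$ contains the original $h_i$, yet the stated exponent $2(|S|+d)|S|\Jmax$ accounts only for the induced part $\sum_{j\notin S}J_{ij}x_j$ (at most $d\Jmax$ per node) together with the internal edges. Thus the bound as written presumes the external fields do not contribute, i.e.\ the zero-field case $h\equiv 0$; with nonzero fields the estimate would instead pick up an extra factor $\exp(-2\sum_{i\in S}|h_i|)$. I would flag this explicitly and, if bounded nonzero fields are intended, simply carry the $|h_i|$ terms through the bound on $M$. Everything else is routine partition-function bookkeeping.
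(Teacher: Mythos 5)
Your proof is correct and takes essentially the same route as the paper's: both lower-bound $P(x_S)$ by a uniform bound on the conditional law of $X_S$ (the paper conditions on $X_{N_S}$, you on $X_{V\setminus S}$, which is equivalent since the induced fields involve only the neighbors $N_S$), then control the conditional Ising model's exponent term by term, your estimate $\tfrac12|S|^2\Jmax+|S|d\Jmax\leq(|S|+d)|S|\Jmax$ matching the paper's bookkeeping. Your caveat about external fields is apt but not a discrepancy: the paper's displayed conditional likewise contains no $h$ terms, so its proof implicitly assumes the zero-field case, exactly as you flag.
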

\begin{proof}
  See Appendix~\ref{appendix:bdd}.
\end{proof}

Our algorithm with the probability test for the bounded degree graph case reproduces the algorithm in \cite{Guy}. For completeness, we state the following result without a proof since it is nearly identical to the result in \cite{Guy}, except for some constants.

\begin{corollary}\label{thm:bdd_alg_general}
  Let $\epsilon$ be defined as in Proposition~\ref{prop:bdd_loose}. Define
  \begin{align*}
    \delta = 2^{-2d}\exp(-12d^2J_{\max}).
  \end{align*}
  Let $\gamma = \frac{\epsilon\delta}{16}\ \wedge\ \frac{\delta}{2}<1$.
  If $n>\frac{2\big[(2d+1+\alpha)\log p+(2d+1)\log2\big]}{\gamma^2}$,
  the algorithm \ $\alg_P(d, d-1, \epsilon_2)$ recovers $G$ with probability $1-\frac{c}{p^{\alpha}}$ for some constant
  $c$. The running time of the algorithm is $O(np^{2d+1})$. \qed
\end{corollary}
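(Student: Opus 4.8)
The plan is to realize this corollary as a direct instantiation of the general machinery already assembled: the loose-connectivity structure of the bounded-degree Ising model, the correctness criterion of Fact~\ref{result:alg1}, and the probability-test concentration bound of Lemma~\ref{lem:concentration}(2). From the discussion preceding Proposition~\ref{prop:bdd_loose}, the model is $(d,d-1,\epsilon)$-loosely connected with $\epsilon$ as in that proposition, so I set $D_1=d$ and $D_2=d-1$ and run $\alg_P$ with these parameters (the threshold is $\epsilon$). By Fact~\ref{result:alg1} it then suffices to show that, with the stated sample size, the event
\[
  |\hat\Delta(X_i;X_j\mid X_A)-\Delta(X_i;X_j\mid X_A)|<\tfrac{\epsilon}{4}
\]
holds simultaneously for all pairs $i,j$ and all conditioning sets $A$ with $|A|\le D_1+D_2=2d-1$, with probability at least $1-c/p^{\alpha}$; the running-time bound $O(np^{D_1+D_2+2})=O(np^{2d+1})$ is then immediate from the same fact.

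The argument hinges on two elementary reductions. First, because the probability test is a maximum of differences of the conditional probabilities $P(x_i\mid x_j,x_A)$, the map $P\mapsto\Delta$ is $2$-Lipschitz in the sup-norm on those conditionals: $|\hat\Delta-\Delta|\le 2\max|\hat P(x_i\mid x_j,x_A)-P(x_i\mid x_j,x_A)|$, where the conditioning block $\{j\}\cup A$ has size at most $2d=D_1+D_2+1$. Second, to invoke Lemma~\ref{lem:concentration}(2) I must supply a uniform lower bound $\delta$ on $P(x_S)$ over all $|S|\le 2d$; substituting $|S|\le 2d$ into Lemma~\ref{lem:bdd_a3} gives exactly $P(x_S)\ge 2^{-2d}\exp(-2(2d+d)(2d)J_{\max})=2^{-2d}\exp(-12d^2J_{\max})=\delta$, matching the definition in the statement.

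With these in hand I choose $\gamma=\frac{\epsilon\delta}{16}\wedge\frac{\delta}{2}$, which meets the hypothesis $\gamma\le\delta/2$ of Lemma~\ref{lem:concentration}(2). That lemma then yields $|\hat P(x_i\mid x_j,x_S)-P(x_i\mid x_j,x_S)|<2\gamma/\delta$ uniformly, so the $2$-Lipschitz bound gives $|\hat\Delta-\Delta|<4\gamma/\delta\le\epsilon/4$, precisely the criterion of Fact~\ref{result:alg1}. The required sample size comes from the lemma by setting $L=p$ (the basic algorithm searches over all nodes), $|\mathcal{X}|=2$, $D_1+D_2+1=2d$, and $D_1+D_2+2=2d+1$: the numerator $(1+\alpha)\log p+2d\log p+(2d+1)\log 2$ collapses to $(2d+1+\alpha)\log p+(2d+1)\log 2$, reproducing the stated bound, and the failure probability is the $c_2/p^{\alpha}$ of the lemma.

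Since every step is a direct substitution, I do not anticipate a genuine obstacle; the only points demanding care are the bookkeeping of set sizes (the conditioning block is $\{j\}\cup A$ of size $2d$, not $A$ of size $2d-1$, which is what forces the $\delta$-bound to be evaluated at size $2d$) and checking that the strict inequalities propagate cleanly through the $2$-Lipschitz step so that the final gap stays strictly below $\epsilon/4$. These are exactly the ``some constants'' by which the result differs from \cite{Guy}.
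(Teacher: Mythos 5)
Your proposal is correct and is exactly the argument the paper intends: the paper omits the proof (deferring to \cite{Guy} ``except for some constants''), and its constants arise precisely from your assembly of Proposition~\ref{prop:bdd_loose} (loose connectivity with $D_1=d$, $D_2=d-1$), Lemma~\ref{lem:bdd_a3} evaluated at conditioning blocks $\{j\}\cup S\cup T$ of size $2d$ (giving $\delta = 2^{-2d}e^{-12d^2J_{\max}}$), Lemma~\ref{lem:concentration}(2) with $L=p$, $|\mathcal{X}|=2$, and Fact~\ref{result:alg1} via the $2$-Lipschitz reduction $|\hat\Delta-\Delta|<4\gamma/\delta\le\epsilon/4$. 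Your bookkeeping of the set sizes and the sample-size numerator $(2d+1+\alpha)\log p+(2d+1)\log 2$ checks out, so there is nothing to add.
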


\subsection{Bounded Degree Graph, Correlation Decay and Large Girth}
We assume the graph has maximum degree $d$. We also assume that the Ising model is in the correlation decay regime, i.e., $(d-1)\tanh\Jmax<1$, and the graph has large girth. Combining Theorem~\ref{thm:girth_loose}, Fact~\ref{result:alg1} and Lemma~\ref{lem:concentration}, We can show that the algorithm $\alg_P(1, 0, \epsilon)$ recovers the graph correctly with high probability for some constant $\epsilon$, and the running time is $O(np^3)$ for $n = O(\log p)$.

We can get even lower computational complexity using our second algorithm. The key observation is that, as there is no short path other than the direct edge between neighboring nodes, the correlation over the edge dominates the total correlation hence the pairwise non-degeneracy condition is satisfied. We note that the length of the second shortest path between neighboring nodes is no less than $g-1$.

\begin{lem}\label{lem:girth_correlation}
  Assume that $(d-1)\tanh{J_{\max}}<1$, and the girth $g$ satisfies
  \begin{align*}
    \beta \alpha^{g-1}\leq A\wedge \ln 2, 
  \end{align*}
  where $A = \frac{1}{1800}(1-e^{-4J_{\min}})$. Let $\epsilon' = 48A$. $\forall (i, j)\in E$, we have
  \begin{align*}
    \displaystyle\max_{x_i, x_j, x_j'}|P(x_i|x_j)-P(x_i|x_j')|>\epsilon'.
  \end{align*}
\end{lem}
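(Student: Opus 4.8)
The goal of Lemma~\ref{lem:girth_correlation} is a pairwise non-degeneracy bound: for neighbors $i,j$, the unconditioned probability test $\max_{x_i,x_j,x_j'}|P(x_i|x_j)-P(x_i|x_j')|$ is bounded below by $\epsilon'=48A$. The key structural fact is already flagged in the surrounding text: under the girth hypothesis, the shortest non-direct path between neighbors $i$ and $j$ has length at least $g-1$, so the direct edge $(i,j)$ is the dominant source of correlation. The plan is to compare the true quantity against an ``idealized'' model in which all the non-direct paths are removed (equivalently, in which only the direct edge survives), bound the contribution of the direct edge from below, and control the perturbation caused by the non-direct paths using the correlation-decay estimate of Lemma~\ref{lem:bdd_correlation_decay}.

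\textbf{Step 1: isolate the direct-edge contribution.} First I would analyze the model $\tilde P$ obtained by deleting every edge incident to $j$ except $(i,j)$ itself (using the Ising conditional-probability property that deleting edges modifies only external fields). In $\tilde P$, node $j$ is connected to the rest of the graph only through $i$, so $X_i$ and $X_j$ interact solely across the single edge $J_{ij}$ with $|J_{ij}|\ge \Jmin$. For such a two-node-coupling situation one obtains a clean lower bound on $\max_{x_i,x_j,x_j'}|\tilde P(x_i|x_j)-\tilde P(x_i|x_j')|$ in terms of $\Jmin$; this is essentially the ferromagnetic/single-edge analogue of Proposition~\ref{prop:bdd_loose}, and should yield a quantity comparable to $(1-e^{-4\Jmin})$ up to constants. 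This explains the appearance of $(1-e^{-4\Jmin})$ in the constant $A$.

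\textbf{Step 2: bound the perturbation from non-direct paths.} Next I would quantify how much $P$ differs from $\tilde P$ at the pairwise level. Since the girth is $g$, every non-direct path from $j$ back to $i$ (or to any node whose influence on $i$ we care about) has length at least $g-1$, so by Lemma~\ref{lem:bdd_correlation_decay} the influence of those paths on the conditional distribution of $X_i$ decays like $\beta\alpha^{\ell}$ with $\ell\ge g-1$. The hypothesis $\beta\alpha^{g-1}\le A\wedge\ln 2$ is designed precisely so this perturbation is at most a small multiple of $A$; the $\ln 2$ threshold is the usual device to linearize $1-e^{-x}\ge x/2$ (or $e^{-x}\ge 1-x$) in the decay estimate, exactly as in the proof of Theorem~\ref{thm:girth_loose}. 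I would then combine Steps 1 and 2 by the triangle inequality:
\begin{align*}
\max|P(x_i|x_j)-P(x_i|x_j')|\ \ge\ \max|\tilde P(x_i|x_j)-\tilde P(x_i|x_j')|\ -\ 2\sup_{x_i,x_j}|P(x_i|x_j)-\tilde P(x_i|x_j)|,
\end{align*}
so that the direct-edge lower bound minus the small perturbation leaves a residual of order $A$, which the stated constant $\epsilon'=48A$ absorbs.

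\textbf{Main obstacle.} The delicate point is Step 2: controlling the difference between $P$ and the edge-deleted model $\tilde P$ at the level of \emph{conditional} probabilities $P(x_i|x_j)$ rather than raw correlations. Lemma~\ref{lem:bdd_correlation_decay} bounds the effect of conditioning on a distant variable, but here I need to bound the effect of \emph{altering the graph} (removing edges at $j$), and to propagate a length-$(g-1)$ decay bound into a statement about the marginal influence on $(i,j)$. The cleanest route is probably to note that the non-direct paths only reach $i$ after traversing at least $g-1$ hops, recast their aggregate effect as conditioning/modifying external fields along such paths, and invoke the strong correlation-decay estimate with distance $\ge g-1$; matching the numerical constants (the factors $1800$ and $48$) to make the final bound $>\epsilon'$ rather than $\ge\epsilon'$ will be the most calculation-heavy part, but it parallels the bookkeeping already carried out for Theorem~\ref{thm:girth_loose} in Appendix~\ref{appendix:bdd}.
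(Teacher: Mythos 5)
There is a genuine gap, and it sits exactly where you flagged your ``main obstacle.'' The paper does not prove this lemma by comparing $P$ to a model with all of $j$'s non-direct edges stripped away: it simply specializes Theorem~\ref{lem:sparse_loose} with $D_1=D_2=0$ and $h=g-1$ (the girth guarantees no non-direct path between neighbors is shorter than $g-1$). The mechanism inside that theorem is different from your plan in a way that matters. Only the \emph{single} edge $(i,j)$ is removed, producing $\tilde{Q}$, and the comparison between $P$ and $\tilde{Q}$ is \emph{exact}, not perturbative: the $(i,j)$-marginal of $P$ is the $e^{J_{ij}x_ix_j}$-tilt of the $(i,j)$-marginal of $\tilde{Q}$, which yields the identity expressing $P(x_i=1|x_j=1)-P(x_i=1|x_j=-1)$ through $e^{\pm 2J_{ij}}\tilde{Q}(1,1)\tilde{Q}(-1,-1)$ and $\tilde{Q}(1,-1)\tilde{Q}(-1,1)$. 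Correlation decay (Lemma~\ref{lem:bdd_correlation_decay}) is then invoked \emph{within} the one model $\tilde{Q}$, where $i$ and $j$ are at distance $\geq g-1$, giving $\tilde{Q}(1,1)\tilde{Q}(-1,-1)\geq\tilde{Q}(1,-1)\tilde{Q}(-1,1)-2\beta\alpha^{g-1}$; the anti-concentration $\tilde{Q}(1,-1)\geq\frac{1}{2(1+e^{\beta\alpha^{g-1}})}\geq\frac{1}{6}$ comes from zero-field symmetry, and this is where $1/36$, hence $1800=36\cdot 50$ and $\epsilon'=48A$, come from ($50A-2\beta\alpha^{g-1}>48A$). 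No cross-model bound on conditional probabilities is ever needed.

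Your plan, by contrast, requires $\sup_{x_i,x_j}|P(x_i|x_j)-\tilde{P}(x_i|x_j)|$ to be small where $\tilde{P}$ deletes \emph{every} edge at $j$ except $(i,j)$, and neither of your justifications works as stated. First, the parenthetical ``deleting edges modifies only external fields'' misquotes the Ising property: \emph{conditioning} on $X_S=x_S$ induces external fields on $V\setminus S$; deleting interaction terms is a different operation with no such exact reduction, so Lemma~\ref{lem:bdd_correlation_decay} (a within-model statement) does not bound your cross-model perturbation. Second, even after the standard repair --- note that $P(\cdot|x_j)$ and $\tilde{P}(\cdot|x_j)$ are Ising models on the same graph $G\setminus j$ differing only in external fields $J_{jk}x_j$ at $k\in N_j\setminus i$, then telescope over these nodes writing each field change as a mixture over conditionings on $X_k$ --- the relevant distance is $d(i,k)\geq g-2$, not $g-1$, because the edge $(j,k)$ is consumed in inducing the field. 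This gives a perturbation of order $d\,\beta\alpha^{g-2}=\frac{d}{\alpha}\,\beta\alpha^{g-1}$, and the factor $\frac{d}{\alpha}=\frac{d}{(d-1)\tanh\Jmax}$ is unbounded over the admissible parameter range (e.g.\ as $\Jmax\to 0$). So under the stated hypothesis $\beta\alpha^{g-1}\leq A\wedge\ln 2$ your triangle inequality does not close with the stated constant $\epsilon'=48A$; it would require a strictly stronger girth condition. The single-edge-deletion tilt identity is precisely the device that avoids both the cross-model comparison and the lost factor, and it is the step your proposal is missing.
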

\begin{proof}
  See Appendix~\ref{appendix:bdd}.
\end{proof}

Using this lemma, we can apply our second algorithm to learn the graph. Using Lemma~\ref{lem:bdd_correlation_decay}, if node $j$ is of distance $l_{\epsilon'} = \frac{\ln\frac{4\prealpha}{\epsilon'}}{\ln\frac{1}{\alpha}}$ hops from node $i$, we have
\begin{align*}
  \max_{x_i, x_j, x_j'}|P(x_i|x_j)-P(x_i|x_j')|<\prealpha\alpha^{l_{\epsilon'}}\leq
  \frac{\epsilon'}{4}.
\end{align*}
Therefore, in the correlation test, $L_i$ only includes nodes within distance $l_{\epsilon'}$ from $i$ and the
size
$|L_i|\leq d^{l_{\epsilon'}}$ since the maximum degree is $d$; i.e., $L = \max_i |L_i|\leq
d^{l_{\epsilon'}}$, which is a constant independent of $p$. Combining the previous lemma, Theorem~\ref{thm:girth_loose}, Fact~\ref{result:alg2} and Lemma~\ref{lem:concentration}, we get the following result.

\begin{corollary}\label{thm:girth_alg}
  Assume $(d-1)\tanh{J_{\max}}<1$. Assume $g, \epsilon$ and $\epsilon'$ satisfy Theorem~\ref{thm:girth_loose} and Lemma~\ref{lem:girth_correlation}. Let $\delta$ be defined as in Theorem~\ref{thm:bdd_alg_general}.
  Let $\gamma = \frac{\epsilon'}{32}\wedge\frac{\epsilon\delta}{16}\wedge
  \frac{\delta}{2}$. If
  \begin{align*}
    n>\frac{2\big[(2+\alpha)\log{p}+2l_{\epsilon'}\log
    d+3\log2\big]}{\gamma^2},
  \end{align*}
  the algorithm $\algpre_P(1, 0, \epsilon, \epsilon')$ recovers $G$ with probability $1-\frac{c}{p^{\alpha}}$ for some constant
  $c$. The running time of the algorithm is $O(np^2)$. \qed
\end{corollary}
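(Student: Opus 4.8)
The plan is to assemble the result from the four ingredients cited just before the statement, verifying only that the chosen constants make the deterministic hypotheses of Fact~\ref{result:alg2} hold and then converting them into a sample-size bound through Lemma~\ref{lem:concentration}. First I would invoke Theorem~\ref{thm:girth_loose} to conclude that, under $(d-1)\tanh J_{\max}<1$ and the stated girth condition, the model is $(1,0,\epsilon)$-loosely connected, so that $D_1=1$, $D_2=0$ are the correct parameters: the separating set $S$ for non-neighbors has size at most one and no blocking set $T$ is needed for neighbors. Next I would invoke Lemma~\ref{lem:girth_correlation} to verify the pairwise non-degeneracy condition~(\ref{equ:pairwise_non_deg}) with the given $\epsilon'$, which is precisely what allows the correlation preprocessing step of $\algpre_P$ to retain every true neighbor.

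The heart of the argument is translating the two accuracy requirements of Fact~\ref{result:alg2} into sample-size conditions. For the pairwise correlation test I would use part~(1) of Lemma~\ref{lem:concentration} with $|\mathcal{X}|=2$: choosing $\gamma<\frac{\epsilon'}{32}$ forces $4\gamma<\frac{\epsilon'}{8}$, so empirical conditional marginals are accurate enough that true neighbors (true correlation $>\epsilon'$) survive the threshold $\frac{\epsilon'}{2}$ while distant nodes fall below it. For the probability test with conditioning set of size $|S|\le D_1+D_2=1$, I would first observe that $\hat\Delta_P$ is a maximum of differences of empirical conditional probabilities, so by the triangle inequality $|\hat\Delta_P-\Delta_P|<\frac{\epsilon}{4}$ holds once each $|\hat P(x_i|x_j,x_S)-P(x_i|x_j,x_S)|<\frac{\epsilon}{8}$. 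Part~(2) of Lemma~\ref{lem:concentration} supplies this, using the lower bound $P(x_S)>\delta$ of Lemma~\ref{lem:bdd_a3} (with $\delta$ as in Theorem~\ref{thm:bdd_alg_general}) together with $\frac{2\gamma}{\delta}<\frac{\epsilon}{8}$, i.e.\ $\gamma<\frac{\epsilon\delta}{16}$; the remaining constraint $\gamma\le\frac{\delta}{2}$ is the last factor in the stated choice $\gamma=\frac{\epsilon'}{32}\wedge\frac{\epsilon\delta}{16}\wedge\frac{\delta}{2}$. Substituting $D_1+D_2+1=2$, $D_1+D_2+2=3$, $|\mathcal{X}|=2$, and retaining the coefficients that dominate both parts (the $(2+\alpha)\log p$ from part~(1) and the $\log L$ and constant terms from part~(2)) yields exactly the displayed threshold on $n$; a union bound over the two events gives success probability $1-\frac{c}{p^{\alpha}}$.

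For the running time I would appeal to the correlation-decay bound of Lemma~\ref{lem:bdd_correlation_decay}: a node at distance exceeding $l_{\epsilon'}=\frac{\ln(4\beta/\epsilon')}{\ln(1/\alpha)}$ has true pairwise correlation below $\frac{\epsilon'}{4}$, hence empirical correlation below $\frac{\epsilon'}{2}$, so $L_i$ contains only nodes within distance $l_{\epsilon'}$ and $L=\max_i|L_i|\le d^{l_{\epsilon'}}$ is a constant independent of $p$. Feeding this constant $L$ into the bound $O(np^2+npL^{D_1+D_2+1})$ of Fact~\ref{result:alg2} collapses the second term to $O(np)$, leaving $O(np^2)$.

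The main obstacle I anticipate is bookkeeping rather than any single hard estimate: one must reconcile the competing tolerances (the $\frac{\epsilon'}{8}$ needed for the correlation step, the $\frac{\epsilon}{4}$ needed for the probability test, and the $\delta$-dependent degradation $\frac{2\gamma}{\delta}$) into one admissible $\gamma$, and then confirm that the constant bound $L\le d^{l_{\epsilon'}}$ holds on the \emph{same} high-probability event as the concentration statements, so that correctness and the $O(np^2)$ complexity are guaranteed simultaneously by a single union bound.
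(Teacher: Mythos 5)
Your proposal is correct and follows essentially the same route as the paper, which proves this corollary precisely by combining Theorem~\ref{thm:girth_loose} (the $(1,0,\epsilon)$-loose connectivity), Lemma~\ref{lem:girth_correlation} (pairwise non-degeneracy), Fact~\ref{result:alg2}, and Lemma~\ref{lem:concentration}, with the bound $L\leq d^{l_{\epsilon'}}$ obtained from the correlation-decay estimate of Lemma~\ref{lem:bdd_correlation_decay} exactly as you describe. Your constant bookkeeping ($4\gamma\leq\frac{\epsilon'}{8}$, $\frac{2\gamma}{\delta}\leq\frac{\epsilon}{8}$ via the triangle inequality for the probability test, and $\gamma\leq\frac{\delta}{2}$) matches the stated choice of $\gamma$ and the displayed sample-size threshold.
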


\subsection{\er Random Graph $\Gp$ and Correlation Decay}

We assume the graph $G$ is generated from the prior $\mathcal{G}(p, \frac{c}{p})$ in which each edge is in $G$ with probability $\frac{c}{p}$ and the average degree for each node is $c$. Because the random graph has unbounded maximum degree, we cannot lower bound for the probability of a finite size set of random variables by a constant, for all $p$. To get good sample complexity, we use the mutual information test in our algorithm. Combining Theorem~\ref{thm:random_loose_nonneighbor}, Theorem~\ref{thm:random_loose_neighbor}, Fact~\ref{result:alg1} and Lemma~\ref{lem:concentration}, we get the following result.

\begin{corollary}\label{thm:random_alg}
  Assume $c\tanh{J_{\max}}<1$. There exists a constant $\epsilon>0$ such that, for $\gamma = \left(\frac{\epsilon}{32^2}\right)^2\wedge\frac{1}{64}<1$, if $n>\frac{2\big[(5+\alpha)\log p+5\log2\big]}{\gamma^2}$, the algorithm $\alg_I(2, 1, \epsilon)$ recovers the graph $G$ almost always. The running time of the algorithm is $O(np^5)$.\qed
\end{corollary}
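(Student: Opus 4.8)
The plan is to obtain the corollary as a direct assembly of the structural facts about $\Gp$ (Theorems~\ref{thm:random_correlation_decay}, \ref{thm:random_loose_nonneighbor} and \ref{thm:random_loose_neighbor}), the uniform mutual-information concentration bound (Lemma~\ref{lem:concentration}, part 3), and the correctness-and-complexity guarantee of the basic algorithm (Fact~\ref{result:alg1}). The only genuine work is to verify that these pieces interlock with the stated constants and set sizes, and to layer the two probabilistic statements correctly.

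First I would show that, almost always, the Ising model on $G\sim\Gp$ is $(2,1,\epsilon)$-loosely connected with respect to the mutual information test for a suitable constant $\epsilon>0$, i.e. that Definition~\ref{def:loose} holds. The driving fact is Theorem~\ref{thm:random_correlation_decay}(2): there are at most two paths shorter than $\gamma_p$ hops between any pair of nodes. For a non-edge $(i,j)\notin E$, these at most two short paths can be blocked by a set $S$ with $|S|\leq 2$; then for every $T$ with $|T|\leq 1$ the set $S\cup T$ (of size at most $3$) still separates all short paths, so Theorem~\ref{thm:random_loose_nonneighbor} applied to $S\cup T$ gives $I(X_i;X_j|X_S,X_T)=o(p^{-2\kappa})$, which drops below $\epsilon/4$ for $p$ large. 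For an edge $(i,j)\in E$, at most one short path other than the direct edge survives; given any $S$ with $|S|\leq 2$ I can choose $T$ with $|T|\leq 1$ so that $S\cup T$ (size at most $3$) separates this single non-direct short path, whereupon Theorem~\ref{thm:random_loose_neighbor} yields $I(X_i;X_j|X_S,X_T)=\Omega(1)$. Fixing $\epsilon$ to be a constant below this $\Omega(1)$ lower bound makes both clauses of Definition~\ref{def:loose} hold simultaneously, almost always.

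Next I would calibrate the sampling bound. Applying Lemma~\ref{lem:concentration}(3) with $D_1=2$, $D_2=1$ and $|\mathcal{X}|=2$, so that $|\mathcal{X}|^{D_1+D_2+2}=32$ and $D_1+D_2=3$, gives the uniform estimate $|\hat I(X_i;X_j|X_S)-I(X_i;X_j|X_S)|<256\sqrt{\gamma}$ over all $i,j$ and all $|S|\leq 3$. To push this below the threshold $\epsilon/4$ demanded by Fact~\ref{result:alg1}, I set $\gamma=(\epsilon/32^2)^2$, using $4\cdot 256=32^2$, while the lemma's own hypothesis $\gamma\leq \frac{1}{2|\mathcal{X}|^{5}}=\frac{1}{64}$ produces the stated $\gamma=(\epsilon/32^2)^2\wedge\frac{1}{64}$. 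Since the first (unrestricted) algorithm searches over all nodes, $L=p$, and substituting $(D_1+D_2+1)\log L=4\log p$ and $(D_1+D_2+2)\log 2=5\log 2$ into the sample-size condition of Lemma~\ref{lem:concentration}(3) reproduces exactly $n>\frac{2[(5+\alpha)\log p+5\log 2]}{\gamma^2}$.

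Finally I would conclude via Fact~\ref{result:alg1}: on the almost-always event that $G$ is $(2,1,\epsilon)$-loosely connected, intersected with the probability-$(1-c_3/p^\alpha)$ event that all empirical mutual informations concentrate within $\epsilon/4$, the algorithm $\alg_I(2,1,\epsilon)$ recovers $G$ correctly, with running time $O(np^{D_1+D_2+2})=O(np^5)$. I expect the main obstacle to be purely organizational rather than analytical: carefully discharging the set-size accounting in the neighbor case—checking that for every admissible $S$ with $|S|\leq 2$ there is an admissible $T$ with $|T|\leq 1$ making $S\cup T$ a valid size-$\leq 3$ separator of the lone non-direct short path—and then composing the random-graph "almost always" statement with the sampling "high-probability" statement so that the final guarantee is of the almost-always form prescribed for random graphs in Section~\ref{sec:prelim}.
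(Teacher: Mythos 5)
Your proposal is correct and matches the paper's own (implicit) proof, which simply combines Theorems~\ref{thm:random_loose_nonneighbor} and~\ref{thm:random_loose_neighbor} (together with the two-short-paths bound of Theorem~\ref{thm:random_correlation_decay}) to establish $(2,1,\epsilon)$-loose connectivity, then applies Lemma~\ref{lem:concentration}(3) with $D_1=2$, $D_2=1$, $|\mathcal{X}|=2$, $L=p$ and Fact~\ref{result:alg1}; your constant accounting ($256\sqrt{\gamma}<\epsilon/4$ giving $\gamma=(\epsilon/32^2)^2$, the hypothesis $\gamma\leq\frac{1}{64}$, and the $(5+\alpha)\log p+5\log 2$ sample bound) is exactly what the stated constants encode. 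The additional care you take in enlarging separators to $S\cup T$ of size at most $3$ and in intersecting the almost-always graph event with the high-probability concentration event is a faithful elaboration of details the paper leaves unstated.
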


\subsection{Sample Complexity}
In this subsection, we briefly summarize the number of samples required by our algorithm. According to the results in this section and the next section, $C\log p$ samples are sufficient in general, where the constant $C$ depends on the parameters of the model. When the Ising model is on a bounded degree graph with maximum degree $d$, the constant $C$ is of order $\exp(-O(d+d^2\Jmax))$. In particular, if the Ising model is in the correlation decay regime, then $d\Jmax = O(1)$ and the constant $C$ is of order $\exp(-O(d))$. When the Ising model is on a \er random graph $\Gp$ and is in the correlation decay regime, then the constant $C$ is lower bounded by some absolute constant independent of the model parameters. 

\section{Computational Complexity for Ferromagnetic Ising Models}\label{sec:complexity_ferro}

Ferromagnetic Ising models are Ising models in which all the edge coefficients $J_{ij}$ are nonnegative. We say $(i, j)$ is an edge if $J_{ij}>0$. One important property of ferromagnetic Ising models is association, which characterizes the positive dependence among the nodes.

\begin{definition}\label{def:association}\cite{Association}
  We say a collection of random variables $X = \\(X_1, X_2, \dots, X_n)$ is associated, or
  the random vector $X$ is associated, if
  \begin{align*}
    \mathrm{Cov}(f(X), g(X))\geq 0
  \end{align*}
  for all nondecreasing functions $f$ and $g$ for which $\E f(X), \E g(X), \E f(X)g(X)$
  exist. \qed
\end{definition}

\begin{prop}\label{prop:association_ferro}
\cite{Association_Ising} The random vector $X$ of a ferromagnetic Ising model (possibly with external fields) is
associated. \qed
\end{prop}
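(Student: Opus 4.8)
The plan is to deduce the result from the FKG inequality. Equip the state space $\mathcal{X}^p = \{-1,+1\}^p$ with the coordinatewise partial order, so that for $x,y$ the join $x\vee y$ and the meet $x\wedge y$ are taken componentwise. With this order $\{-1,+1\}^p$ is a finite distributive lattice, and the notion of ``nondecreasing function'' in Definition~\ref{def:association} is exactly monotonicity with respect to this order. The FKG inequality states that a strictly positive probability mass function $P$ on such a lattice is positively associated --- i.e. $\mathrm{Cov}(f(X),g(X))\geq 0$ for all nondecreasing $f,g$ --- provided $P$ satisfies the FKG lattice condition (log-supermodularity)
\begin{align*}
  P(x\vee y)\,P(x\wedge y)\ \geq\ P(x)\,P(y)\quad\text{for all }x,y.
\end{align*}
Since the Ising pmf satisfies $P(x)>0$ for all $x$, it remains only to verify this lattice condition.

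Taking logarithms and using that $Z$ cancels, the lattice condition is equivalent to supermodularity of the Hamiltonian $H(x) = \sum_{(i,j)\in E} J_{ij}x_ix_j + \sum_i h_ix_i$, namely $H(x\vee y)+H(x\wedge y)\geq H(x)+H(y)$. I would check this term by term. For the external-field part, the identity $(x\vee y)_i + (x\wedge y)_i = x_i + y_i$ holds coordinatewise, so each linear term $h_ix_i$ is in fact modular and contributes equality regardless of the sign of $h_i$; this is why external fields are harmless. For each pairwise term it suffices to show that the product map $(s,t)\mapsto st$ is supermodular on $\{-1,+1\}^2$, i.e.
\begin{align*}
  \max(a,c)\max(b,d)+\min(a,c)\min(b,d)\ \geq\ ab+cd,
\end{align*}
where $a=x_i,\,b=x_j,\,c=y_i,\,d=y_j$. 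This is the elementary fact that raising one coordinate increases $st$ by an amount nondecreasing in the other coordinate (the discrete mixed difference is a nonnegative product), and on a two-point chain it can be verified directly in the few cases. Because $J_{ij}\geq 0$ in a ferromagnetic model, multiplying by $J_{ij}$ preserves the inequality, so every pairwise term is supermodular. Summing the modular linear terms and the supermodular pairwise terms shows $H$ is supermodular, which establishes the lattice condition and hence association.

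I expect no serious obstacle; the content lies entirely in recognizing the FKG framework and pinpointing where the ferromagnetic hypothesis enters. The sole place the sign assumption $J_{ij}\geq 0$ is used is to preserve the direction of the pairwise supermodularity inequality --- had some $J_{ij}$ been negative, that term would be submodular and the lattice condition could fail, which is exactly why association is special to the ferromagnetic case. For completeness one would also note the two structural prerequisites of FKG that hold automatically here: strict positivity $P(x)>0$, and distributivity of the hypercube lattice $\{-1,+1\}^p$ under the coordinatewise order. Since the statement is classical, in the paper it is cited to \cite{Association_Ising}; the sketch above is the standard FKG argument underlying that reference.
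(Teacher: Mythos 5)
The paper gives no proof of Proposition~\ref{prop:association_ferro} at all---it is stated with \verb|\qed| and a citation to the classical literature---and your FKG argument is exactly the standard proof underlying that cited result, so your proposal is correct and consistent with the paper's treatment. Every step checks out: the linear terms are modular since $(x\vee y)_i+(x\wedge y)_i=x_i+y_i$ coordinatewise, the product $(s,t)\mapsto st$ is supermodular on $\{-1,+1\}^2$ (with equality unless the two configurations disagree in both coordinates, where the inequality reads $2\geq b-d$), the hypothesis $J_{ij}\geq 0$ is used precisely and only to preserve that inequality, and strict positivity of $P$ together with distributivity of the hypercube lattice are the right structural prerequisites for invoking FKG.
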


A useful consequence of the Ising model being associated is as follows.
\begin{corollary}\label{cor:ferro_Ising}
  Assume $X$ is a zero field ferromagnetic Ising model. For any $i, j$, $P(X_i = 1, X_j = 1)\geq\frac{1}{4}\geq P(X_i = 1, X_j = -1)$.
\end{corollary}
\begin{proof}
  See Appendix~\ref{appendix:ferro}.
\end{proof}

Informally speaking, the edge coefficient $J_{ij}>0$ means that $i$ and $j$ are positively dependent over the edge. For any path between $i, j$, as all the edge coefficients are positive, the dependence over the path is also positive. Therefore, the non-direct paths between a pair of neighboring nodes $i, j$ make $X_i$ and $X_j$, which are positively dependent over the edge $(i, j)$, even more positively dependent. This observation has two important implications for our algorithm.
\begin{enumerate}
  \item We do not need to break the short cycles with a set $T$ in order to detect the edges, so the maximization in the algorithm can be removed.
  \item The pairwise non-degeneracy is always satisfied for some constant $\epsilon'$, so we can apply the correlation test to reduce the computational complexity.
\end{enumerate}

\subsection{Bounded Degree Graph}

We assume the graph has maximum degree $d$. We have the following non-degeneracy result for ferromagnetic Ising models.

\begin{lem}\label{lem:bdd_loose_ferro}
  $\forall (i, j)\in E, S\subset V\setminus\{i, j\}$ and $\forall x_S$,
  \begin{align*}
    \max_{x_i, x_j, x_j'}|P(x_i | x_j, x_S)-P(x_i|x_j', x_S)| \geq &
  \frac{1}{16}(1-e^{-4J_{\min}})e^{-4|N_S|J_{\max}}.
  \end{align*}
\end{lem}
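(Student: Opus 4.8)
The plan is to reduce the conditional probability gap to a statement about a zero-field ferromagnetic Ising model on a reduced graph, and then invoke the association-based bound from Corollary~\ref{cor:ferro_Ising}. First I would condition on $X_S = x_S$. By the property of Ising models recorded in the preliminaries, the conditional law $P(\,\cdot\mid x_S)$ is itself an Ising model on $V\setminus S$ with the same edge coefficients $J_{ij}$ (so it remains ferromagnetic) but with modified external fields $h_i + h_i'$, where $h_i' = \sum_{(i,k)\in E,\,k\in S} J_{ik} x_k$. Thus it suffices to lower-bound $\max_{x_i,x_j,x_j'} |P(x_i\mid x_j) - P(x_i\mid x_j')|$ for a ferromagnetic Ising model, now possibly with nonzero external fields, on a graph where $(i,j)$ is still an edge.

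Next I would make the gap explicit. Writing out $P(X_i=1\mid X_j=1) - P(X_i=1\mid X_j=-1)$ and clearing the denominators $P(X_j=1)$ and $P(X_j=-1)$, the quantity becomes
\begin{align*}
  \frac{P(X_i=1,X_j=1)\,P(X_j=-1) - P(X_i=1,X_j=-1)\,P(X_j=1)}{P(X_j=1)\,P(X_j=-1)}.
\end{align*}
The numerator is a $2\times 2$ determinant of joint probabilities, which is positive precisely because association forces $X_i$ and $X_j$ to be positively correlated; the analogue of Corollary~\ref{cor:ferro_Ising} gives a constant lower bound of the form $\tfrac14(1-e^{-4J_{\min}})$ on this positive-correlation gap, coming from the direct edge $(i,j)$ with coefficient $J_{ij}\ge J_{\min}$. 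The denominator is at most $\tfrac14$, so dividing would only help; the real loss comes from the external fields.

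The main obstacle, and the source of the factor $e^{-4|N_S|J_{\max}}$, is controlling the induced fields $h_i'$. These fields can be as large as $|N_S| J_{\max}$ in magnitude (node $i$ or $j$ may have up to $|N_S|$ neighbors in $S$), and a large field can push $P(X_j=\pm1)$ close to $0$ or $1$, shrinking the gap. To handle this I would compare the field-perturbed model to the zero-field one: flipping each induced field $h_i'$ multiplies the unnormalized weight of any configuration by a factor in $[e^{-2|h_i'|}, e^{2|h_i'|}] \subseteq [e^{-2|N_S|J_{\max}}, e^{2|N_S|J_{\max}}]$, so all relevant joint and marginal probabilities are distorted by at most $e^{\pm 2|N_S|J_{\max}}$ relative to the zero-field case. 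Tracking how this distortion propagates through the determinant numerator and the marginal denominators, and applying Corollary~\ref{cor:ferro_Ising} to the zero-field reference model, yields the stated bound with the exponential penalty $e^{-4|N_S|J_{\max}}$ absorbing the worst-case field distortion. The remaining work is the routine bookkeeping of these exponential factors, which I would defer to the appendix.
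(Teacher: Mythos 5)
Your overall architecture matches the paper's proof: condition on $x_S$ to get a ferromagnetic Ising model with induced external fields, lower-bound the gap via the determinant $P(1,1)P(-1,-1)-P(1,-1)P(-1,1)$, and pay an $e^{-4|N_S|J_{\max}}$ penalty for the fields via a comparison with a zero-field reference (where Corollary~\ref{cor:ferro_Ising} supplies the $\tfrac14$ bounds). But the crucial quantitative step is asserted rather than derived, and the order of operations you propose for it would fail. You invoke ``the analogue of Corollary~\ref{cor:ferro_Ising}'' to get a lower bound $\tfrac14(1-e^{-4J_{\min}})$ on the determinant; no such analogue is available. Association (Proposition~\ref{prop:association_ferro}) gives only \emph{nonnegativity} of that determinant, with no dependence on $J_{\min}$ whatsoever. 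Extracting the $(1-e^{-4J_{\min}})$ factor is precisely the nontrivial content of the lemma, and the paper obtains it by first \emph{removing the edge} $(i,j)$ to form the measure $\tilde{Q}$, rewriting the determinant (up to normalization) as $e^{2J_{ij}}\tilde{Q}(1,1)\tilde{Q}(-1,-1)-e^{-2J_{ij}}\tilde{Q}(1,-1)\tilde{Q}(-1,1)$, and applying association to $\tilde{Q}$ \emph{with the induced fields still present}; this drops the negative term exactly and leaves $(e^{2J_{ij}}-e^{-2J_{ij}})\tilde{Q}(1,1)\tilde{Q}(-1,-1)\geq(1-e^{-4J_{\min}})e^{2J_{ij}}\tilde{Q}(1,1)\tilde{Q}(-1,-1)$, a pure product with no cancellation left in it.

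Your alternative route---distort all probabilities multiplicatively by factors in $[e^{-2|N_S|J_{\max}},e^{2|N_S|J_{\max}}]$ back to the zero-field model and apply a quantitative gap there---does not survive the bookkeeping you defer to the appendix, because the determinant is a difference of products with opposite signs and the distortion compounds adversarially. Concretely, set $F=|N_S|J_{\max}$ and let $a=Q^0(1,1)=Q^0(-1,-1)$, $b=Q^0(1,-1)=Q^0(-1,1)$ be the zero-field values; the edge contributes only $a/b\geq e^{2J_{\min}}$ in the worst case, so your method yields
\begin{align*}
  e^{-4F}a^2-e^{4F}b^2\ \geq\ b^2\left(e^{4J_{\min}-4F}-e^{4F}\right),
\end{align*}
which is negative---i.e., vacuous---as soon as $F$ exceeds roughly $J_{\min}/2$, the typical regime since the induced fields scale with $|N_S|J_{\max}$, not $J_{\min}$. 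The fix is the paper's order of operations: apply association first, in the field-perturbed edge-removed model $\tilde{Q}$, so that only the product $\tilde{Q}(1,1)\tilde{Q}(-1,-1)$ remains; only then apply the field-removal comparison of Lemma~\ref{lem:sparse_loose}, $\tilde{Q}(1,1)\geq e^{-2|N_S|J_{\max}}\check{Q}(1,1)\geq\tfrac14 e^{-2|N_S|J_{\max}}$ (this is where the zero-field Corollary~\ref{cor:ferro_Ising} legitimately enters), where multiplicative distortion is harmless. Multiplying the two factors gives $(1-e^{-4J_{\min}})\cdot\tfrac{1}{16}e^{-4|N_S|J_{\max}}$ as claimed.
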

\begin{proof}
  See Appendix~\ref{appendix:ferro}.
\end{proof}

The following theorem justifies the remarks after Corollary~\ref{cor:ferro_Ising} and shows that the algorithm with the preprocessing step $\algpre(d, 0, \epsilon, \epsilon')$ can be used to learn the graph, where $\epsilon, \epsilon'$ are obtained from the above lemma. Recall that $L_i$ is the candidate neighbor set of node $i$ after the preprocessing step and $L = \max_i|L_i|$.

\begin{thm}\label{thm:bdd_alg_ferro}
  Let
  \begin{align*}
    \epsilon = \frac{1}{16}(1-e^{-4J_{\min}})e^{-4d^2J_{\max}}, \quad \epsilon' = \frac{1}{16}(1-e^{-4\Jmin}),
  \end{align*}
  and $\delta$ be defined as in Theorem~\ref{thm:bdd_alg_general}. Let $\gamma = \frac{\epsilon'}{32}\wedge\frac{\epsilon\delta}{16}\wedge
  \frac{\delta}{2}$. If $$n>\frac{2\big[(1+\alpha)\log{p}+(d+1)\log L
    +(d+2)\log2\big]}{\gamma^2},$$
  the algorithm $\algpre_P(d, 0, \epsilon, \epsilon')$ recovers $G$ with probability $1-\frac{c}{p^{\alpha}}$ for some constant
  $c$. The running time of the algorithm is $O(np^2+npL^{d+1})$. If we further assume that $(d-1)\tanh{J_{\max}}<1$, then the running time of the algorithm is $O(np^2)$.
\end{thm}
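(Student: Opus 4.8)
The plan is to present this as a direct application of Fact~\ref{result:alg2} with $D_1 = d$, $D_2 = 0$, and the probability test, so that the work reduces to checking that a ferromagnetic Ising model on a degree-$d$ graph meets every hypothesis of that fact with the stated constants. First I would verify $(d,0,\epsilon)$-loose connectedness in the sense of Definition~\ref{def:loose} specialized to the probability test. For a non-edge $(i,j)\notin E$, taking $S = N_i$ (of size at most $d$) and the only admissible $T = \emptyset$, the local Markov property gives $X_i \perp X_j \mid X_{N_i}$, hence $\Delta(X_i;X_j\mid X_S) = 0 \le \epsilon/4$. For an edge $(i,j)\in E$ and \emph{any} $S$ with $|S|\le d$, Lemma~\ref{lem:bdd_loose_ferro} bounds the probability test below by $\tfrac1{16}(1-e^{-4J_{\min}})e^{-4|N_S|J_{\max}}$; since $|S|\le d$ and the degree is at most $d$ we have $|N_S|\le d^2$, so this is at least $\tfrac1{16}(1-e^{-4J_{\min}})e^{-4d^2J_{\max}} = \epsilon$. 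The crucial point is that this lower bound holds uniformly over all conditioning sets $S$, which is exactly why the maximization over $T$ can be dropped in the ferromagnetic case: association prevents the non-direct paths from cancelling the direct edge, so no cycle-breaking set $T$ is ever needed. Specializing the same lemma to $S = \emptyset$ (so $|N_S| = 0$) yields $\max_{x_i,x_j,x_j'}|P(x_i|x_j)-P(x_i|x_j')|\ge \tfrac1{16}(1-e^{-4J_{\min}}) = \epsilon'$, which is the pairwise non-degeneracy condition~(\ref{equ:pairwise_non_deg}) guaranteeing $N_i\subseteq L_i$.

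Next I would supply the two concentration hypotheses of Fact~\ref{result:alg2}. The probability test queries conditional probabilities on sets $\{i,j\}\cup S$ with $|S|\le D_1+D_2 = d$, so the relevant joint sets have size at most $d+1$; by Lemma~\ref{lem:bdd_a3} these have probability at least the $\delta$ of Theorem~\ref{thm:bdd_alg_general}, which is the Lemma~\ref{lem:bdd_a3} bound evaluated at $|S| = 2d \ge d+1$ and hence a valid (if conservative) lower bound here. With $\gamma = \tfrac{\epsilon'}{32}\wedge\tfrac{\epsilon\delta}{16}\wedge\tfrac{\delta}{2}$ one has $\gamma\le \delta/2\le 1/8$, so both the validity condition $\gamma\le\delta/2$ of part~(2) and the condition $\gamma\le 1/4$ of part~(1) of Lemma~\ref{lem:concentration} hold, and both parts apply at the stated sample size $n$. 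Part~(1) gives $|\hat P(x_i|x_j)-P(x_i|x_j)|<4\gamma\le \epsilon'/8$, the first requirement; part~(2) gives $|\hat P(x_i|x_j,x_S)-P(x_i|x_j,x_S)|<2\gamma/\delta$, and a triangle inequality over the max defining the probability test yields $|\hat\Delta-\Delta|<4\gamma/\delta\le \epsilon/4$, the second requirement. A union bound over the two failure events of Lemma~\ref{lem:concentration} gives overall failure probability $c/p^{\alpha}$. Fact~\ref{result:alg2} then certifies that $\algpre_P(d,0,\epsilon,\epsilon')$ recovers $G$ and runs in time $O(np^2 + npL^{d+1})$.

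Finally, for the refinement under $(d-1)\tanh J_{\max}<1$, I would bound $L = \max_i|L_i|$ by a constant. By Lemma~\ref{lem:bdd_correlation_decay}, a node $j$ at distance $\ell$ from $i$ satisfies $\max_{x_i,x_j,x_j'}|P(x_i|x_j)-P(x_i|x_j')|\le \beta\alpha^{\ell}$, which drops below $\epsilon'/4$ once $\ell \ge \ell_{\epsilon'} = \ln(4\beta/\epsilon')/\ln(1/\alpha)$; combined with the part~(1) deviation bound $4\gamma\le\epsilon'/8$, every such far node falls below the preprocessing threshold $\epsilon'/2$ and is excluded from $L_i$. Hence $L_i$ contains only nodes within distance $\ell_{\epsilon'}$ of $i$, and since the maximum degree is $d$, $L\le d^{\ell_{\epsilon'}} = O(1)$; this collapses $npL^{d+1}$ to $O(np)$ and the running time to $O(np^2)$. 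I expect the only genuinely delicate step to be verifying the single chain of inequalities ensuring that one $\gamma$ simultaneously satisfies the validity conditions of Lemma~\ref{lem:concentration} and all the error thresholds of Fact~\ref{result:alg2}; everything else is an application of the ferromagnetic non-degeneracy bound of Lemma~\ref{lem:bdd_loose_ferro} together with routine bookkeeping.
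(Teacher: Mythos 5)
Your proposal is correct and follows essentially the same route as the paper's proof: establish $(d,0,\epsilon)$-loose connectedness and the pairwise non-degeneracy condition (\ref{equ:pairwise_non_deg}) via Lemma~\ref{lem:bdd_loose_ferro} with $|N_S|\leq d^2$ and $S=\emptyset$ respectively, invoke Fact~\ref{result:alg2} together with Lemma~\ref{lem:concentration}, and bound $L\leq d^{l_{\epsilon'}}$ under correlation decay exactly as argued for Theorem~\ref{thm:girth_alg}. You merely spell out details the paper leaves implicit (the non-edge case via $S=N_i$, the validity of $\delta$ for sets of size $d+1$, and the $\gamma$ constant-chasing), all of which check out.
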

\begin{proof}
  We choose $|S|\leq d$ and $T = \emptyset$ in our algorithm, and we have $|N_S|\leq d^2$ as the maximum degree is $d$. By Lemma~\ref{lem:bdd_loose_ferro}, we have
  \begin{align*}
    \max_{x_i, x_j, x_j', x_S}|P(x_i | x_j, x_S)-P(x_i|x_j', x_S)| \geq \epsilon
  \end{align*}
  for any $|S|\leq d$. Therefore, the Ising model is a $(d, 0, \epsilon)$-loosely connected MRF. Note that Lemma~\ref{lem:bdd_loose_ferro} is applicable to any set $S$ (not necessarily the set $S$ in the conditional independence test). Applying Lemma~\ref{lem:bdd_loose_ferro} again with $S = \emptyset$,
  we get the pairwise non-degeneracy condition
  \begin{align*}
    \max_{x_i, x_j, x_j'}|P(x_i | x_j)-P(x_i|x_j')| \geq \epsilon'.
  \end{align*}
  Combining Fact~\ref{result:alg2} and Lemma~\ref{lem:concentration}, we get the correctness of the algorithm. The running time is $O(np^2+npL^{d+1})$, which is at most $O(np^{d+2})$.

  When $(d-1)\tanh{J_{\max}}<1$, as the Ising model is in the correlation decay regime, $L = \max_i|L_i|\leq d^{l_{\epsilon'}}$ is a constant independent of $p$ as argued for Theorem~\ref{thm:girth_alg}. Therefore, the running time is only $O(np^2)$ in this case.
\end{proof}

\subsection{\er Random Graph $\Gp$ and Correlation Decay}

When the Ising model is ferromagnetic, the result for the random graph is similar to that of a deterministic graph. For each graph sampled from the prior distribution, the dependence over the edges is positive. If $i, j$ are neighbors in the graph, having additional paths between them makes them more positively dependent, so we do not need to block those paths with a set $T$ to detect the edge and set $D_2 = 0$. In fact, we can prove a stronger result for neighbor nodes than the general case. The following result also appears in \cite{Anima1}, but we are unable to verify the correctness of all the steps there and so we present the result here for completeness.

\begin{thm}\label{thm:random_loose_ferro_neighbor}
  $\forall i\in V, \forall j\in N_i$, let $S$ be any set with $|S|\leq 2$, then almost always
  \begin{align*}
    I(X_i;X_j|X_S) = \Omega(1).
  \end{align*}
\end{thm}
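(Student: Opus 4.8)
The plan is to reduce the conditional mutual information to a conditional covariance bound and then use the ferromagnetic (association) structure to argue that conditioning on a small set cannot cancel the dependence carried by the direct edge. First I would invoke the Ising conditioning property: fixing $X_S = x_S$ yields a ferromagnetic Ising model on $V\setminus S$ with the same edge coefficients and induced external fields $h_k' = \sum_{(k,s)\in E,\, s\in S} J_{ks}x_s$. Since $|S|\le 2$ and $|J_{ks}|\le \Jmax$, each induced field satisfies $|h_k'|\le 2\Jmax$, and only nodes adjacent to $S$ receive a nonzero field. Writing $I(X_i;X_j|X_S) = \sum_{x_S} P(x_S)\,I(X_i;X_j|X_S = x_S)$ and noting there are only finitely many values of $x_S$, it suffices to lower bound each summand uniformly in $x_S$. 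For binary variables, a standard quantitative inequality bounds $I(X_i;X_j|X_S=x_S)$ below by an increasing function of $|\mathrm{Cov}(X_i,X_j|X_S=x_S)|$; moreover a lower bound on the covariance automatically forces the conditional marginals away from $0$ and $1$, so the whole problem reduces to a constant lower bound on the conditional covariance.

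Next I would establish the covariance bound using the ferromagnetic structure, starting from the unconditioned zero-field model (as in Corollary~\ref{cor:ferro_Ising}). There the $\pm1$ symmetry forces $\langle X_i\rangle=\langle X_j\rangle=0$, and Griffiths' second inequality (monotonicity of $\langle X_iX_j\rangle$ in the couplings, a standard property of ferromagnetic Ising models) gives $\langle X_iX_j\rangle\ge\tanh J_{ij}\ge\tanh\Jmin$, hence $\mathrm{Cov}(X_i,X_j)\ge\tanh\Jmin$. The reason this survives conditioning on an arbitrary $S$ of size at most two — unlike the general case of Theorem~\ref{thm:random_loose_neighbor}, where $T$ must be chosen to separate the non-direct paths — is association (Proposition~\ref{prop:association_ferro}): every path contributes a nonnegative amount to the correlation, so the at-most-one non-direct short path between the neighbors $i,j$ (Theorem~\ref{thm:random_correlation_decay}(2)) only reinforces the direct-edge dependence, and conditioning on $X_S$, which can at most block that one path, cannot drive the correlation below the single-edge value. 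I would make this precise by comparing the conditional measure to the measure with all couplings except $(i,j)$ switched off, using association to control the sign of the comparison and the correlation-decay estimate of Theorem~\ref{thm:random_correlation_decay}(1) to discard the long paths.

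The main obstacle is the interplay between conditioning and the unbounded maximum degree of $\Gp$. Conditioning on $X_S$ destroys the zero-field symmetry and introduces external fields of mixed sign, so the clean Griffiths argument no longer applies verbatim; simultaneously the maximum degree grows like $\log p/\log\log p$, so the total effective field felt by $i$ or $j$ cannot be bounded by a constant in the worst case. This is exactly the obstruction that makes Lemma~\ref{lem:bdd_loose_ferro}, whose bound degrades like $e^{-4|N_S|\Jmax}$, unusable here. The heart of the argument is therefore to show that, almost always, the bounded and spatially localized induced fields neither saturate $X_i,X_j$ nor cancel the direct-edge correlation. Here I would use that the field acting directly on $i$ has magnitude at most $2\Jmax$, which by a tilting estimate keeps $P(X_i=1\mid X_S=x_S)$ inside a fixed subinterval of $(0,1)$, while the influence on $X_i$ of the remaining induced fields, all sitting on $N_S\setminus\{i\}$, decays with graph distance by Theorem~\ref{thm:random_correlation_decay}(1) and is negligible because, almost always, the neighborhood of $\{i,j\}$ is locally tree-like with at most two short paths between any pair.

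Finally I would collect the pieces: the covariance lower bound together with the marginal non-degeneracy it implies yields a constant lower bound on each summand $I(X_i;X_j|X_S=x_S)$, and summing over the finitely many $x_S$ gives $I(X_i;X_j|X_S)=\Omega(1)$. Since all the structural inputs (correlation decay, the at-most-two short paths, the local tree structure) hold simultaneously for all pairs on a single almost-sure event from Theorem~\ref{thm:random_correlation_decay}, the conclusion holds for every neighboring pair $i,j$ and every $S$ with $|S|\le 2$ at once, almost always, which is exactly the assertion of the theorem.
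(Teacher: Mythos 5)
There is a genuine gap, and it sits exactly where you locate ``the heart of the argument'': your mechanism for showing that the conditioning-induced fields cannot saturate $X_i, X_j$ or cancel the edge does not work. After conditioning on $X_S = x_S$, the induced fields $h_k'$ sit on \emph{all} of $N_S$, which has unbounded size in $\Gp$ and can include nodes at distance one or two from $i$ and $j$ (indeed $j$ itself may carry an induced field). Your proposal to dismiss these fields because their ``influence on $X_i$ decays with graph distance by Theorem~\ref{thm:random_correlation_decay}(1)'' misreads that theorem: it bounds the influence of a node $j$ only when a set $S$ separates all paths shorter than $\gamma_p$ between $i$ and $j$; it says nothing about fields on nodes adjacent to $i$, whose influence is $\Omega(1)$. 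Relatedly, your Griffiths-based claim that conditioning ``cannot drive the correlation below the single-edge value'' is not available: Griffiths' second inequality requires nonnegative external fields, and the induced fields have mixed signs, so the conditional correlation can genuinely drop below $\tanh J_{ij}$ --- it merely stays $\Omega(1)$, and that weaker fact is precisely what must be proved. Your tilting estimate controls only the field acting directly on $i$, not the aggregate effect of the unboundedly many nearby perturbed nodes.

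The paper closes this gap with the self-avoiding-walk tree, which your proposal has no substitute for. By Proposition~\ref{prop:SAW_equality} the conditional probabilities transfer to $\Tsaw$, where the conditioning on $S$ appears as fixed copies of $S$-nodes; almost always at most $2|S|$ non-terminal copies plus at most one terminal node lie within $\gamma_p$ of the root (Lemmas~\ref{lem:random_graph_separator_size} and~\ref{lem:random_graph_SAW_terminal_size}), while the sphere $B(i,\gamma_p;\Tsaw)$ contributes only $|B|(\tanh J_{\max})^{\gamma_p} = o(1)$ since $c\tanh J_{\max}<1$. Iterating the field-attenuation bound $|h_1'|\leq |h_2|\tanh|J_{12}|$ of Lemma~\ref{lem:tree_extern_field} then yields a \emph{constant} bound on the induced field at the root, which is the fact your argument needs and never establishes; this is also why Lemma~\ref{lem:bdd_loose_ferro}, with its $e^{-4|N_S|J_{\max}}$ degradation, is avoided. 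Where ferromagnetism actually enters is different from your use of it: since $S$ need not block the non-direct short path, the SAW tree can contain an extra copy of $j$ near the root, and the paper's monotonicity lemma ($P(x_i = +1|x_S = +1)\geq P(x_i = +1|x_S = -1)$ for ferromagnetic models with fields, proved by flipping one node at a time via association) shows the extra copy only reinforces the dependence --- this makes precise your intuition that every path contributes nonnegatively, but as a statement about copies of $j$ on the tree, not about conditional covariances on the graph. Your outer reduction (decompose over $x_S$, pass from a probability-difference or covariance bound to a mutual information bound) is fine and parallels the paper's appeal to the lower bound of Lemma~\ref{lem:random_cond_prob_lower_bound}, but without the SAW-tree field bound the core estimate is unsupported.
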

\begin{proof}
  See Appendix~\ref{appendix:random}.
\end{proof}

Moreover, the pairwise non-degeneracy condition in Theorem~\ref{thm:bdd_alg_ferro} also holds here. We can thus use algorithm $\algpre(2, 0, \epsilon, \epsilon')$ to learn the graph. Without the pre-processing step, our algorithm is the same as in \cite{Anima1}. We show in the following theorem that using the pre-processing step our algorithm achieves lower computational complexity in the order of $p$.

\begin{thm}
  Assume $c\tanh \Jmax<1$ and the Ising model is ferromagnetic. Let $\epsilon'$ be defined as in Theorem~\ref{thm:bdd_alg_ferro}. There exists a constant $\epsilon>0$ such that, for $\gamma = \frac{\epsilon_1}{32}\wedge\left(\frac{\epsilon_2}{512}\right)^2\wedge\frac{1}{32}<1$,
  if $n>\frac{2\big[(2+\alpha)\log p+3\log L+5\log2\big]}{\gamma^2}$, the algorithm $\algpre_I(2, 0, \epsilon, \epsilon')$ recovers the graph $G$ almost always. The running time of the algorithm is $O(np^2)$.
\end{thm}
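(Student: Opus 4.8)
The plan is to place this model inside the framework of Fact~\ref{result:alg2} with $D_1=2$ and $D_2=0$, so that the whole argument reduces to checking two structural properties—$(2,0,\epsilon)$-loose connectivity and pairwise non-degeneracy—and then feeding the concentration bounds of Lemma~\ref{lem:concentration} into that fact. Because $D_2=0$, the maximization over $T$ disappears, matching the intuition (from association, Proposition~\ref{prop:association_ferro} and Corollary~\ref{cor:ferro_Ising}) that on a ferromagnetic model extra paths only reinforce the direct dependence, so no cycle-breaking set is needed.

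First I would verify loose connectivity almost always. For a non-edge $(i,j)\notin E$, Theorem~\ref{thm:random_correlation_decay}(2) yields at most two paths shorter than $\gamma_p$, so choosing one interior vertex from each gives a separating set $S$ with $|S|\le 2=D_1$; Theorem~\ref{thm:random_loose_nonneighbor} then gives $I(X_i;X_j|X_S)=o(p^{-2\kappa})$, which drops below $\epsilon/4$ for $p$ large. For an edge $(i,j)\in E$, since $D_2=0$ I need the test to exceed $\epsilon$ for \emph{every} conditioning set of size at most two, and this is precisely Theorem~\ref{thm:random_loose_ferro_neighbor}, whose $\Omega(1)$ lower bound furnishes the positive constant that I set as $\epsilon$. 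Next, for the non-degeneracy demanded by the preprocessing step, I would apply Lemma~\ref{lem:bdd_loose_ferro} with $S=\emptyset$: its right-hand side $\frac{1}{16}(1-e^{-4\Jmin})$ does not depend on the graph, so $\max_{x_i,x_j,x_j'}|P(x_i|x_j)-P(x_i|x_j')|\ge\epsilon'$ holds for every edge, exactly as in Theorem~\ref{thm:bdd_alg_ferro}.

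With these two facts, correctness and the sample bound follow mechanically from Fact~\ref{result:alg2}. Lemma~\ref{lem:concentration}(1) controls the correlation test: taking $\gamma\le\epsilon'/32$ makes the empirical correlation accurate to $\epsilon'/8$, which simultaneously retains every true neighbor in $L_i$ and rejects any node whose true correlation is well below $\epsilon'/2$. Lemma~\ref{lem:concentration}(3) controls the mutual-information test: since $8|\mathcal{X}|^{D_1+D_2+2}\sqrt{\gamma}=128\sqrt{\gamma}$, taking $\gamma\le\left(\frac{\epsilon}{512}\right)^2\wedge\frac{1}{32}$ forces $|\hat I-I|<\epsilon/4$. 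Intersecting these constraints reproduces the stated $\gamma=\frac{\epsilon'}{32}\wedge\left(\frac{\epsilon}{512}\right)^2\wedge\frac{1}{32}$ (i.e.\ $\epsilon_1=\epsilon'$, $\epsilon_2=\epsilon$), and a union bound over pairs together with the $\log L$ term from part (3) yields $n>\frac{2[(2+\alpha)\log p+3\log L+5\log 2]}{\gamma^2}$.

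The real work is the running time, which Fact~\ref{result:alg2} reports as $O(np^2+npL^{3})$, so I must show $L=\max_i|L_i|$ satisfies $L^3=O(p)$. The key claim is that the correlation test confines $L_i$ to the ball $B(i,\gamma_p)$: if $d(i,j)>\gamma_p$ there is no path shorter than $\gamma_p$, so $S=\emptyset$ vacuously separates all short paths and Theorem~\ref{thm:random_correlation_decay}(1) bounds the true correlation by $o(p^{-\kappa})$; combined with the $\epsilon'/8$ empirical error above, such a $j$ is excluded. Hence $L\le\max_i|B(i,\gamma_p)|$, and with $\gamma_p=\frac{\log p}{K\log c}$, $K\in(3,4)$, the ball size is $O(c^{\gamma_p})=O(p^{1/K})$ almost always, so $L^3=O(p^{3/K})=o(p)$ because $3/K<1$, giving total running time $O(np^2)$. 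The main obstacle is precisely this almost-sure, uniform-over-$i$ control of $|B(i,\gamma_p)|$; rather than reprove it, I would reuse the branching-process ball-growth estimates already established for Theorem~\ref{thm:random_correlation_decay}.
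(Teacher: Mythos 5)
Your proposal is correct and follows essentially the same route as the paper's proof: correctness from Fact~\ref{result:alg2} with $(D_1,D_2)=(2,0)$ via Theorem~\ref{thm:random_loose_nonneighbor} and Theorem~\ref{thm:random_loose_ferro_neighbor} for loose connectivity, Lemma~\ref{lem:bdd_loose_ferro} with $S=\emptyset$ for pairwise non-degeneracy, Lemma~\ref{lem:concentration} with $8|\mathcal{X}|^{4}\sqrt{\gamma}=128\sqrt{\gamma}$ for the sample bound, and the running time by showing the correlation test confines $L_i$ to $B(i,\gamma_p)$, whose size $O(c^{\gamma_p}\log p)=O(p^{1/3})$ (the $\log p$ factor you dropped is harmless since $3/K<1$) gives $npL^3=O(np^2)$. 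If anything, your invocation of Theorem~\ref{thm:random_loose_ferro_neighbor} for the edge case is the more precise citation, since with $D_2=0$ one needs the $\Omega(1)$ bound for \emph{arbitrary} conditioning sets of size at most two, which the general-Ising Theorem~\ref{thm:random_loose_neighbor} (nominally cited in the paper's proof) only provides for sets separating the non-direct short paths.
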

\begin{proof}
  Combining Theorem~\ref{thm:random_loose_nonneighbor}, Theorem~\ref{thm:random_loose_neighbor}, Fact~\ref{result:alg2}, Lemma~\ref{lem:concentration} and Lemma~\ref{lem:bdd_loose_ferro}, we get the correctness of the algorithm.

  From Theorem~\ref{thm:random_correlation_decay} we know that if $j$ is more than $\gamma_p$ hops away from $i$, the correlation between them decays as $o(p^{-\kappa})$. For the constant threshold
  $\frac{\epsilon'}{2}$, these far-away nodes are excluded from the candidate neighbor set $L_i$ when $p$ is large. It is shown in the proof of \cite[Lemma 2.1]{rapidmixing} that for $\mathcal{G}(p,
  \frac{c}{p})$, the number of nodes in the $\gamma_p$-ball around $i$ is not large with high probability. More specifically, $\forall i\in V, |B(i, \gamma_p)| = O(c^{\gamma_p}\log p)$ almost always, where $B(i, \gamma_p)$ is the set of all nodes which are at most $\gamma_p$ hops away from $i.$ Therefore we get
  \begin{align*}
    L = \max_i|L_i|\leq |B(i, \gamma_p)| = O(c^{\gamma_p}\log p) = O(p^{\frac{1}{K}}\log p) =
    O(p^{\frac{1}{3}}).
  \end{align*}
  So the total running time of algorithm $\alg_I(2, 0, \epsilon, \epsilon')$ is $O(np^2+npL^3) =O(np^2)$.
\end{proof}

\section{Experimental Results}\label{sec:experiment}

In this section, we present experimental results to show that importance of the choice of a non-zero $D_2$ in correctly estimating the edges and non-edges of the underlying graph of a MRF.
We evaluate our algorithm $\alg_I(D_1, D_2, \epsilon)$, which uses the mutual information test and does not have the preprocessing step, for general Ising models on grids and random graphs as illustrated in Figure~\ref{fig:graphs}. In a single run of the algorithm, we first generate the graph $G = (V, E)$: for grids, the graph is fixed, while for random graphs, the graph is generated randomly each time. After generating the graph, we generate the edge coefficients uniformly from $[-\Jmax, -\Jmin]\cup[\Jmin, \Jmax]$, where $\Jmin = 0.4$ and $\Jmax = 0.6$. We then generate samples from the Ising model by Gibbs sampling. The sample size ranges from $400$ to $1000$. The algorithm computes, for each pair of nodes $i$ and $j$,
\begin{align*}
  \hat{I}_{ij} = \min_{|S|\leq D_1}\max_{|T|\leq D_2} \hat{I}(X_i; X_j|X_S, X_T)
\end{align*}
using the samples. For a particular threshold $\epsilon$, the algorithm outputs $(i, j)$ as an edge if $\hat{I}_{ij}>\epsilon$ and gets an estimated graph $\hat{G} = (V, \hat{E})$.
We select $\epsilon$ optimally for each run of the simulation, using the knowledge of the graph, such that the number of errors in $\hat{E}$, including both errors in edges and non-edges, is minimized. The performance of the algorithm in each case is evaluated by the probability of success, which is the percentage of the correctly estimated edges, and each point in the plots is an average over $50$ runs. We then compare the performance of the algorithm under different choices of $D_1$ and $D_2$.

\begin{figure*}[!ht]
  \centering{
  \subfloat{\includegraphics[width=0.33\textwidth]{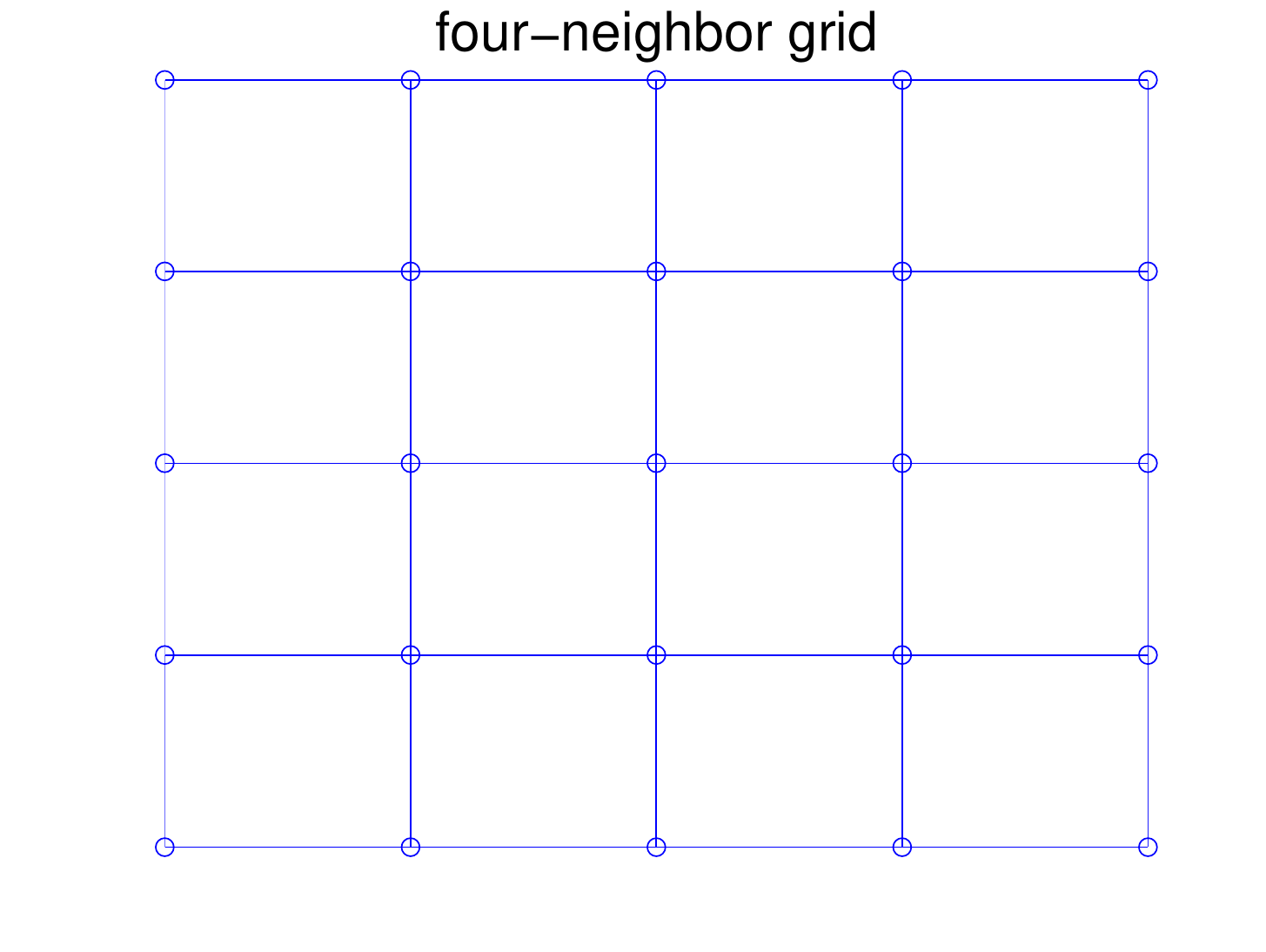}\label{fig:grid4}}\hfil \subfloat{\includegraphics[width=0.33\textwidth]{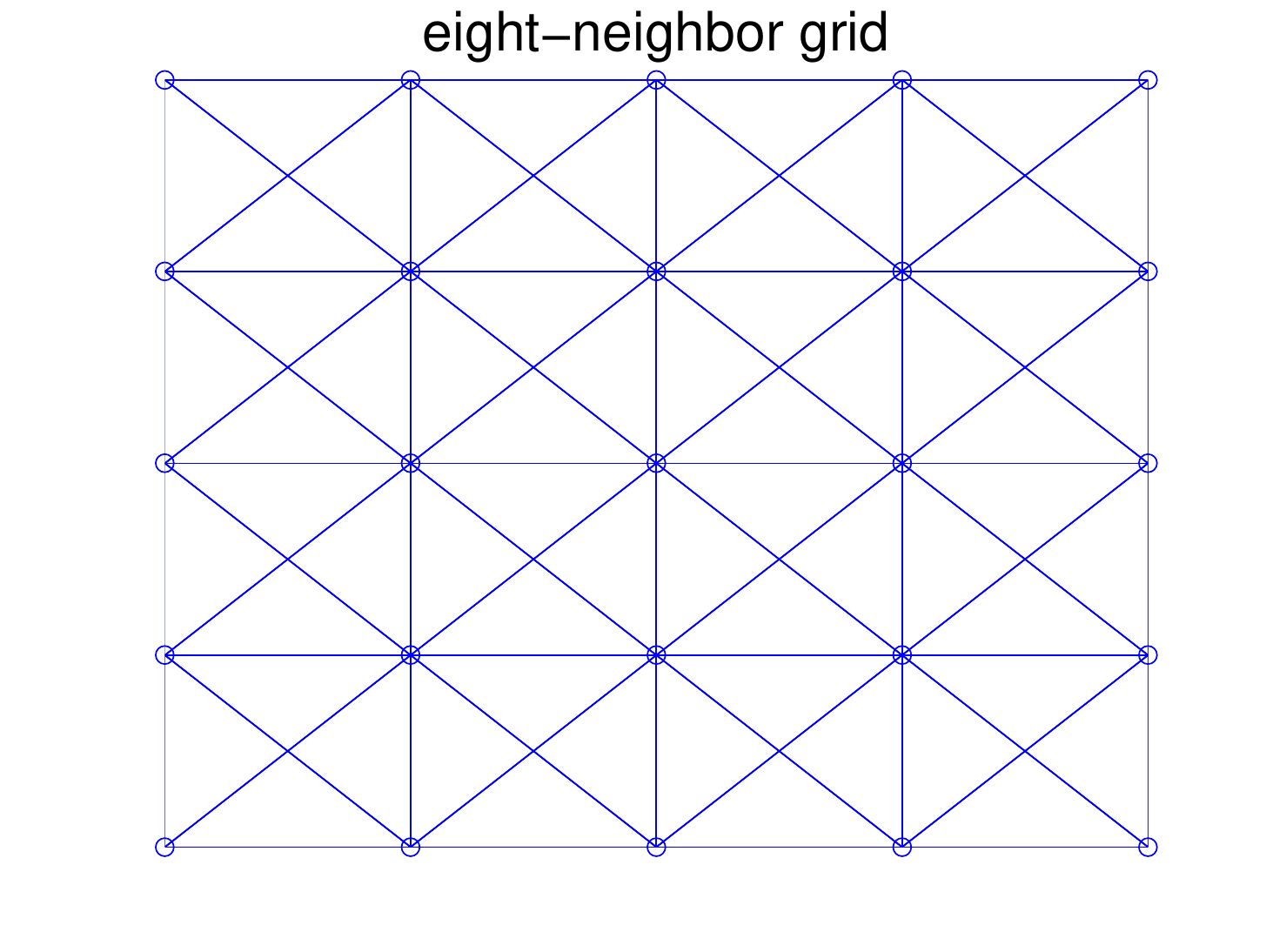}\label{fig:grid8}}\hfil \subfloat{\includegraphics[width=0.33\textwidth]{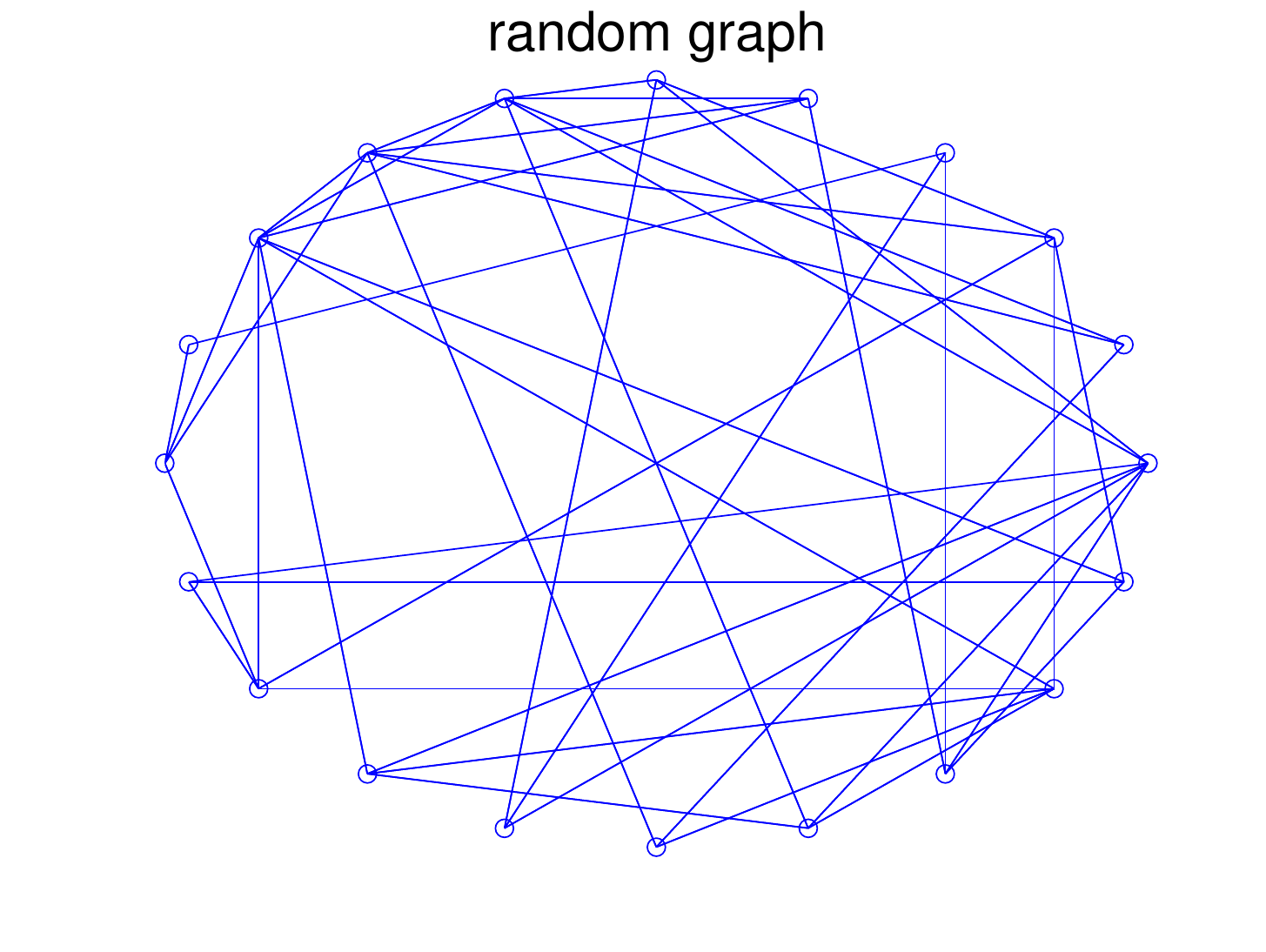}\label{fig:random}}}\\
  \caption{Illustrations of four-neighbor grid, eight-neighbor grid and the random graph. }\label{fig:graphs}
\end{figure*}

The experimental results for the algorithm with $D_1 = 0, \dots, 3$ and $D_2 = 0, 1$ applied to eight-neighbor grids on $25$ and $36$ nodes are shown in Figure~\ref{fig:grid_plot}. We omit the results for four-neighbor grids as the performances of the algorithm with $D_2 = 0$ and $D_2 >0$ are very close. In fact, four-neighbor grids do not have many short cycles and even the shortest non-direct paths are weak for the relatively small $\Jmax$ we choose, therefore there is no benefit using a set $T$ to separate the non-direct paths for edge detection. However, for eight-neighbor grids which are denser and have shorter cycles, the probability of success of the algorithm significantly improves by setting $D_2=1$, as seen from Figure~\ref{fig:grid_plot}. It is also interesting to note that increasing from $D_1 = 2$ to $D_1 = 3$ does not improve the performance, which implies that a set $S$ of size $2$ is sufficient to approximately separate the non-neighbor nodes in our eight-neighbor grids.

\begin{figure*}[!ht]
  \def\x{0.48}
  \centering{
  \subfloat{\includegraphics[width=\x\textwidth]{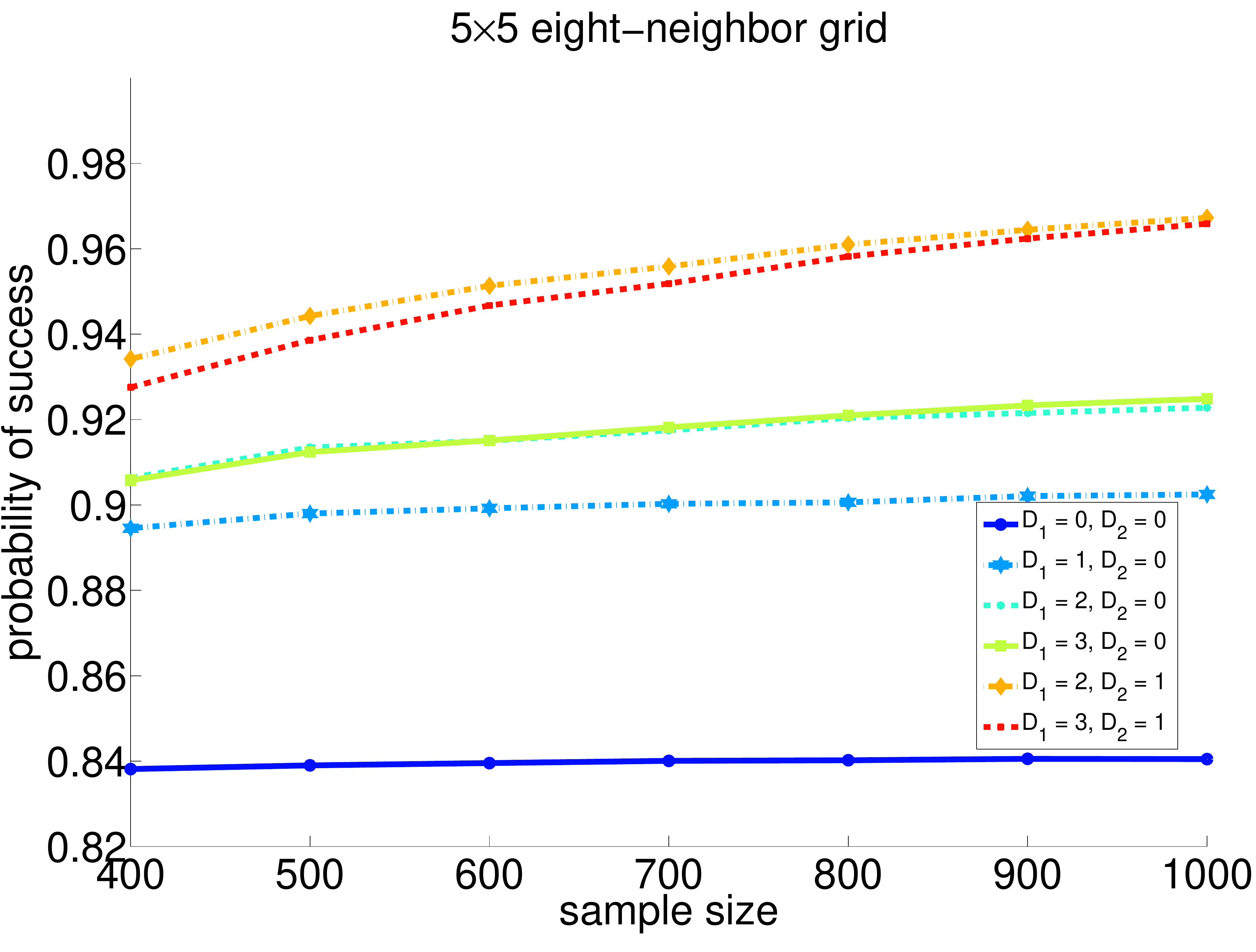}\label{fig:grid_plot_5}}\hfil \subfloat{\includegraphics[width=\x\textwidth]{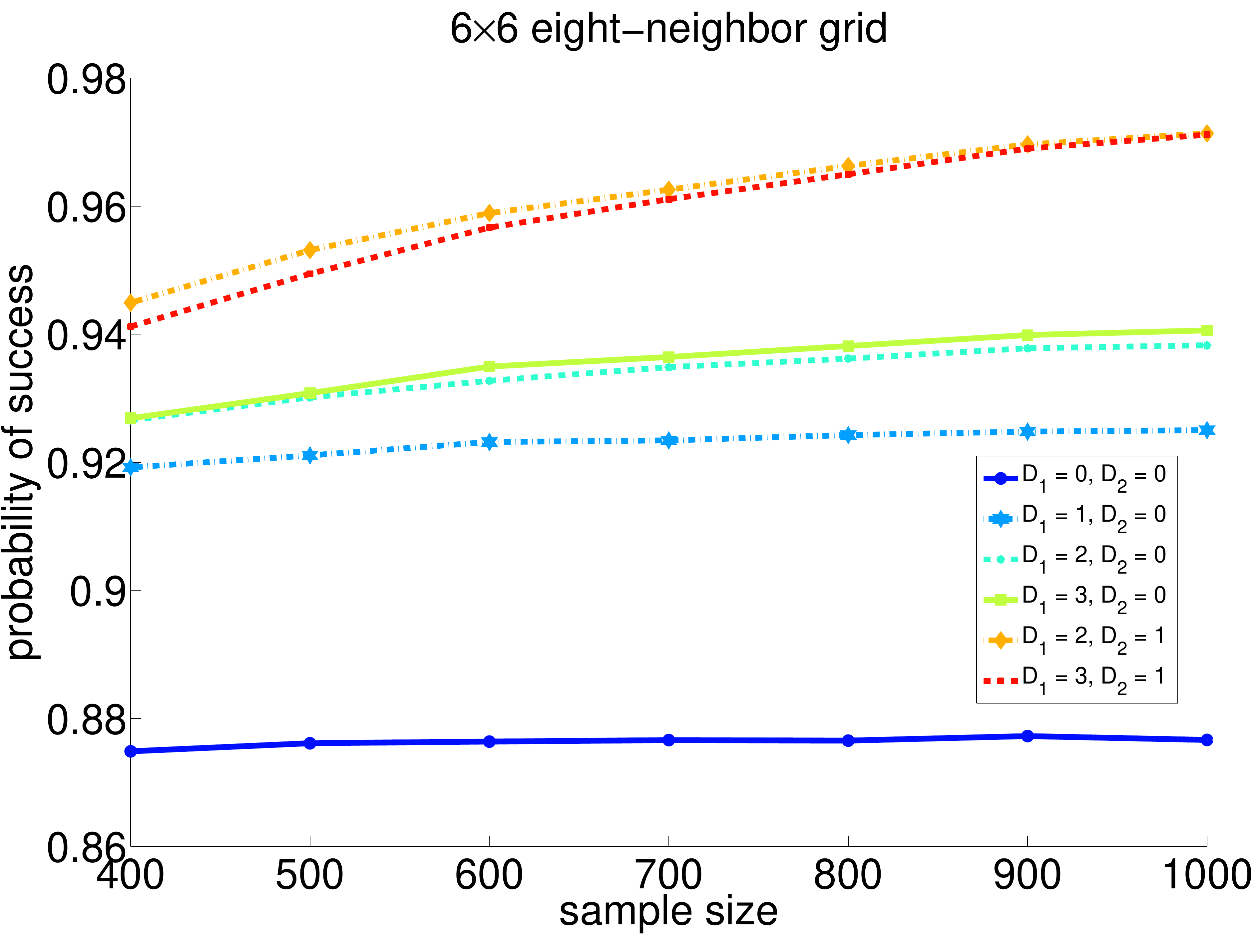}\label{fig:grid_plot_6}}}
  \caption{Plots of the probability of success versus the sample size for $5\times5$ and $6\times6$ eight-neighbor grids with $D_1 = 0, \dots, 3$ and $D_2 = 0, 1$.}\label{fig:grid_plot}
\end{figure*}

The experimental results for the algorithm with $D_1 = 0, \dots, 3$ and $D_2 = 0, 1$ applied to random graphs on $20$ and $30$ nodes are shown in Figure~\ref{fig:rand_plot}. For a random graph on $n$ nodes with average degree $d$, each edge is included in the graph with probability $\frac{d}{n-1}$ and is independent of all other edges. In the experiment, we choose average degree $5$ for the graphs on $20$ nodes and $7$ for the graphs on $30$ nodes. From Figure~\ref{fig:rand_plot}, the probability of success of the algorithm improves a lot when we increase $D_2$ from $0$ to $1$, which is very similar to the result of the eight-neighbor grids. We also note that, unlike the previous case, the algorithm with $D_1 = 3$ does have a better performance than with $D_1 = 2$ as there might be more short paths between a pair of nodes in random graphs.

\begin{figure*}[!ht]
  \def\x{0.48}
  \centering{
  \subfloat{\includegraphics[width=\x\textwidth]{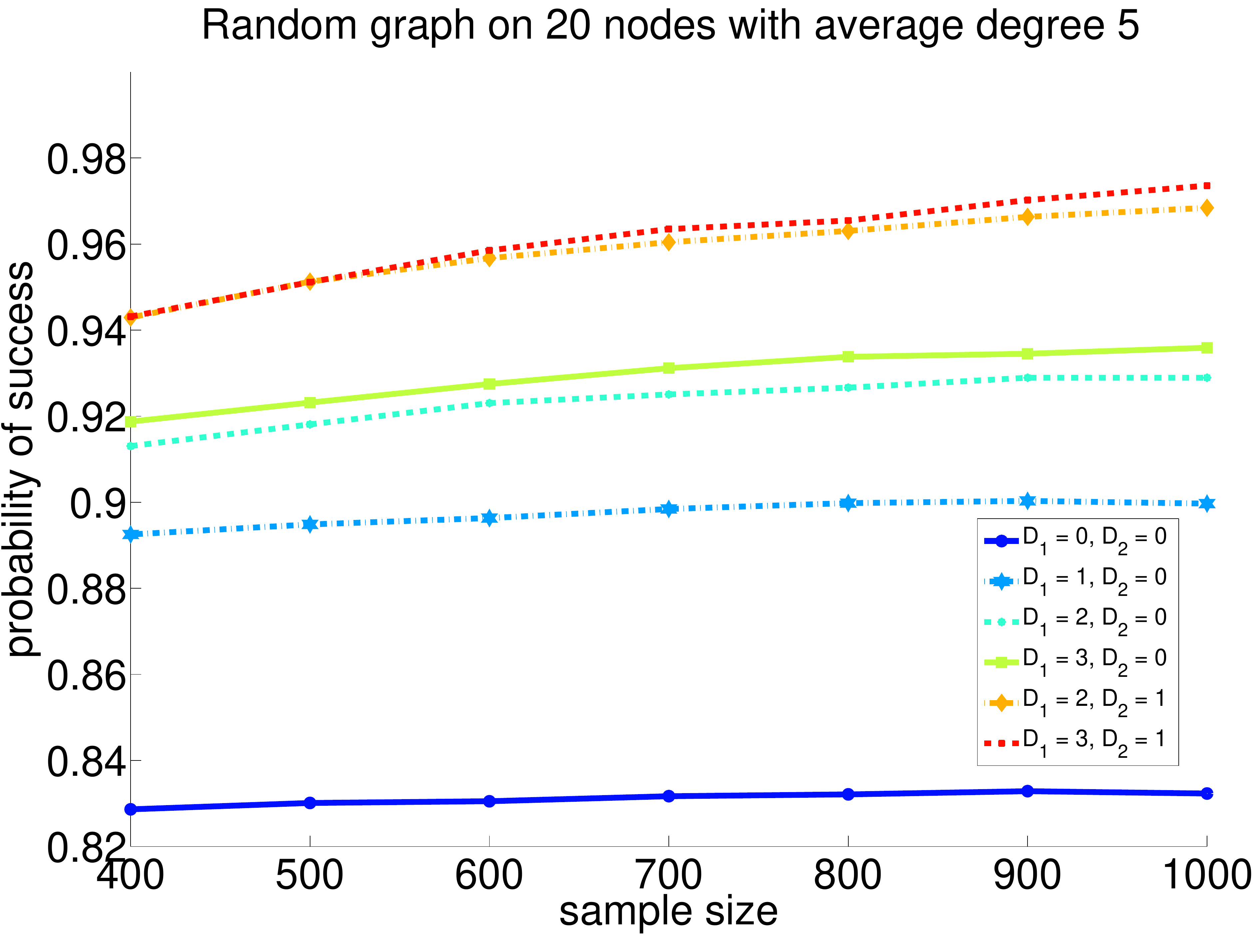}\label{fig:rand_plot_20}}\hfil \subfloat{\includegraphics[width=\x\textwidth]{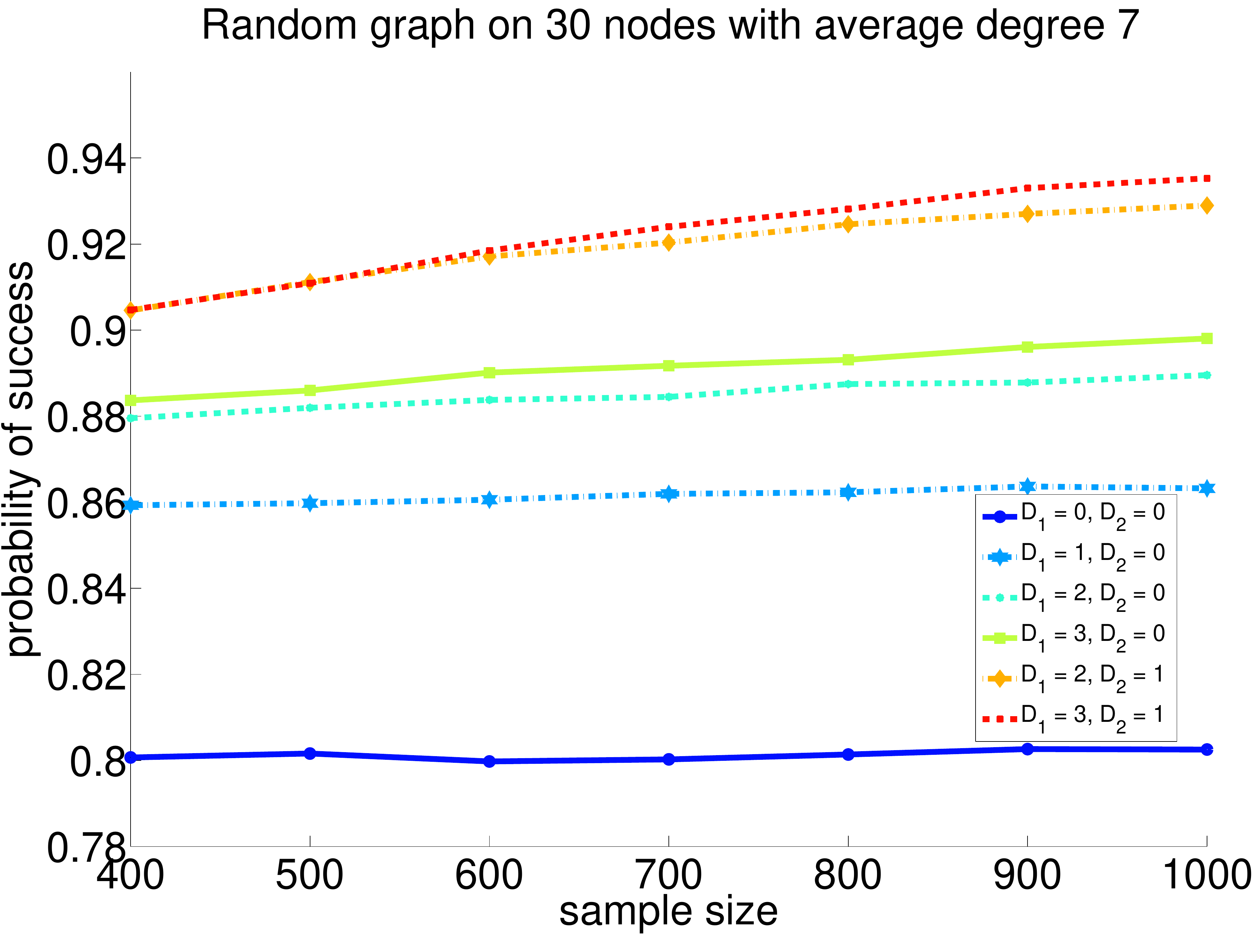}\label{fig:rand_plot_30}}}
  \caption{Plots of the probability of success versus the sample size for random graphs with $D_1 = 0, \dots, 3$ and $D_2 = 0, 1$.}\label{fig:rand_plot}
\end{figure*}

In a true experiment where only the data is available and no prior knowledge of the MRF is available, the choice of $\epsilon$ itself may affect the performance of the algorithm. At this time, we don not have any theoretical results to inform the choice of $\epsilon$. We briefly present a heuristic, which seems reasonable. However, extensive testing of the heuristic is required before we can confidently state that the heuristic is reasonable, which is beyond the scope of this paper. Our proposed heuristic is as follows.

For a given $D_1$ and $D_2$, we compute $\hat{I}_{ij}$ for each pair of nodes $i$ and $j$. If the choice of $D_1$ and $D_2$ is good, $\hat{I}_{ij}$ is expected to be close to $0$ for non-edges and away from $0$ for edges. Therefore, we can view the problem of choosing the threshold $\epsilon$ as a two-class hypothesis testing, where the non-edge class concentrates near $0$ while the edge class is more spread out. If we view $\hat{I}$, the collection of $\hat{I}_{ij}$ for all $i$ and $j$, as samples generated from the distribution of some random variable $Z$, then the hypothesis testing problem can viewed as one of finding the right $\epsilon$ such that the density of $Z$ has a big spike below $\epsilon$. One heuristic is to first estimate a smoothed density function from $\hat{I}$ via kernel density estimation \cite{Book_Stat} and then set $\epsilon$ to be the right boundary of the big spike near $0$.

In order to choose proper $D_1$ and $D_2$ for the algorithm, we can start with $(D_1, D_2) = (0, 0)$. At each step, we run the algorithm with two pairs of values $(D_1+1, D_2)$ and $(D_1, D_2+1)$ separately, and choose the pair that has a more significant change on the density estimated from $\hat{I}$ as the new value for $(D_1, D_2)$. We continue this process and stop increasing $D_1$ or $D_2$ if at some step there is no significant change for either pair of values.

Justifying this heuristic either through extensive experimentation or theoretical analysis is a topic for future research.

\section*{Acknowledgments} We thank Anima Anandkumar and Sreekanth Annapureddy for useful discussions. In particular, we would like to thank Anandkumar for suggesting the use of the SAW tree in the proof of Lemma~\ref{lem:random_cond_prob_lower_bound} and Annapureddy for suggesting the proof of Lemma~\ref{lem:edge}.

\appendix

\section{Bounded Degree Graph}\label{appendix:bdd}
\subsection{Proof of Lemma~\ref{lem:bdd_a3}}
  Let $N_S$ be the neighbor nodes of $S$. Note that each node in $S$ has at most $d$ neighbors in $N_S$. 
  \begin{align*}
    P(x_S) = &\sum_{x_{N_S}}{P(x_{N_S})P(x_S|x_{N_S})}\\
    \geq & \min_{x_S, x_{N_S}}P(x_S|x_{N_S})\\
     = & \min_{x_S,
     x_{N_S}}\frac{\exp(x_S^TJ_{SS}x_S+x_S^TJ_{SN_S}x_{N_S})}{\sum_{x_S'}\exp({x_S'}^TJ_{SS}x_S'+{x_S'}^TJ_{SN_S}x_{N_S})}\\
     \geq & \frac{\min_{x_S, x_{N_S}}\exp(x_S^TJ_{SS}x_S+x_S^TJ_{SN_S}x_{N_S})}{2^{|S|}\max_{x_S',
     x_{N_S}}\exp({x_S'}^TJ_{SS}x_S'+{x_S'}^TJ_{SN_S}x_{N_S})}\\
     \geq &
     \frac{\exp(-|S|^2J_{\max}-|S|dJ_{\max})}{2^{|S|}\exp(|S|^2J_{\max}+|S|dJ_{\max})}\\
     = & 2^{-|S|}\exp(-2(|S|+d)|S|J_{\max}). 
  \end{align*}

\subsection{Correlation Decay and Large Girth}

We assume that the Ising model on the bounded degree graph is further in the correlation decay regime. Both Theorem~\ref{thm:girth_loose} and Lemma~\ref{lem:girth_correlation} immediately follow from the following more general result, which characterizes the conditions under which the Ising model is $(D_1, D_2, \epsilon)$-loosely connected. We will make the connections at the end of this subsection. 

\begin{thm}\label{lem:sparse_loose}
  Assume $(d-1)\tanh{J_{\max}}<1$. Fix $D_1, D_2$.
  Let $h$ satisfy
  \begin{align*}
    \beta \alpha^{h}\leq A\wedge \ln 2,
  \end{align*}
  where $A = \frac{1}{1800}(1-e^{-4\Jmin})e^{-8(D_1+D_2)dJ_{\max}}$, and let $\epsilon =
  48Ae^{4(D_1+D_2)dJ_{\max}}$. Assume that there are at most $D_1$ paths shorter than $h$ between non-neighbor nodes and $D_2$ paths shorter than $h$ between neighboring nodes.
  Then $\forall (i, j)\in E$,
  $$
    \displaystyle\min_{\substack{S\subset V\setminus \{i\cup j\}\\|S|\leq D_1}}\ \max_{\substack{T\subset V\setminus \{i\cup j\}\\|T|\leq
    D_2}}\ \max_{x_i, x_j, x_j', x_S, x_T}|P(x_i|x_j, x_S, x_T)-P(x_i|x_j', x_S,
    x_T)|>\epsilon,
  $$
  and $\forall (i, j)\notin E$,
  $$
    \displaystyle\min_{\substack{S\subset V\setminus \{i\cup j\}\\|S|\leq D_1}}\ \max_{\substack{T\subset V\setminus \{i\cup j\}\\|T|\leq
    D_2}}\ \max_{x_i, x_j, x_j', x_S, x_T}|P(x_i|x_j, x_S, x_T)-P(x_i|x_j', x_S,
    x_T)|\leq\frac{\epsilon}{4}.
  $$
\end{thm}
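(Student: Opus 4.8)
The plan is to treat every conditioning as producing a fresh Ising model and to reduce both claims to the single-edge correlation-decay estimate of Lemma~\ref{lem:bdd_correlation_decay}. Conditioning on $X_{S\cup T}=x_{S\cup T}$ yields an Ising model on $G\setminus(S\cup T)$ with the same edge coefficients (still bounded by $\Jmax$) and maximum degree at most $d$, so Lemma~\ref{lem:bdd_correlation_decay} and Lemma~\ref{lem:bdd_a3} apply verbatim to every conditional model that appears. Both directions hinge on controlling the graph distance between $i$ and $j$ after the appropriate interior vertices of short paths have been deleted: since there are at most $D_1$ short paths between non-neighbours and at most $D_2$ short non-direct paths between neighbours, deleting one interior vertex per path kills all of them, and deleting vertices never shortens a surviving path, so the residual distance is at least $h$.

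For a non-edge $(i,j)$ I would let $S$ consist of one interior vertex from each of the (at most $D_1$) short $i$--$j$ paths, so $|S|\le D_1$. For an arbitrary $T$ with $|T|\le D_2$, the conditional model $P(\cdot\mid x_S,x_T)$ lives on $G\setminus(S\cup T)$, in which every surviving $i$--$j$ path has length at least $h$, hence $d_{G\setminus(S\cup T)}(i,j)\ge h$. Applying Lemma~\ref{lem:bdd_correlation_decay} to this conditional model gives
\begin{align*}
|P(x_i\mid x_j,x_S,x_T)-P(x_i\mid x_j',x_S,x_T)|\le \beta\alpha^{h}\le A\le \tfrac{\epsilon}{4},
\end{align*}
uniformly in $x_i,x_j,x_j'$ and in $T$; this particular $S$ therefore bounds the min--max by $\epsilon/4$.

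For an edge $(i,j)$, fix an arbitrary $S$ with $|S|\le D_1$ and then choose $T$ to contain one interior vertex from each short non-direct $i$--$j$ path surviving the removal of $S$ (at most $D_2$ of them, and automatically disjoint from $S$), so that in $G\setminus(S\cup T)$ the direct edge is the only short path and every other $i$--$j$ path has length at least $h$. Writing $\tilde P=P(\cdot\mid x_S,x_T)$ and letting $\hat P$ be the same model with the edge $(i,j)$ deleted, the factor $e^{J_{ij}x_ix_j}$ depends only on $x_i,x_j$, so marginalising over the remaining variables gives the exact identity
\begin{align*}
\tilde P(x_i,x_j)\propto e^{J_{ij}x_ix_j}\,\hat P(x_i,x_j).
\end{align*}
In $\hat P$ the nodes $i,j$ are at distance at least $h$, so Lemma~\ref{lem:bdd_correlation_decay} gives $|\hat P(x_i\mid x_j)-\hat P(x_i)|\le A$, i.e.\ $\hat P$ is within $A$ of a product. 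Substituting the product approximation turns $\tilde P$ into a two-node Ising model with edge coefficient $J_{ij}$ whose single-node marginals are bounded away from $0$ and $1$ by Lemma~\ref{lem:bdd_a3}; a direct computation as in Proposition~\ref{prop:bdd_loose} then shows that for a suitable $x_i,x_j,x_j'$ the probability test of the idealised model is at least a constant multiple of $uv\,|e^{4J_{ij}}-1|$, and $|e^{4J_{ij}}-1|\ge 1-e^{-4\Jmin}$ regardless of the sign of $J_{ij}$, which is exactly why the constant $(1-e^{-4\Jmin})$ surfaces in $A$ and $\epsilon$.

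The main obstacle---and the place where the specific constants $A$ and $\epsilon=48Ae^{4(D_1+D_2)d\Jmax}$ are forced---is the error propagation in the edge case: I must show that replacing the true $\hat P(x_i,x_j)$ by its product approximation perturbs the probability test of $\tilde P$ by at most a controlled multiple of $A$, and that the idealised value still exceeds this perturbation by more than $\epsilon$. This needs quantitative lower bounds on the small conditional marginals entering the normalisation, supplied by Lemma~\ref{lem:bdd_a3} applied to sets containing $i,j$ together with the at most $D_1+D_2$ conditioned neighbours (the source of the $e^{\pm(D_1+D_2)d\Jmax}$ factors), plus a Lipschitz estimate for the map $\hat P(x_i,x_j)\mapsto\tilde P(x_i\mid x_j)$. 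The numerical constants $48$ and $1800$ are then chosen precisely so that \emph{idealised value} $-$ \emph{error} $>\epsilon$. By comparison, the non-edge direction and the combinatorial path-separation bookkeeping are routine.
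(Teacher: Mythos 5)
Your skeleton matches the paper's own proof almost step for step: the paper likewise conditions on $R=S\cup T$ to obtain an Ising model $Q$ on $G\setminus R$ with induced external fields, removes the edge $(i,j)$ to get $\tilde{Q}$ with $Q(x_i,x_j)\propto e^{J_{ij}x_ix_j}\tilde{Q}(x_i,x_j)$, applies correlation decay at distance $\geq h$ in the edge-deleted model, and arrives at the edge-case lower bound $(1-e^{-4\Jmin})\tilde{Q}(1,-1)\tilde{Q}(-1,1)-2\prealpha\alpha^h$; your combinatorial choice of $S$ and $T$ (one interior vertex per surviving short path) is the same as the paper's, and your non-edge direction --- correlation decay applied directly to the conditional model, giving $\prealpha\alpha^h\leq A\leq \epsilon/4$ --- is if anything cleaner than the paper's quotient computation, resting on the same implicit fact the paper invokes, namely that Lemma~\ref{lem:bdd_correlation_decay} survives conditioning (the paper states that $Q$ and $\tilde{Q}$ ``satisfy the same correlation decay property as $P$'').

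The genuine gap is exactly at the step you flag as the crux: lower-bounding the pair probabilities of the edge-deleted conditional model. You propose Lemma~\ref{lem:bdd_a3}, but (a) that lemma is stated and proved for the original zero-field model $P$, not for the conditional, edge-deleted model $\hat{P}$ whose quantities $\hat{P}(1,-1)\hat{P}(-1,1)$ (your marginals $u,v$) you actually need, and (b) even granting the transfer via $P(x_i,x_j\mid x_S,x_T)\geq P(x_i,x_j,x_S,x_T)$, its bound $2^{-|S|}\exp(-2(|S|+d)|S|\Jmax)$ with $|S|=D_1+D_2+2$ carries the exponent $-2(D_1+D_2+2)(D_1+D_2+2+d)\Jmax$, strictly weaker than the $e^{-2(D_1+D_2)d\Jmax}$ per pair probability that the stated $\epsilon=48Ae^{4(D_1+D_2)d\Jmax}$ demands. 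This is not a constant-chasing quibble: for $d=2$ the hypothesis $(d-1)\tanh\Jmax<1$ places no upper bound on $\Jmax$, so the extra $e^{-O(d\Jmax)}$ deficit makes your claimed inequality ``idealised value $-$ error $>\epsilon$'' false with the theorem's constants; your route proves the dichotomy only for a strictly smaller $\epsilon$. The paper's missing idea is a zero-field comparison rather than Lemma~\ref{lem:bdd_a3}: since at most $(D_1+D_2)d$ edges cross from $R$ into $W$, the induced fields satisfy $\|h_W\|_1\leq(D_1+D_2)d\Jmax$; stripping \emph{all} fields yields a zero-field model $\check{Q}$ whose spin-flip symmetry pins $\check{Q}(X_i=1)=\frac{1}{2}$ exactly, one pays only $\tilde{Q}(x_U)\geq e^{-2(D_1+D_2)d\Jmax}\check{Q}(x_U)$, and a second use of correlation decay in $\check{Q}$ gives $\check{Q}(1,-1)\geq \frac{1}{2(1+e^{\prealpha\alpha^h})}\geq\frac{1}{6}$, whence $\tilde{Q}(1,-1)\geq\frac{1}{6}e^{-2(D_1+D_2)d\Jmax}$ and the constants $48$ and $1800$ close. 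Your tilt identity and path-separation bookkeeping are fine; to prove the theorem as stated you must replace the invocation of Lemma~\ref{lem:bdd_a3} by this argument.
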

\begin{proof}

  First consider $(i, j)\in E$. Without loss of generality, assume $J_{ij}>0$. By the assumption that there are at most $D_2$ paths shorter than $h$ between neighboring nodes, there exists $T'\subset N_i, |T'|\leq  D_2$ such that, when the set $T'$ is removed from the graph, the length of any path from $i$ to  $j$ is no less than $h$. For any $S$, let $T = T'\setminus S$. To simplify the
  notation, let $R = S\cup T$ and $W = V\setminus R$. For any value $x_R$, let $Q$ be the joint probability of $X_W$ conditioned on $X_R =  x_R$, i.e., $Q(X_W) = P(X_W|x_R)$. $Q$ has the same edge coefficients for the  unconditioned nodes, but is not zero-field as conditioning induces external fields. Let $\tilde{Q}$  denote the joint probability when edge $(i, j)$ is removed from $Q$. We note that $Q$ and $\tilde{Q}$ satisfy the same correlation  decay property as $P$, so
  \begin{align*}
    \tilde{Q}(1, 1) = &\tilde{Q}(X_i = 1)\tilde{Q}(X_j = 1|X_i = 1)\\
    \geq &\tilde{Q}(X_i = 1)[\tilde{Q}(X_j = 1|X_i = -1)-\prealpha\alpha^{l_{ij}}]\\
    \geq &\tilde{Q}(X_i = 1)[\tilde{Q}(X_j = 1|X_i = -1)-\prealpha\alpha^h]
  \end{align*}
  Similarly, $\tilde{Q}(-1, -1)\geq \tilde{Q}(X_i = -1)[\tilde{Q}(X_j = -1|X_i =  1)-\prealpha\alpha^h]$. Then,
  \begin{align*}
    &\tilde{Q}(1, 1)\tilde{Q}(-1, -1)\\
    \geq &\tilde{Q}(X_i = 1)\tilde{Q}(X_i = -1)[\tilde{Q}(X_j = 1|X_i = -1)-\prealpha\alpha^h]\\&[\tilde{Q}(X_j = -1|X_i =
    1)-\prealpha\alpha^h]\\
    \geq & \tilde{Q}(1, -1)\tilde{Q}(-1, 1)-2\prealpha\alpha^h
  \end{align*}
  Using the above inequality, we have the following lower bound on the $P$-test quantity.
  
  \begin{align*}
    &\max_{x_i, x_j, x_j'}|P(x_i | x_j, x_S, x_T)-P(x_i|x_j', x_S, x_T)|\\
    \geq &\left|Q(x_i = 1 |x_j = 1)-Q(x_i = 1 | x_j = -1)\right|\\
    = & \left|\frac{Q(x_i = 1, x_j = 1)}{Q(x_j = 1)}-\frac{Q(x_i = 1, x_j = -1)}{Q(x_j =
    -1)}\right|\\
    = & \left|\frac{Q(x_i = 1, x_j = 1)Q(x_i = -1, x_j = -1)-Q(x_i = 1, x_j = -1)Q(x_i = -1, x_j = 1)}{Q(x_j = 1)Q(x_j =
    -1)}\right|\\
    =  & \frac{\left|e^{2J_{ji}}\tilde{Q}(1, 1)\tilde{Q}(-1, -1)-e^{-2J_{ji}}\tilde{Q}(1, -1)\tilde{Q}(-1,
    1)\right|}{\left(e^{J_{ji}}\tilde{Q}(1, 1)+e^{-J_{ji}}\tilde{Q}(-1, 1)\right)\left(e^{-J_{ji}}\tilde{Q}(1, -1)+e^{J_{ji}}\tilde{Q}(-1, -1)\right)}\\
    \geq & e^{-2J_{ij}}\left[(e^{2J_{ij}}-e^{-2J_{ij}})\tilde{Q}(1, -1)\tilde{Q}(-1,
    1)-2e^{2J_{ij}}\prealpha\alpha^h\right]\\
    = &(1-e^{-4J_{ij}})\tilde{Q}(1, -1)\tilde{Q}(-1,
    1)-2\prealpha\alpha^h\\
    \geq &(1-e^{-4J_{\min}})\tilde{Q}(1, -1)\tilde{Q}(-1,
    1)-2\prealpha\alpha^h.
  \end{align*}

  Let $\check{Q}$ denote the joint probability when all the external field terms are removed from $\tilde{Q}$; i.e.,
  \begin{align*}
    \tilde{Q}(X_W)\propto\check{Q}(X_W)e^{h_W^TX_{W}}
  \end{align*}
  As there are at most $(D_1+D_2)d$ edges between $R$ and $W$, we have $||h_W||_1\leq (D_1+D_2)dJ_{\max}$. Hence, for any subset $U\subset W$ and value $x_U$,
  \begin{align*}
    \tilde{Q}(x_U) = &\frac{\tilde{Q}(x_U)}{\sum_{x_U'}{\tilde{Q}(x_U')}}\\
    = & \frac{\sum_{x_{W\setminus U}}{\check{Q}(x_U, x_{W\setminus U})e^{h_{W}^Tx_{W}}}}
    {\sum_{x_U'}{\sum_{x_{W\setminus U}'}{\check{Q}(x_U', x_{W\setminus U}')e^{h_{W}^Tx_{W}'}}}}\\
    \geq & \frac{\check{Q}(x_U)e^{-(D_1+D_2)dJ_{\max}}}{e^{(D_1+D_2)dJ_{\max}}}\\
    = &e^{-2(D_1+D_2)dJ_{\max}}\check{Q}(x_U).
  \end{align*}

  Moreover, $\check{Q}$ is zero-field by definition and again has the same correlation decay condition as $P$, hence
  \begin{align*}
    \check{Q}(1, -1)+\check{Q}(1, 1) = &\check{Q}(X_i = 1) = \frac{1}{2}\\
    \frac{\check{Q}(1, -1)}{\check{Q}(1, 1)}\geq &e^{-\prealpha\alpha^h},
  \end{align*}
  which gives the lower bound
  $
    \check{Q}(1, -1)\geq \frac{1}{2(1+e^{\prealpha\alpha^h})}.
  $
  Therefore, we have $$ \tilde{Q}(1, -1)\geq \frac{e^{-2(D_1+D_2)dJ_{\max}}}{2(1+e^{\prealpha\alpha^h})}.
  $$ The same lower bound applies for $\tilde{Q}(-1, 1)$. Hence,
  \begin{align*}
    &\max_{x_i, x_j, x_j'}|P(x_i | x_j, x_S, x_T)-P(x_i|x_j', x_S, x_T)|\\
    \geq & \frac{(1-e^{-4\Jmin})e^{-4(D_1+D_2)dJ_{\max}}}{4(1+e^{\prealpha\alpha^h})^2}-2\prealpha\alpha^h\\
    \geq & \frac{(1-e^{-4\Jmin})e^{-4(D_1+D_2)dJ_{\max}}}{36}-2\prealpha\alpha^h\\
    \geq & \frac{(1-e^{-4\Jmin})e^{-4(D_1+D_2)dJ_{\max}}}{36}-2e^{4(D_1+D_2)dJ_{\max}}\prealpha\alpha^h\\
    > & \epsilon.
  \end{align*}
  The second inequality uses the fact that $e^{\prealpha\alpha^h}<2$. The last inequality is by the choice of $h$.


  Next consider $(i, j)\notin E$. By the choice of $h$, there exists $S\subset N_i, |S|\leq  D_1$ such that, when the set $S$ is removed from the graph, the distance from $i$ to  $j$ is no less than $h$. Let $T$ set with $|T|\leq D_2$. As there is no edge between  $i, j$, the joint probability $Q$ and $\tilde{Q}$ are the same. Then $\forall x_S, x_T, x_i,  x_j, $
  \begin{align*}
    &|P(x_i|x_j, x_S, x_T)-P(x_i|-x_j, x_S, x_T)|\\
    = & |\tilde{Q}(x_i|x_j)-\tilde{Q}(x_i |-x_j)|\\
    = &\frac{|\tilde{Q}(x_i, x_j)\tilde{Q}(-x_i, -x_j)-\tilde{Q}(x_i, -x_j)\tilde{Q}(-x_i,
    x_j)|}{\tilde{Q}(x_j)\tilde{Q}(-x_j)}.
  \end{align*}
  Similar as above, we have
  $$ \tilde{Q}(x_j) \geq e^{-2(D_1+D_2)d\Jmax}\check{Q}(x_j) =  \frac{1}{2}e^{-2(D_1+D_2)d\Jmax}. $$
  The same bound applies for $\tilde{Q}(-x_j)$. Therefore,
  \begin{align*}
    &|P(x_i|x_j, x_S, x_T)-P(x_i|-x_j, x_S, x_T)|\\
    \leq &4e^{4(D_1+D_2)d\Jmax}|\tilde{Q}(x_i, x_j)\tilde{Q}(-x_i, -x_j)-\tilde{Q}(x_i, -x_j)\tilde{Q}(-x_i,
    x_j)|.
  \end{align*}
  By correlation decay and the fact $\prealpha\alpha^h<\ln 2<1$,
  \begin{align*}
    &Q(x_i, x_j)Q(-x_i, -x_j)\\ = & Q(x_i | x_j)Q(x_j)Q(-x_i | -x_j)Q(-x_j)\\
    \leq & (Q(x_i | -x_j)+\prealpha\alpha^h)Q(x_j)(Q(-x_i | -x_j)+\prealpha\alpha^h)Q(-x_j)\\
    \leq & Q(x_i, -x_j)Q(-x_i, x_j)+3\prealpha\alpha^h.
  \end{align*}
  Similarly, we have $Q(x_i, x_j)Q(-x_i, -x_j)\geq Q(x_i, -x_j)Q(-x_i,  x_j)-2\prealpha\alpha^h$. Hence, by the choice of $h$,
  \begin{align*}
    |P(x_i|x_j, x_S, x_T)-P(x_i|-x_j, x_S, x_T)|
    \leq &12e^{4(D_1+D_2)d\Jmax}\prealpha\alpha^h\leq \frac{\epsilon}{4}.
  \end{align*}

\end{proof}

Now we specialize this lemma for large girth graphs, in which there is at most one short path between non-neighbor nodes and no short non-direct path between neighboring nodes. Setting $D_1 = 1$ and $D_2=0$ in the theorem, we get Theorem~\ref{thm:girth_loose}. For the lower bound on the correlation between neighbor nodes, we set $D_1 = D_2 = 0$ in the theorem and get Lemma~\ref{lem:girth_correlation}.

\section{Ferromagnetic Ising Models}\label{appendix:ferro}
\subsection{Proof of Corollary~\ref{cor:ferro_Ising}}
  By Proposition~\ref{prop:association_ferro}, we apply Definition~\ref{def:association} to $X$ with $f(X) = X_i$ and $g(X) = X_j$, and get
  $
    \E [X_iX_j]\geq \E [X_i]\E [X_j].
  $
  As there is no external field, $P(X_i = 1) = P(X_i = -1)=0$ for any $i$ and $P(X_i = x_i, X_j = x_j) = P(X_i = -x_i, X_j = -x_j)$ for any $i, j$. Therefore, $\E[X_i] = 0$ and
  \begin{align*}
    \E[X_iX_j] = &4[P(X_i = 1, X_j = 1)-P(X_i = 1, X_j = -1)][P(X_i = 1, X_j = 1)\\&+P(X_i = 1, X_j = -1)].
  \end{align*}
  By the above inequality, noticing that $P(X_i = 1, X_j = 1)+P(X_i = 1, X_j = -1) = \frac{1}{2}$, we get the result.

\subsection{Proof of Lemma~\ref{lem:bdd_loose_ferro}}
  For any $i\in V, j\in N_i, S\subset V$, $Q, \tilde{Q}, \check{Q}$ are defined as in the
  proof of Lemma~\ref{lem:sparse_loose}.
  When $X$ is ferromagnetic but with external field, as in Corollary~\ref{cor:ferro_Ising}, we can show that
  \begin{align*}
    &P(X_i = 1, X_j = 1)P(X_i = -1, X_j = -1)\\
    \geq &P(X_i = 1, X_j = -1)P(X_i = -1, X_j =
    1)
  \end{align*}
  for any $i, j$. Therefore, we have
  \begin{align*}
    &\max_{x_i, x_j, x_j'}|P(x_i | x_j, x_S)-P(x_i|x_j', x_S)|\\
    \geq & e^{-2J_{ij}}\left|e^{2J_{ji}}\tilde{Q}(1, 1)\tilde{Q}(-1, -1)-e^{-2J_{ij}}\tilde{Q}(1, -1)\tilde{Q}(-1, 1)\right|\\
    \geq & e^{-2J_{ij}}(e^{2J_{ij}}-e^{-2J_{ij}})\tilde{Q}(1, 1)\tilde{Q}(-1, -1)\\
    \geq & (1-e^{-4J_{\min}})\tilde{Q}(1, 1)\tilde{Q}(-1, -1).
  \end{align*}

  We note that $\check{Q}$ is zero field, so by Corollary~\ref{cor:ferro_Ising} we get $\check{Q}(1, 1) = \check{Q}(-1, -1)\\\geq \frac{1}{4}$.
  As shown in Lemma~\ref{lem:sparse_loose},
  \begin{align*}
    \tilde{Q}(1, 1)\geq e^{-2|N_S|J_{\max}}\check{Q}(1, 1)\geq
    \frac{1}{4}e^{-2|N_S|J_{\max}}.
  \end{align*}
  The same lower bound can be obtained for $\tilde{Q}(-1, -1)$. Plugging the lower bounds to
  the above inequality, we get the result.

\section{Random Graphs}\label{appendix:random}

The proofs in this section are related to the techniques developed in \cite{Anima1,Anima2}. The key differences
are in adapting the proofs for general Ising models, as opposed to ferromagnetic models. We point out
similarities and differences as we proceed with the section.

\subsection{Self-Avoiding-Walk Tree and Some Basic Results}

This subsection introduces the notion of a self-avoiding-walk (SAW) tree, first introduced in \cite{SAW}, and presents some properties of a SAW tree.
For an Ising model on a graph $G$, fix an ordering of all the nodes. We say dge $(i, j)$ is
larger (smaller resp.) than $(i, l)$ with respect to node $i$ if $j$ comes after (before resp.)
$l$ in the ordering. The SAW tree rooted at node $i$ is denoted as $\Tsaw$. It
is essentially the tree of self-avoiding walks originated from node $i$ except that the
terminal nodes closing a cycle are also included in the tree with a fixed value $+1$ or
$-1$. In particular, a terminal node is fixed to $+1$ (resp. $-1$) if the closing edge of
the cycle is larger (resp. smaller) than the starting edge with respect to the terminal
node. Let $A$ denote the set of all terminal nodes in $\Tsaw$ and $x_A$ denote the fixed
configuration on $A$. For set $S\subset V$, let $U(S)$ denote the set of all non-terminal
copies of nodes in $S$ in $\Tsaw$. Notice that there is a natural way to define
conditioning on $\Tsaw$ according to the conditioning on $G$; specifically, if node $j$
in graph $G$ is fixed to a certain value, the non-terminal copies of $j$ in tree $\Tsaw$
are fixed to the same value.

One important result is \cite[Theorem 7]{SAW_Ising}, motivated by \cite{SAW}, says that the conditional
probability of node $i$ on graph $G$ is the same as the corresponding conditional
probability of node $i$ on tree $\Tsaw$, which is easier to deal with.

\begin{prop}\label{prop:SAW_equality}
  Let $S$ be a subset of $V$. $\forall x_i, x_S, P(x_i|x_S; G) = \\P(x_i|x_{U(S)}, x_A;
  \Tsaw)$.
\end{prop}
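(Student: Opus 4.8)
The plan is to reduce the whole statement to a single scalar — the occupation ratio at the root — and then match its recursion on $G$ against its recursion on the tree $\Tsaw$, which is essentially the content of the self-avoiding-walk identity of \cite[Theorem 7]{SAW_Ising}. Since each spin is $\pm 1$, the conditional law $P(\cdot\mid x_S;G)$ is determined by $R_i(G,x_S):=P(X_i=1\mid x_S;G)/P(X_i=-1\mid x_S;G)$, and the tree-side law is determined by the analogous ratio at the root of $\Tsaw$ given $(x_{U(S)},x_A)$. So it suffices to show these two ratios agree; the probabilities then follow from $P(X_i=1\mid\cdot)=R/(1+R)$.

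First I would establish the graph-side one-step recursion. Isolating the terms of the Hamiltonian containing $x_i$ and summing out everything except $i$ and $S$ gives
$$R_i(G,x_S)=e^{2h_i}\,\frac{\E_{G\setminus i,\,x_S}\big[\prod_{m}e^{J_{ij_m}X_{j_m}}\big]}{\E_{G\setminus i,\,x_S}\big[\prod_{m}e^{-J_{ij_m}X_{j_m}}\big]},$$
where $j_1,\dots,j_d$ are the neighbors of $i$ and the expectations are taken in the Ising model on $G\setminus i$ conditioned on $x_S$. The crucial move is to \emph{telescope} this ratio of tilted partition functions by flipping the extra field on one neighbor $j_m$ at a time; each factor collapses to $g_{J_{ij_m}}(R_m)$ with $g_J(R)=(e^{J}R+e^{-J})/(e^{-J}R+e^{J})$, where $R_m$ is the occupation ratio of $j_m$ in $G\setminus i$ with the already-processed neighbors fixed. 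This yields the branching recursion $R_i(G,x_S)=e^{2h_i}\prod_{m}g_{J_{ij_m}}(R_m)$, which mirrors exactly the unfolding of one level of $\Tsaw$.

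Next I would handle the tree side and close the induction. On $\Tsaw$, which is literally a tree, removing the root disconnects the subtrees hanging off the copies of $j_1,\dots,j_d$, so the identical recursion $R_{\text{root}}=e^{2h_i}\prod_{m}g_{J_{ij_m}}(R_{j_m})$ holds exactly, with terminal leaves contributing their fixed values $x_A$. I would then induct on $|V\setminus S|$: each telescoping step moves a processed neighbor into the conditioning set, so the graph strictly shrinks, and the induction hypothesis applied to $G\setminus i$ (with processed neighbors conditioned) together with the matching subtree rooted at the copy of $j_m$ gives $R_m=R_{j_m}$, provided the conditioning arising in the telescoping coincides with the terminal-leaf assignment $x_A$ plus the passed-through conditioning $x_{U(S)}$. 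The base case is a graph all of whose relevant spins are fixed (by $S$ or by cycle closure), which is immediate.

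The hard part is precisely that last correspondence: verifying that the sequential conditioning forced by the telescoping reproduces exactly the $\pm 1$ boundary values the SAW construction assigns to terminal nodes via the ordering rule (larger closing edge $\to +1$, smaller $\to -1$), and that non-terminal copies of $S$-nodes inherit exactly $x_{U(S)}$. This bookkeeping is the combinatorial core of Weitz's identity, and I would lean on \cite[Theorem 7]{SAW_Ising} for it; the only genuinely new content is checking that conditioning on $x_S$ in $G$ translates into the pair $(x_{U(S)},x_A)$ on $\Tsaw$ — that is, that fixing a node in $G$ fixes all of its non-terminal copies in the tree to the same value, which follows directly from the stated rule for inducing conditioning on $\Tsaw$.
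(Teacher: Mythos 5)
Your proposal is correct, and in substance it takes the same route as the paper: the paper offers no independent argument for this proposition but simply invokes \cite[Theorem 7]{SAW_Ising} (Weitz's self-avoiding-walk identity, already stated there in conditional form, with the paper's convention that fixing a node in $G$ fixes all its non-terminal copies in $\Tsaw$), and your occupation-ratio telescoping is exactly the standard proof of that cited theorem, with the decisive combinatorial bookkeeping (matching the sequential field flips to the $\pm 1$ terminal assignments) deferred to the very same citation. So while you supply more scaffolding than the paper does, both arguments ultimately rest on the identical external result, and your added check that conditioning on $x_S$ translates into $(x_{U(S)}, x_A)$ is the same observation the paper makes when defining conditioning on $\Tsaw$.
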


Next we list some basic results which will be used in later proofs. First we have the
following lemma about the number of short paths between a pair of nodes from \cite{Anima1}. The second part of Theorem~\ref{thm:random_correlation_decay} is an immediate result of
this lemma.

\begin{lem}\label{lem:random_graph_separator_size}
  \cite{Anima1} For all $i, j\in V$, the number of paths shorter than $\gamma_p$ between nodes $i, j$ is at most
  $2$ almost always.
\end{lem}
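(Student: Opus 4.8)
The plan is to prove the statement by the first-moment (union-bound) method, counting low-complexity dense subgraphs of $\Gp$. The combinatorial starting point is the observation that three distinct short paths force a subgraph with two independent cycles. Concretely, suppose $i$ and $j$ are joined by three distinct paths, each of length less than $\gamma_p$, and let $H$ be their union. Then $H$ is connected, contains $i$ and $j$, and has at most $3\gamma_p$ vertices (each path contributes fewer than $\gamma_p$ edges). I claim its cyclomatic number satisfies $|E(H)|-|V(H)|+1\ge 2$, equivalently $|E(H)|\ge|V(H)|+1$. Indeed, a connected graph with cyclomatic number $0$ (a tree) admits exactly one path between any pair of vertices, and one with cyclomatic number $1$ (unicyclic) admits at most two, since fixing a direction around its single cycle determines the route; three distinct $i$--$j$ paths inside $H$ therefore rule out both cases. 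Hence it suffices to show that almost always $\Gp$ contains no connected subgraph on at most $3\gamma_p$ vertices with cyclomatic number at least $2$.

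Next I would reduce to counting a bounded family of topological shapes, because the naive bound $\binom{\binom{k}{2}}{k+1}$ on the number of edge sets is superpolynomial in $p$ when $k=\Theta(\log p)$ and would overwhelm the available $1/p$ factor. Instead, pass to the $2$-core of $H$ (delete tree branches and leaves iteratively): this is still a subgraph of $\Gp$ with at most $3\gamma_p$ edges and cyclomatic number at least $2$, and it has minimum degree at least $2$. Suppressing its degree-$2$ vertices yields a multigraph with minimum degree at least $3$ and $|E|=|V|+1$; since $2|E|\ge 3|V|$ forces $|V|\le 2$, the only possible cores are the theta graph (two hubs joined by three internally disjoint paths) and the figure-eight (two cycles sharing one vertex). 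Thus the event of interest implies that $\Gp$ contains a subdivided theta or figure-eight with total edge count $m\le 3\gamma_p$. For a subdivided theta with branch lengths $\ell_1,\ell_2,\ell_3$ summing to $m$, the subgraph has $m$ edges and $m-1$ vertices, so there are at most $p^{m-1}$ labeled embeddings and the expected number present is at most $p^{m-1}(c/p)^m=c^m/p$; the figure-eight case gives the identical bound.

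Summing over the at most $m^2$ compositions and over $m$ from $3$ to $3\gamma_p$ gives
\begin{align*}
  \sum_{m=3}^{3\gamma_p}\frac{m^2c^m}{p}
  \ \le\ \frac{(3\gamma_p)^2}{p}\cdot\frac{c^{3\gamma_p+1}}{c-1}.
\end{align*}
Since $\gamma_p=\frac{\log p}{K\log c}$ with $K\in(3,4)$, we have $c^{3\gamma_p}=p^{3/K}$ with exponent $3/K<1$, while $(3\gamma_p)^2=O((\log p)^2)$, so the whole expression is $O\!\big((\log p)^2\,p^{3/K-1}\big)\to 0$. By Markov's inequality the probability that such a subgraph exists tends to $0$, and subgraphs of cyclomatic number $3$ or more are even rarer (each extra edge contributes a further factor $O(p^2\cdot c/p)=O(cp)$ in the count but $(c/p)$ in probability, i.e. a net $c^2/p$ suppression). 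Hence almost always no pair of nodes has three paths shorter than $\gamma_p$, which is the claim.

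\textbf{Main obstacle.} The delicate point is precisely the reduction in the second paragraph: the proof genuinely depends on replacing the crude edge-placement count by the bounded-size topological core (theta or figure-eight), so that the number of embeddings is $p^{m-1}$ rather than exponential in the number of candidate edge positions. The role of the hypothesis $K>3$ is then transparent: it is exactly the condition $c^{3\gamma_p}=p^{3/K}=o(p)$ needed to make the first moment vanish. A secondary bookkeeping issue is that the three paths may overlap and $i,j$ need not be the branch vertices; both are handled uniformly by passing to the $2$-core before extracting the core shape.
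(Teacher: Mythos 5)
Your overall strategy is sound and is in fact the standard one: the paper itself does not prove this lemma (it cites \cite{Anima1}), but the technique it uses for the sibling result, Lemma~\ref{lem:random_graph_SAW_terminal_size}, is exactly your first-moment count over connected subgraphs with cyclomatic number two --- subgraphs with $l$ edges on $l-1$ nodes, expected number $O\big(p^{-1}\gamma_p^2 c^{2\gamma_p}\big)=o(1)$ --- so your reduction ``three short paths $\Rightarrow$ a cyclomatic-number-$\geq 2$ connected subgraph on $O(\gamma_p)$ edges,'' your embedding count $p^{m-1}(c/p)^m=c^m/p$, and your use of $K>3$ to force $c^{3\gamma_p}=p^{3/K}=o(p)$ all match the intended argument. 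Your tree/unicyclic dichotomy (at most one, resp.\ two, $i$--$j$ paths) is correct, and you are right that the reduction to a bounded list of topological cores is the step that saves the count from being superpolynomial.

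There is, however, a concrete slip in the case analysis: after suppressing degree-$2$ vertices, a connected multigraph with minimum degree $\geq 3$ and $|E|=|V|+1$ has \emph{three} possible shapes, not two. Besides the theta ($|V|=2$, three parallel edges) and the figure-eight ($|V|=1$, two loops), there is the dumbbell: $|V|=2$ with one connecting edge and a loop at each vertex, i.e.\ two vertex-disjoint cycles joined by a path (each vertex has degree $1+2=3$). This shape genuinely arises from your setup --- put $i$ on one cycle and $j$ on the other; the union of three of the four arc-choice paths from $i$ to $j$ covers the entire dumbbell, whose $2$-core is itself. Fortunately the omission is harmless to the estimate: a subdivided dumbbell with branch lengths $\ell_1+\ell_2+\ell_3=m$ also has $m$ edges and $m-1$ vertices, so the same bound $c^m/p$ holds with $O(m^2)$ compositions, and the final sum merely triples. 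You should also tighten the closing remark about cyclomatic number $\geq 3$: rather than the loose ``net $c^2/p$ suppression'' heuristic, simply observe that any connected graph with cyclomatic number at least $2$ contains a connected subgraph with cyclomatic number exactly $2$ and no more edges (delete cycle edges from the $2$-core one at a time, preserving connectivity), so ruling out the three subdivided shapes with at most $3\gamma_p$ edges already covers all cases.
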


Let $B(i, l; \Tsaw)$ be the set of nodes of distance $l$ from $i$ on the tree $\Tsaw$.
Recall that $A$ is the set of terminal nodes in the tree. Let $\tilde{A}$ be the subset of
$A$ that are of distance at most $\gamma_p$ from $i$. The size of $B(i, l; \Tsaw)$ and
$\tilde{A}$ are upper bounded as follows.

\begin{lem}\cite[Lemma 2.2]{rapidmixing}\label{lem:random_graph_sphere_size}
  For $1\leq l\leq a\log p$, where $0<a<\frac{1}{2\log c}$, we have
  \begin{align*}
    \max_i|B(i, l; \Tsaw)| = O(c^l \log p), \text{ almost always.}
  \end{align*}
\end{lem}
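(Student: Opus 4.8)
The plan is to bound $|B(i,l;\Tsaw)|$ by the number of self-avoiding walks of length $l$ emanating from $i$ in $G$, control this count in expectation by a first-moment computation, and then upgrade to a high-probability bound that is uniform over all roots $i$ through a level-by-level concentration argument combined with two union bounds: one over the $l = O(\log p)$ levels and one over the $p$ possible roots. Since $\Gp$ is our only random-graph model, all probabilities below are over the edge set of $G$.

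First I would observe that every node at distance $l$ in $\Tsaw$ is reached by a walk $i = v_0, v_1, \ldots, v_l$ in $G$ in which $v_0, \ldots, v_{l-1}$ are distinct and every edge $(v_{m-1}, v_m)$ is present; a terminal node corresponds to the sub-case where $v_l$ revisits an earlier node. Hence $|B(i,l;\Tsaw)| \le W_l(i)$, where $W_l(i)$ counts such walks. Enumerating ordered tuples of distinct vertices and using that each of the $l$ required edges is present independently with probability $c/p$, one obtains $\E[W_l(i)] \le p^l (c/p)^l = c^l$ for every $i$, which already fixes the correct order of growth in expectation.

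The main work is the concentration. I would analyze the breadth-first exploration of $\Tsaw$, writing $Z_m = |B(i,m;\Tsaw)|$ for the level sizes. Conditioned on the exploration up to level $m-1$, the children of the $Z_{m-1}$ frontier nodes are generated by fresh edge queries, so $Z_m$ is stochastically dominated by a sum of $Z_{m-1}$ independent $\mathrm{Binomial}(p,c/p)$ variables, a near-$\mathrm{Poisson}(cZ_{m-1})$ quantity with mean $cZ_{m-1}$ and sub-exponential upper tail. A Chernoff bound then shows that, on the inductive event $Z_{m-1} = O(c^{m-1}\log p)$, with probability at least $1 - p^{-\Omega(K)}$ one has
\[
  Z_m \le c\,Z_{m-1} + O\!\big(c^{m/2}\log p\big),
\]
the failure probability being $\exp\big(-\Omega(K\log p)\big)$ because the conditional mean is at least $\Omega(K\log p)$ at every level. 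Taking a union bound over the $l = O(\log p)$ levels makes this recursion hold simultaneously for all $m \le l$, and unrolling it gives $Z_l \le c^l + O(\log p)\sum_{m=1}^{l} c^{\,l-m/2} = O(c^l\log p)$, since $\sum_{m\ge 1} c^{-m/2}$ converges. A final union bound over the $p$ roots leaves a failure probability of $O(\log p)\,p^{1-\Omega(K)} \to 0$ once $K$ is chosen large, which is exactly the ``almost always'' conclusion.

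The step I expect to be the main obstacle is justifying the $\mathrm{Binomial}$ domination uniformly across all levels, because it requires that the pool of undiscovered vertices never be appreciably depleted: the per-node offspring can be treated as near-$\mathrm{Poisson}(c)$ only if the explored frontier is always a vanishing fraction of $[p]$. This is precisely where the hypothesis $1 \le l \le a\log p$ with $a < \frac{1}{2\log c}$ enters, as it forces $c^l \le p^{a\log c} < \sqrt{p}$, so that even the largest neighborhood has size $O(\sqrt p\,\log p) = o(p)$; the number of available new vertices then stays above $p/2$ throughout the exploration, and the idealized branching-process estimates remain valid up to a negligible correction.
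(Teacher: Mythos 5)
First, note that the paper does not prove this lemma at all: it is imported verbatim by citation from \cite[Lemma 2.2]{rapidmixing}, so there is no in-paper argument to match yours against; your attempt must stand on its own, and it has a genuine gap. The gap is in the concentration step. Your claim that, conditioned on the exploration up to level $m-1$, the $Z_{m-1}$ frontier nodes spawn children via \emph{fresh} edge queries, so that $Z_m$ is dominated by a sum of $Z_{m-1}$ independent $\mathrm{Binomial}(p,c/p)$ variables, is correct for the BFS level sets of $G$ (where each graph vertex is explored exactly once), but it is false for the levels of $\Tsaw$. The SAW-tree sphere $B(i,m;\Tsaw)$ counts self-avoiding walks, so a single graph vertex $v$ may appear in several copies on the frontier at level $m-1$; all of those copies have offspring determined by the \emph{same, already revealed} edges incident to $v$. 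Their child counts are therefore perfectly correlated: $k$ copies of $v$ contribute roughly $k\,\deg(v)$ children, whose upper tail is that of a single binomial scaled by $k$, which is strictly heavier than the tail of a sum of $k$ independent binomials. Hence the recursion $Z_m \le c\,Z_{m-1} + O(c^{m/2}\log p)$ is not justified for the SAW tree as written, and this is precisely the object the lemma is about. (A smaller point: the failure exponent $\Omega(K)$ you invoke refers to a constant $K$ that is not a parameter of this lemma; what you may actually enlarge is the constant inside the $O(c^{m/2}\log p)$ deviation, which does suffice to beat the union bound over the $p$ roots.)

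The fix requires exactly the ingredient you glossed over: a multiplicity bound. One shows by a first-moment count of two-cycle (OLO-type) structures --- the same computation the paper uses to prove Lemma~\ref{lem:random_graph_SAW_terminal_size} --- that almost always every radius-$l$ ball of $G$ contains at most one cycle; the expected number of such structures is of order $c^{2l}/p$ up to polylogarithmic factors, and this is $o(1)$ precisely when $c^{l} = o(\sqrt{p})$, i.e., when $a < \frac{1}{2\log c}$. With at most one cycle per ball, each graph vertex has at most two SAW copies within distance $l$ of the root, so $|B(i,l;\Tsaw)| = O(|B_G(i,l)|)$ plus a bounded number of terminal nodes, and your branching-process/Chernoff recursion, now correctly applied to the BFS exploration of $G$, gives $\max_i |B_G(i,l)| = O(c^l\log p)$ almost always. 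This also corrects your diagnosis of where the hypothesis on $a$ enters: depletion of the vertex pool is immaterial for an \emph{upper} bound, since each vertex's offspring count is dominated by $\mathrm{Binomial}(p, c/p)$ no matter how much of the graph has been explored; the condition $a < \frac{1}{2\log c}$ is what keeps the balls nearly cycle-free, i.e., it controls the SAW-tree multiplicities, not the pool of unexplored vertices.
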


\begin{lem}\label{lem:random_graph_SAW_terminal_size}
  $\forall i\in V, |\tilde{A}|\leq 1 $ in $\Tsaw$ almost always.
\end{lem}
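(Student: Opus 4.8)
The plan is to prove the statement by the first-moment method, showing that the event that \emph{some} root $i$ has $|\tilde A|\ge 2$ forces a rare cyclic substructure whose expected number of occurrences vanishes. Throughout I use $\gamma_p=\frac{\log p}{K\log c}$ with $K\in(3,4)$ and the identity $c^{2\gamma_p}=p^{2/K}$.

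First I would translate terminal nodes into cycles of $G$. By the construction of $\Tsaw$, a terminal copy at tree-distance $l\le\gamma_p$ from the root arises from a self-avoiding walk $i=u_0,u_1,\dots,u_{l-1}$ in $G$ followed by a closing edge $(u_{l-1},u_k)$ to an earlier, non-predecessor vertex $u_k$; the terminal carries the original label $u_k$ and all the vertices lie in $B(i,\gamma_p)$. The key bookkeeping point is that a single reachable cycle --- a cycle $C$ joined to $i$ by one simple access path --- contributes terminals that all carry the \emph{same} original label, namely the unique vertex where the access path first meets $C$, since both directions of traversal close at that vertex. Hence two distinct labels in $\tilde A$, i.e.\ $|\tilde A|\ge 2$, can occur only when the region reachable from $i$ within $\gamma_p$ hops contains two independent cycles, i.e.\ a connected subgraph $H\ni i$ of cyclomatic number at least $2$ with $V(H)\subset B(i,\gamma_p)$.

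Next I would first-moment the witnesses. It suffices to count minimal ones: a connected subgraph with $V$ vertices and $V+1$ edges built from two cycles joined to each other and to $i$ by simple paths. Parametrising the shape by its constant number of segment lengths (access path, the two cycle lengths, connecting path), each at most $\gamma_p$, there are only $\mathrm{poly}(\gamma_p)$ shapes and every witness has $V\le 2\gamma_p$. For a fixed shape the expected number of labelled copies, summed over all roots $i$, is at most $p^{V}(c/p)^{V+1}=c^{V+1}/p$, so
\begin{align*}
  \E\big[\#\{\text{minimal witnesses}\}\big]
  = O\!\left(\mathrm{poly}(\gamma_p)\,c^{2\gamma_p}p^{-1}\right)
  = O\!\left(\mathrm{poly}(\gamma_p)\,p^{2/K-1}\right)\to 0,
\end{align*}
since $K>3>2$. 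Markov's inequality then gives $\Pr(\exists i:|\tilde A|\ge 2)\to 0$, i.e.\ $|\tilde A|\le 1$ for all $i$ almost always.

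The main obstacle is the reduction of the second paragraph rather than the counting: one must argue carefully from the definition of $\Tsaw$ that a single reachable cycle yields only one distinct label (so that two labels genuinely require two independent cycles), and that bounding the tree-distance by $\gamma_p$ confines the entire witness to $B(i,\gamma_p)$. I would also parametrise witnesses by path segments rather than by spanning trees, since the naive count of connected graphs on $V$ vertices carries a spurious $e^{V}$ factor that would spoil the exponent; the segment parametrisation keeps the dominant growth at exactly $c^{2\gamma_p}=p^{2/K}$, where the margin $K>2$ closes the argument. Lemma~\ref{lem:random_graph_sphere_size} gives the alternative, coarser confinement $|B(i,\gamma_p)|=O(p^{1/K}\log p)$, but the shape count above already suffices.
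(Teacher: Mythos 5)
Your proposal is correct and takes essentially the same route as the paper's proof: the paper likewise reduces the event $|\tilde{A}|\ge 2$ to the existence of a connected bicyclic witness (its ``OLO'' structure of two cycles joined by paths, with $l-1$ nodes and $l$ edges, $l\le 2\gamma_p$) and eliminates it by the same first-moment count $O\big(\gamma_p^2\, c^{2\gamma_p}/p\big)=O\big(p^{2/K-1}\big)=o(1)$ followed by Markov's inequality. Your extra bookkeeping about labels (a single nearby cycle producing terminal copies of the same vertex, so that two independent cycles are genuinely needed) is a point the paper glosses over with ``it is equivalent to show that there is at most 1 such small cycle close to each node,'' so the two arguments match in substance.
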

\begin{proof}
  Each terminal node in $\tilde{A}$ corresponds to a cycle connected to $i$ with the
  total length of the cycle and the path to $i$ at most $\gamma_p$.
  Let $OLO_l$ denote the subgraph consists of two connected circles with total length
  $l$. This structure has $l-1$ nodes and $l$ edges. Let $H = \{OLO_l,\ l\leq 2\gamma_p\}$ and $N_H$ denote the
  number of subgraphs containing an instance from $H$. Then it is equivalent to show that
  there is at most 1 such small cycle close to each node or $N_H = 0$ almost always.
  \begin{align*}
    \E[N_H]\leq & \sum_{l=1}^{2\gamma_p}{\binom{p}{l-1}(l-1)!(l-1)^2(\frac{c}{p})^l}
    \leq O(\sum_{l=1}^{2\gamma_p}{p^{-1}l^2c^l})\\
     = &O(p^{-1}\gamma_p^2c^{2\gamma_p})\leq
    O(p^{-\frac{1}{3}}) = o(1).
  \end{align*}
  So, $P(N_H\geq 1) = o(1)$.
\end{proof}

\subsection{Correlation Decay in Random Graphs}
This subsection is to prove the first part of Theorem~\ref{thm:random_correlation_decay} which
characterizes the correlation decay property of a random graph.

First we state a correlation decay property for tree graphs. This result shows that
having external fields only makes the correlation decay faster.

\begin{lem}\label{lem:random_graph_tree_correlation_decay}
Let $P$ be a general Ising model with external fields on a tree $T$. Assume $|J_{ij}|\leq
J_{\max}$. $\forall i, j\in T$,
\begin{align*}
  |P(x_i|x_j)-P(x_i|x_j')|\leq (\tanh{J_{\max}})^{d(i, j)}.
\end{align*}
\end{lem}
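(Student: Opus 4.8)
The plan is to prove the bound by inducting on the tree distance $d(i,j)$, reducing the general case to a single-edge influence bound via a multiplicativity property that follows from the Markov structure of the tree. Throughout I would write $f(u,w) := \max_{x_u, x_w, x_w'} |P(x_u\mid x_w) - P(x_u\mid x_w')|$ for the influence of $w$ on $u$, so that the goal becomes $f(i,j) \le (\tanh J_{\max})^{d(i,j)}$.

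First I would fix the unique path $i = v_0, v_1, \dots, v_l = j$ with $l = d(i,j)$ on the tree $T$. Since $v_1$ lies on this path, removing $v_1$ disconnects $i$ from $j$, so by the global Markov property $X_i \perp X_j \mid X_{v_1}$, which gives
\begin{align*}
  P(x_i \mid x_j) = \sum_{x_{v_1}} P(x_i \mid x_{v_1})\, P(x_{v_1} \mid x_j).
\end{align*}
Subtracting the same identity written for $x_j'$ and using that $X_{v_1}$ is binary (so the differences $P(x_{v_1}\mid x_j) - P(x_{v_1}\mid x_j')$ over $x_{v_1}\in\{\pm1\}$ are negatives of each other), the expression factors as a product of a one-step influence of $v_1$ on $i$ and the difference $P(X_{v_1}=1\mid x_j) - P(X_{v_1}=1\mid x_j')$. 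Taking maxima over the free arguments yields the multiplicativity $f(i,j) \le f(i,v_1)\, f(v_1,j)$, and iterating (or inducting on $l$) reduces the problem to bounding the single-edge quantities $f(v_k,v_{k+1})$.

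Next I would establish the single-edge bound $f(u,w) \le \tanh J_{\max}$ for adjacent $u,w$. Deleting the edge $(u,w)$ splits $T$ into the subtree $T_u$ containing $u$ and $T_w$ containing $w$, and the only term coupling the two sides is $J_{uw}x_ux_w$. Conditioning on $X_w = x_w$ therefore gives $P(x_u \mid x_w) \propto_{x_u} e^{J_{uw}x_ux_w}\,M(x_u)$, where $M(x_u)$ collects the contribution of the $u$-subtree (including all external fields on that side) and is independent of $x_w$. Writing $r = M(+1)/M(-1) > 0$ and computing $P(X_u=1\mid X_w=1) - P(X_u=1\mid X_w=-1)$ explicitly as a function of $r$, I would apply the inequality $r^2+1 \ge 2r$ to the denominator and simplify, obtaining the clean bound $\tanh|J_{uw}| \le \tanh J_{\max}$, uniformly in $r$ and hence independent of the rest of the tree.

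I expect the main obstacle to be this single-edge computation: verifying that conditioning on $X_w$ perturbs only the effective field on $u$ through the edge coupling (so that the subtree factor $M$ and all other external fields collapse into the single positive ratio $r$), and then carrying out the optimization over $r$ carefully enough to extract exactly $\tanh J_{\max}$ rather than a looser constant. Once this is in place, combining it with the multiplicativity $f(i,j) \le \prod_{k=0}^{l-1} f(v_k,v_{k+1})$ yields $|P(x_i\mid x_j) - P(x_i\mid x_j')| \le (\tanh J_{\max})^{d(i,j)}$ at once.
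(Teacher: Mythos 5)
Your proposal is correct and follows essentially the same route as the paper: the paper also chains along the path using the tree Markov property together with the binary-variable cancellation $P(x_l|x_j)-P(x_l|x_j') = -(P(x_l'|x_j)-P(x_l'|x_j'))$ to get multiplicativity, and then proves the single-edge bound $\tanh|J_{uw}|\leq \tanh J_{\max}$ by showing external fields only reduce the one-step influence. Your subtree marginal ratio $r = M(+1)/M(-1)$ is precisely the paper's effective external field via $r = e^{2h_u}$ (the paper instead reduces w.l.o.g.\ to a line), and your inequality $r^2+1\geq 2r$ is exactly the paper's observation that the one-step influence $\frac{|e^{2J}-e^{-2J}|}{e^{2J}+e^{-2J}+e^{2h}+e^{-2h}}$ is maximized at $h=0$.
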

\begin{proof}
The basic idea in the proof is get an upper bound that does not depend on the external field. To do this,
we proceed as in the proof of Lemma 4.1 in \cite{externfield}. First, as noted in \cite{externfield},
w.l.o.g. assume the tree is a line from $i$ to $j$. Then, we prove the result by
  induction on the number of hops in the line.
  \begin{enumerate}
    \item $d(i, j) = 1$ or $j\in N_i$. The graph has only two nodes. We have
    \begin{align*}
      P(x_i|x_j) = \frac{e^{J_{ij}x_ix_j+h_ix_i}}{e^{J_{ij}x_j+h_i}+e^{-J_{ij}x_j-h_i}}.
    \end{align*}
    Hence,
    \begin{align*}
      |P(x_i|x_j)-P(x_i|x_j')| = &
      \frac{|e^{2J_{ij}}-e^{-2J_{ij}}|}{(e^{J_{ij}+h_i}+e^{-J_{ij}-h_i})(e^{-J_{ij}+h_i}+e^{J_{ij}-h_i})}\\
      = &\frac{|e^{2J_{ij}}-e^{-2J_{ij}}|}{e^{2J_{ij}}+e^{-2J_{ij}}+e^{2h_i}+e^{-2h_i}}
    \end{align*}
    This function is even in both $J_{ij}$ and $h_i$. Without loss of generality, assume
    $J_{ij}\geq 0, h_i\geq 0$. It is not hard to see that the RHS is maximized when $h_i
    = 0$. So
    \begin{align*}
      |P(x_i|x_j)-P(x_i|x_j')| \leq \tanh|J_{ij}|\leq \tanh{J_{\max}}.
    \end{align*}
    The inequality suggests that, when there is external field, the impact
    of one node on the other is reduced.
    \item Assume the claim is true for $d(i, j)\leq k$. For $d(i, j)=k+1$, pick any $l$ on
    the path from $i$ to $j$, and note that $X_i$ --- $X_l$ --- $X_j$ forms a Markov chain. Moreover, $d(i, l)\leq k$ and $d(l,
    j)\leq k$.
    \begin{align*}
      &|P(x_i|x_j)-P(x_i|x_j')|\\
      = &|\sum_{x_l}{P(x_i|x_l)P(x_l|x_j)}-\sum_{x_l}{P(x_i|x_l)P(x_l|x_j')}|\\
      = &|P(x_i|x_l)(P(x_l|x_j)-P(x_l|x_j'))+P(x_i|x_l')(P(x_l'|x_j)-P(x_l'|x_j'))|\\
      = &|(P(x_i|x_l)-P(x_i|x_l'))(P(x_l|x_j)-P(x_l|x_j'))|\\
      \leq & (\tanh{J_{\max}})^{d(i, l)}(\tanh{J_{\max}})^{d(l, j)} = (\tanh{J_{\max}})^{d(i, j)}
    \end{align*}
    The third equality follows by observing that $P(x_l|x_j)-P(x_l|x_j') =
    -(P(x_l'|x_j)-P(x_l'|x_j'))$. The last inequality is by induction.
  \end{enumerate}
\end{proof}

Writing the conditional probability on a graph as a conditional probability on the corresponding SAW tree, we can
apply the above lemma and show the correlation decay property for random graphs.

\begin{lem}\label{lem:random_graph_correlation_decay}
  Let $P$ be a general Ising model on a graph $G$. Fix $i\in V$. $\forall j\notin N_i$, let $S$
  be the set that separates the paths shorter than $\gamma$ between $i, j$ and $B = B(i, \gamma; \Tsaw)$ , then
  $\forall x_i, x_j, x_j', x_S$,
  \begin{align*}
    |P(x_i|x_j, x_S) - P(x_i|x_j', x_S)|\leq |B|(\tanh{J_{\max}})^{\gamma}.
  \end{align*}
\end{lem}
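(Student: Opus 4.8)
The plan is to transfer the problem from the graph $G$ to the self-avoiding-walk tree $\Tsaw$ and then invoke the tree correlation-decay bound of Lemma~\ref{lem:random_graph_tree_correlation_decay}. First, by Proposition~\ref{prop:SAW_equality}, conditioning on $x_j$ and $x_S$ in $G$ is the same as conditioning, on $\Tsaw$, on the terminal configuration $x_A$, on the non-terminal copies $x_{U(S)}$ of $S$, and on all non-terminal copies of $j$ fixed to the common value $x_j$; so the quantity to bound equals the change of $P(x_i\mid x_{U(S)},x_A,\{\text{copies of }j\};\Tsaw)$ as the copies of $j$ move from $x_j$ to $x_j'$. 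I would then split the copies of $j$ into those whose path from the root $i$ meets $U(S)$ and those that do not. Conditioned on $x_{U(S)}$, the tree Markov property makes $X_i$ independent of every node lying beyond $U(S)$, so the first group has no effect and can be discarded. The surviving copies are exactly those reachable from $i$ without meeting $U(S)$; since $S$ separates all $G$-paths shorter than $\gamma$ between $i$ and $j$, each corresponds to a self-avoiding walk of length at least $\gamma$ and hence sits at tree-distance $\ge\gamma$ from $i$.

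Next I would route the influence of the surviving copies through the sphere $B=B(i,\gamma;\Tsaw)$. Each surviving copy is either itself a node of $B$ or lies in the subtree hanging below a unique node $b\in B$, so I would run a telescoping (hybrid) argument that flips the copies of $j$ one subtree-below-$b$ at a time as $b$ ranges over $B$. For a single step the two hybrid configurations differ only at nodes strictly below some $b$; conditioning on $X_b$ and using that $b$ separates its subtree from $i$ in the tree, the conditional $g(z)=P(x_i\mid X_b=z,\,\text{conditioning not below }b)$ is the \emph{same} function of $z\in\{\pm1\}$ for both configurations. The step difference then collapses to $\bigl[g(+1)-g(-1)\bigr]\,(q-q')$, where $q,q'$ are the two conditional probabilities of $\{X_b=+1\}$. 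The second factor has absolute value at most $1$, while the first is bounded by $(\tanh J_{\max})^{d(i,b)}=(\tanh J_{\max})^{\gamma}$ via Lemma~\ref{lem:random_graph_tree_correlation_decay}, since $d(i,b)=\gamma$. Summing over the $|B|$ nodes of the sphere yields the claimed bound $|B|(\tanh J_{\max})^{\gamma}$.

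The hard part will be the bookkeeping needed to apply Lemma~\ref{lem:random_graph_tree_correlation_decay} with tree-distance exactly $\gamma$, i.e.\ checking that conditioning does not contaminate the geodesic from $i$ to $b$. I would argue that a copy of $S$ strictly between $i$ and $b$ would force the retained branch to cross $U(S)$, a contradiction, and a copy of $j$ strictly between $i$ and $b$ would give a self-avoiding walk of length $<\gamma$ avoiding $S$, contradicting the separating property of $S$; hence the path from $i$ to $b$ carries no conditioned interior node and the reduction to a line of length $\gamma$ inside Lemma~\ref{lem:random_graph_tree_correlation_decay} is legitimate. A minor special case, where $b$ is itself a copy of $j$, is handled directly: since a node appears at most once along any self-avoiding walk, no further copy of $j$ lies below such a $b$, and the single step reduces immediately to Lemma~\ref{lem:random_graph_tree_correlation_decay}.
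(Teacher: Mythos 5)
Your proposal is correct and matches the paper's proof in all essentials: both pass to the tree $\Tsaw$ via Proposition~\ref{prop:SAW_equality}, discard the copies of $j$ that $U(S)$ separates from $i$, observe that the surviving copies lie at tree-distance at least $\gamma$, and telescope the remaining influence through the sphere $B$, paying $(\tanh J_{\max})^{\gamma}$ per node of $B$ via Lemma~\ref{lem:random_graph_tree_correlation_decay}. The only cosmetic difference is the telescoping bookkeeping---the paper flips the boundary spins $x_B$ one node at a time between the maximizing and minimizing configurations, whereas you hybridize the copies of $j$ subtree-by-subtree and factor each step through $X_b$ as $\bigl[g(+1)-g(-1)\bigr](q-q')$---and your check that the geodesic from $i$ to $b$ carries no conditioned node is a detail the paper leaves implicit.
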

\begin{proof}
  Let $Z$ be the subset of $U(j)$ on $\Tsaw$ that is not separated by $U(S)$ from $i$.
  By the definition of $S$, $Z$ is of distance at least $\gamma$ from $i$. So the $\gamma$-sphere $B$
  separates $Z$ and $i$.
  \begin{align*}
    &|P(x_i|x_j, x_S) - P(x_i|x_j', x_S)|\\
    \stackrel{(a)}{=}& |P(x_i|x_{U(j)}, x_{U(S)}, x_A; \Tsaw) - P(x_i|x_{U(j)}', x_{U(S)}, x_A; \Tsaw)|\\
    \stackrel{(b)}{=}& |P(x_i|x_Z, x_{U(S)}, x_A; \Tsaw) - P(x_i|x_Z', x_{U(S)}, x_A; \Tsaw)|\\
    \stackrel{(c)}{=}& |\sum_{x_B}{P(x_i|x_B, x_{U(S)}, x_A; \Tsaw})P(x_B|x_Z,x_{U(S)}, x_A; \Tsaw) \\
    & - \sum_{x_B}{P(x_i|x_B, x_{U(S)}, x_A; \Tsaw)P(x_B|x_Z',x_{U(S)}, x_A; \Tsaw)}|\\
    \leq & \max_{x_B}P(x_i|x_B, x_{U(S)}, x_A; \Tsaw)-\min_{x_B}P(x_i|x_B, x_{U(S)}, x_A; \Tsaw)\\
    \stackrel{(d)}{=}& P(x_i|x_B^M, x_{U(S)}, x_A; \Tsaw)- P(x_i|x_B^m, x_{U(S)}, x_A; \Tsaw)\\
    \stackrel{(e)}{\leq}& |B|(\tanh{J_{\max}})^{\gamma}.
  \end{align*}
  In the above, $(a)$ follows from the property of SAW tree in
  Prop~\ref{prop:SAW_equality}. Step $(b)$ is by the choice of $S$ and
  the definition of $Z$. Step $(c)$ uses the fact that $Z$ is separated from $i$ by $B$. In $(d)$, $x_B^M, x_B^m$
  represent the maximizer and minimizer respectively. Step $(e)$ is by telescoping the sign of
  $x_B$. Notice that the Hamming distance between $x_B^M, x_B^m$ is at most $|B|$, and we
  can apply the above lemma to each pair as the conditioning terms differ only on one
  node. The above proof is similar to the proof of Lemma 3 in \cite{Anima1}. However, in going from
  step (c) to step (d) above, it is important to note that our proof holds for general Ising models, whereas the
  proof in \cite{Anima1} is specific to ferromagnetic Ising models.
\end{proof}

\begin{proof}[Proof of Theorem~\ref{thm:random_correlation_decay}]
  As in \cite{Anima1}, setting $\gamma = \gamma_p$ in the above lemma and noticing that $$|B(i, \gamma_p; \Tsaw)| = O(c^{\gamma_p}\log
  p), $$ we get
  \begin{align*}
    &|P(x_i|x_j, x_S) - P(x_i|x_j', x_S)|\\
    \leq &O((c\tanh J_{\max})^{\gamma_p}\log p) =
    O(p^{-\frac{\log \alpha}{K\log c}}\log p) = o(p^{-\kappa}).
  \end{align*}
\end{proof}

\subsection{Asymptotic Lower Bound on $P(x_i |x_R)$ When $|R|\leq 3$}

This subsection is to prove that $P(x_i|x_R)$ is lower bounded by some constant when
$|R|\leq 3$. This result comes in handy when proving the other two theorems. This result
was conjectured to hold in \cite{Anima1} for ferromagnetic Ising models on the random
graph $\Gp$ without a proof. Here we prove that it is also true for general Ising models on
the random graph.

\begin{lem}\label{lem:random_cond_prob_lower_bound}
  $\forall i\in V, \forall R\subset V, |R|\leq 3$, there exists a constant $C$ such that
  $\forall x_i, x_R, P(x_i|x_R)\geq C$ almost always.
\end{lem}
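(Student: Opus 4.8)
The plan is to transfer the computation to the self-avoiding-walk tree and there bound the effective field at the root by separating a bounded ``near'' contribution from a vanishing ``far'' one. First I would invoke Proposition~\ref{prop:SAW_equality} to write $P(x_i \mid x_R; G) = P(x_i \mid x_{U(R)}, x_A; \Tsaw)$, reducing the claim to a lower bound on the root marginal of an Ising model on the tree $\Tsaw$, conditioned on the non-terminal copies $U(R)$ of $R$ and on the fixed terminal configuration $x_A$. On a tree the conditional law of the root is a logistic function of a cavity (effective) field $H_i = h_i + \sum_{k \in \partial i}\hat h_{k\to i}$, where each incoming message obeys the recursion $\hat h_{k\to i} = \operatorname{atanh}\!\big(\tanh J_{ik}\,\tanh \mu_k\big)$ with $\mu_k$ the cavity field of the subtree hanging at $k$. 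Since $|\tanh\mu_k|\le 1$, every message satisfies $|\hat h_{k\to i}|\le \operatorname{atanh}(\tanh \Jmax)=\Jmax$, and $P(x_i\mid\cdot)=\big(1+e^{-2x_iH_i}\big)^{-1}$, so it suffices to show $|H_i|$ is bounded by a constant almost always. The whole difficulty is that the degree of the root in $\Tsaw$ is unbounded (it scales like the maximum degree of $\Gp$), so the naive bound $|H_i|\le |h_i|+|\partial i|\,\Jmax$ is useless; the point of the proof will be that only a constant number of the incident messages are actually nonzero.

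The resolution is to split $H_i$ at the decay radius $\gamma_p$. For the far part, every terminal node and every copy of an $R$-node at depth at least $\gamma_p$ influences the root only through the sphere $B:=B(i,\gamma_p;\Tsaw)$, so replaying the telescoping argument in the proof of Lemma~\ref{lem:random_graph_correlation_decay} shows that freezing versus freeing this entire far boundary perturbs the root marginal by at most $|B|(\tanh\Jmax)^{\gamma_p}$. By Lemma~\ref{lem:random_graph_sphere_size} this is $O\big((c\tanh\Jmax)^{\gamma_p}\log p\big)=o(1)$ almost always in the regime $c\tanh\Jmax<1$. Hence it is enough to bound the effective field $\tilde H_i$ of the \emph{truncated} tree (depth $\gamma_p$, free boundary) conditioned only on the near fixed nodes.

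For the near part I would use spin-flip symmetry: with zero external field, free boundary, and no fixed node inside a branch, that branch's subtree is symmetric, its cavity field vanishes, and it contributes nothing to $\tilde H_i$; the unconditioned root marginal is exactly $\tfrac12$. A branch can therefore be active only if it contains a near copy of an $R$-node or a near terminal node. By Lemma~\ref{lem:random_graph_separator_size} (equivalently Theorem~\ref{thm:random_correlation_decay}(2)) each node of $R$ is joined to $i$ by at most two paths of length below $\gamma_p$, hence has at most two near copies, so $R$ activates at most $2|R|\le 6$ branches; by Lemma~\ref{lem:random_graph_SAW_terminal_size} there is at most one near terminal node, adding at most one more. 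Since each active branch contributes a message of magnitude at most $\Jmax$, we get $|\tilde H_i|\le 7\Jmax$ and therefore $\tilde P(x_i)\ge (1+e^{14\Jmax})^{-1}$.

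Combining the two estimates gives $P(x_i\mid x_R)\ge (1+e^{14\Jmax})^{-1}-o(1)$ almost always, so any constant $C<(1+e^{14\Jmax})^{-1}$ works for $p$ large, proving the lemma. The main obstacle is exactly the unbounded degree of the root, and the essential idea is that the far messages are annihilated collectively by correlation decay rather than one at a time, while the near messages are limited by the bounded number of short paths (at most two per node) together with the single near cycle guaranteed by Lemma~\ref{lem:random_graph_SAW_terminal_size}. The only place symmetry enters is the zero-field baseline $\tfrac12$; the same scheme accommodates bounded external fields as long as the unconditioned root marginal is itself bounded away from $0$ and $1$.
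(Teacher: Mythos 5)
Your proposal is correct and follows essentially the same route as the paper's proof: transfer to the SAW tree via Proposition~\ref{prop:SAW_equality}, split the influence at radius $\gamma_p$, annihilate the far boundary using Lemma~\ref{lem:random_graph_sphere_size} together with the per-hop $\tanh\Jmax$ attenuation, and bound the near contribution by the at most $2|R|$ near copies of $R$ (Lemma~\ref{lem:random_graph_separator_size}) plus the single near terminal node (Lemma~\ref{lem:random_graph_SAW_terminal_size}), giving an $O(1)$ effective field at the root and hence a constant lower bound. Your cavity-message recursion $\hat h_{k\to i}=\operatorname{atanh}\big(\tanh J_{ik}\,\tanh\mu_k\big)$ is exactly the paper's tree-reduction step (Lemma~\ref{lem:tree_extern_field}) in message-passing notation, so the two arguments coincide.
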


This basic idea is that the conditional probability $P(x_i|x_R)$ is equal to some
conditional probability on a SAW tree, which in turn is viewed as some unconditional
probability on the same tree with induced external fields. Then we apply a tree reduction
to the SAW tree till only the root is left, and show that the induced external field on
the root is bounded, which implies that the probability of the root taking $+1$ or $-1$
is bounded.

On a tree graph, when calculating a probability which involves no nodes in a subtree, we can reduce
the subtree by simply summing (marginalizing) over all the nodes in it. This reduction produces an Ising
model on the rest part of the tree with the same $J_{ij}$ and $h_i$ except for the root
of the subtree, which would have an induced external field due to the reduction of the subtree. The
probability we want to calculate remains unchanged on this new tree. Such induced external fields
are bounded according to the following lemma.

\begin{lem}\label{lem:tree_extern_field}
  Consider a leaf node 2 and its parent node 1. The induced external field $h_1'$ on node
  1 due to summation over node 2 satisfies
  \begin{align*}
    |h_1'|\leq |h_2|\tanh|J_{12}|.
  \end{align*}
\end{lem}
We first prove an inequality which is used in the proof of the above lemma.

\begin{lem}\label{lem:extern_field_decay_inequality}
  $\forall x\geq 0, y\geq 0$,
  \begin{align*}
    e^{2x\tanh y}\geq \frac{e^{x+y}+e^{-x-y}}{e^{x-y}+e^{-x+y}}.
  \end{align*}
\end{lem}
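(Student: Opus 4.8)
The plan is to take logarithms and reduce the claim to a one‑dimensional monotonicity statement. First I would simplify the right‑hand side using $e^{x+y}+e^{-x-y}=2\cosh(x+y)$ and $e^{x-y}+e^{-x+y}=2\cosh(x-y)$, so that the asserted inequality is equivalent to $e^{2x\tanh y}\geq \cosh(x+y)/\cosh(x-y)$. Since both sides are strictly positive, taking logarithms reduces the goal to showing
$$g(x) := 2x\tanh y-\ln\cosh(x+y)+\ln\cosh(x-y)\geq 0$$
for every $x\geq 0$, with $y\geq 0$ held fixed.

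Next I would observe that $g(0)=0$, because the two $\ln\cosh$ terms cancel at $x=0$ ($\cosh$ being even), so it suffices to prove $g'(x)\geq 0$ on $[0,\infty)$. Differentiating gives
$$g'(x)=2\tanh y-\bigl(\tanh(x+y)-\tanh(x-y)\bigr),$$
so the whole problem collapses to the single inequality $\tanh(x+y)-\tanh(x-y)\leq 2\tanh y$ for $x,y\geq 0$.

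The crux — and the step I expect to be the main, if modest, obstacle — is this last inequality. I would prove it by viewing the left‑hand side as a function $h(x)$ of $x$ with $y$ fixed, showing it is nonincreasing on $[0,\infty)$ while $h(0)=\tanh y-\tanh(-y)=2\tanh y$. Indeed $h'(x)=\operatorname{sech}^2(x+y)-\operatorname{sech}^2(x-y)$, and since $\operatorname{sech}^2$ is even and strictly decreasing in the absolute value of its argument, while $x+y\geq |x-y|$ whenever $x,y\geq 0$, one gets $\operatorname{sech}^2(x+y)\leq \operatorname{sech}^2(x-y)$, i.e.\ $h'(x)\leq 0$. Hence $h(x)\leq h(0)=2\tanh y$, which yields $g'(x)\geq 0$ and therefore $g(x)\geq g(0)=0$, the desired bound. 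The only point requiring care is the comparison $x+y\geq |x-y|$ (with equality exactly when $x=0$ or $y=0$): it is precisely this fact that orients the $\operatorname{sech}^2$ comparison correctly across all nonnegative $x,y$, including the regime $x<y$ where $x-y$ is negative.
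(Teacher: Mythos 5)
Your proof is correct, and it takes a genuinely different route from the paper's. The paper substitutes $u=\tanh y$ (so $y=\tfrac12\ln\tfrac{1+u}{1-u}$), clears denominators to reduce the claim to the exponential--polynomial inequality $f_u(z)=(1+u)e^{uz}+(1-u)e^{(1+u)z}-(1+u)e^z-(1-u)\geq 0$ with $z=2x$, and proves it by noting $f_u(0)=0$ and $f_u'(z)=(1+u)\bigl[ue^{uz}+(1-u)e^{(1+u)z}-e^z\bigr]\geq 0$, where the sign of the bracket follows from convexity of $e^z$ (Jensen with weights $u,1-u$, since $u\cdot uz+(1-u)(1+u)z=z$). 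You instead take logarithms, reducing to $g(x)=2x\tanh y-\ln\cosh(x+y)+\ln\cosh(x-y)\geq 0$, and run the same ``vanishes at $x=0$, derivative nonnegative'' skeleton, but your crux is the inequality $\tanh(x+y)-\tanh(x-y)\leq 2\tanh y$, established by a second nested monotonicity argument via $\operatorname{sech}^2(x+y)\leq\operatorname{sech}^2(x-y)$, which hinges on the comparison $x+y\geq|x-y|$ for $x,y\geq 0$ — you are right to flag that as the step needing care, especially in the regime $x<y$. The trade-off: the paper's argument is shorter once the substitution $u=\tanh y$ is guessed, with the convexity step a one-liner, but the substitution itself is the nonobvious move; your version avoids any substitution, uses only elementary properties of hyperbolic functions (evenness and unimodality of $\operatorname{sech}^2$), and isolates the content in the clean, reusable increment bound on $\tanh$. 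One cosmetic remark: the paper's intermediate display asserts strict inequality, which fails at $x=0$ or $y=0$; your handling of the equality cases is the more careful statement.
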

\begin{proof}
  Let $u = \tanh y\in [0, 1)$, then $y = \frac{1}{2}\ln{\frac{1+u}{1-u}}$. The required result is equivalent to showing that
  \begin{align*}
    e^{2xu}[(1+u)e^{-x}+(1-u)e^x]>(1+u)e^x+(1-u)e^{-x}.
  \end{align*}
  Define
  \begin{align*}
    f_u(z) = (1+u)e^{uz}+(1-u)e^{(1+u)z}-(1+u)e^z-(1-u).
  \end{align*}
  Clearly, $f_u(0) = 0$, and
  \begin{align*}
    f_u'(z) = (1+u)[ue^{uz}+(1-u)e^{(1+u)z}-e^z].
  \end{align*}
  By the convexity of $e^z$, $ue^{uz}+(1-u)e^{(1+u)z}\geq e^z$. Hence, $f_u'(z)\geq 0$,
  which implies $f_u(z)\geq 0$. We finish the proof by noticing that the original
  inequality is equivalent to $f_u(2x)\geq 0$.
\end{proof}
\begin{proof}[Proof of Lemma~\ref{lem:tree_extern_field}]
  \begin{align*}
    \sum_{x_2}{e^{J_{12}x_1x_2+h_2x_2}} = e^{J_{12}x_1+h_2}+e^{-J_{12}x_1-h_2}\propto
    e^{h_1'x_1}.
  \end{align*}
  Comparing the ratio of $x_1 = \pm1$, we get
  \begin{align*}
     \frac{e^{J_{12}+h_2}+e^{-J_{12}-h_2}}{e^{-J_{12}+h_2}+e^{J_{12}-h_2}} =
     \frac{e^{h_1'}}{e^{-h_1'}} = e^{2h_1'}.
  \end{align*}
  So
  \begin{align*}
    h_1' =
    \frac{1}{2}\log{\frac{e^{J_{12}+h_2}+e^{-J_{12}-h_2}}{e^{-J_{12}+h_2}+e^{J_{12}-h_2}}}\leq
    |h_2|\tanh|J_{12}|.
  \end{align*}
  The last inequality follows from Lemma~\ref{lem:extern_field_decay_inequality}.
\end{proof}

It is easy to see that $|h_1'|\leq |h_2|\tanh|J_{\max}|<|h_2|$. By induction, we can
bound the external field induced by the whole subtree.

\begin{proof}[Proof of Lemma~\ref{lem:random_cond_prob_lower_bound}]
  First we have
  \begin{align*}
    P(x_i|x_R) = & P(x_i|x_{U(R)}, x_A; \Tsaw)\\
    = & \sum_{x_B}{P(x_i|x_B, x_{\tilde{U}(R)}, x_{\tilde{A}}; \Tsaw)P(x_B|x_{U(R)}, x_A; \Tsaw)}\\
    \geq & \min_{x_B}P(x_i|x_B, x_{\tilde{U}(R)}, x_{\tilde{A}}; \Tsaw)\\
    = &P(x_i|x_B^m, x_{\tilde{U}(R)}, x_{\tilde{A}}; \Tsaw) \triangleq Q(x_i),
  \end{align*}
  where $Q$ is the probability on the tree with external fields induced by $x_B^m, x_{\tilde{U}(R)},
  x_{\tilde{A}}$. We only need to consider the external fields on the parent nodes of $B,
  \tilde{U}(R), \tilde{A}$ as the conditional probability is on a tree. The
  nodes affected by $B$ are all $\gamma_p$ away from $i$ and the total number of them is no larger than
  $|B|$, which is bounded by Lemma~\ref{lem:random_graph_sphere_size}.
  The number of nodes affected by $\tilde{U}(R), \tilde{A}$ is no larger than
  $|\tilde{U}(R)|+|\tilde{A}|$. By Lemma~\ref{lem:random_graph_separator_size} and
  Lemma~\ref{lem:random_graph_SAW_terminal_size}, $|\tilde{U}(R)|\leq 2|R|$ and $|\tilde{A}|\leq
  1$ almost always. Applying the reduction technique to the tree till a single root node $i$,
  by Lemma~\ref{lem:tree_extern_field}, we bound the induced external field on $i$ as
  \begin{align*}
    |h_i|\leq & [(\tanh J_{\max})^{\gamma_n}|B|+(|\tilde{U}(R)|+|\tilde{A}|)]J_{\max}\\
    \leq & O((c\tanh J_{\max})^{\gamma_n}\log n+2|R|+1)\\
    \leq & O(n^{-\kappa}+7) = O(1).
  \end{align*}
  So,
  \begin{align*}
    Q(x_i) = \frac{e^{h_ix_i}}{e^{h_ix_i}+e^{-h_ix_i}}\geq \Omega(e^{-2|h_i|}) =
    \Omega(1).
  \end{align*}
  When $p$ is large enough, there exists some constant $C$ such that $P(x_i|x_R)\geq C$.
\end{proof}

\subsection{Proof of Theorem~\ref{thm:random_loose_nonneighbor}}

Let $S$ be the set that separates all the paths shorter than $\gamma_p$ between nodes $i, j$ with size $|S|\leq 3$. It is straightforward to show that $I(X_i; X_j|X_S) = o(p^{-2\kappa})$
in a manner similar to \cite[Lemma 5]{Anima1}. The only difference is
that the correlation decay property in Theorem~\ref{thm:random_correlation_decay} takes a
different form, which is easier to apply, therefore the proof there needs to be modified
accordingly. We also note that the constant $C$ in
Lemma~\ref{lem:random_cond_prob_lower_bound} is referred to as $f_{\min}(S)$ in
\cite{Anima1}. The details are omitted here.

\subsection{Proof of Theorem~\ref{thm:random_loose_neighbor}}

When $j$ is a neighbor of $i$, conditioned on the approximate separator $T$, there is one
copy of $j$ which is a child of the root $i$ in the SAW tree and is the only copy that within $\gamma_p$
from $i$. In Theorem~\ref{thm:random_loose_neighbor}, we show that the
effect of conditioning on $T$ is bounded and this copy of $j$ has a nontrivial impact on
$i$. With a little abuse of notation, we use $j$ to denote this copy of $j$ in $\Tsaw$. W.l.o.g assume $J_{ij}>0$. As in Lemma~\ref{lem:random_graph_correlation_decay},
  \begin{align*}
    &\max_{x_i, x_j}|P(x_i|x_j, x_T) - P(x_i|x_j', x_T)|\\
    =& \max_{x_i, x_j}|P(x_i|x_{U(j)}, x_{U(T)}, x_A; \Tsaw) - P(x_i|x_{U(j)}', x_{U(T)}, x_A; \Tsaw)|\\
    =& \max_{x_i, x_j}|P(x_i|x_Z, x_{U(T)}, x_A; \Tsaw) - P(x_i|x_Z', x_{U(T)}, x_A; \Tsaw)|\\
    =& \max_{x_i, x_j}|\sum_{x_B}{P(x_i|x_j, x_B, x_{\tilde{U}(T)}, x_{\tilde{A}}; \Tsaw)P(x_B|x_Z,x_{U(T)}, x_A; \Tsaw)} \\
    & - \sum_{x_B}{P(x_i|x_B, x_{\tilde{U}(T)}, x_{\tilde{A}}; \Tsaw)P(x_B|x_Z',x_{U(T)}, x_A; \Tsaw)}|\\
    \geq & \min_{x_B}P(x_i = +|x_j = +, x_B, x_{\tilde{U}(T)}, x_{\tilde{A}}; \Tsaw)\\&-\max_{x_B}P(x_i = +|x_j = -, x_B, x_{\tilde{U}(T)}, x_{\tilde{A}}; \Tsaw)\\
    =& P(x_i = +1|x_j = +1, x_B^m, x_{\tilde{U}(T)}, x_{\tilde{A}}; \Tsaw)\\&-P(x_i = +1|x_j = -1, x_B^M, x_{\tilde{U}(T)}, x_{\tilde{A}}; \Tsaw)\\
    = & P(x_i = +1|x_j = +1, x_B^m, x_{\tilde{U}(T)}, x_{\tilde{A}}; \Tsaw)\\&-P(x_i = +1|x_j = -1, x_B^m, x_{\tilde{U}(T)}, x_{\tilde{A}}; \Tsaw)\\
    &+ P(x_i = +1|x_j = -1, x_B^m, x_{\tilde{U}(T)}, x_{\tilde{A}}; \Tsaw)\\&-P(x_i = +1|x_j = -1, x_B^M, x_{\tilde{U}(T)}, x_{\tilde{A}}; \Tsaw)\\
    \geq & Q(x_i=+1|x_j =+1)-Q(x_i = +1|x_j = -1) - |B|(\tanh{J_{\max}})^{\gamma_n},
  \end{align*}
  where $Q$ is the probability measure on the reduced graph with only nodes $i, j$. We have
  \begin{align*}
    &Q(x_i=+1|x_j =+1)-Q(x_i = +1|x_j = -1)\\
    =&\frac{e^{2J_{ij}}-e^{-2J_{ij}}}{e^{2J_{ij}}+e^{-2J_{ij}}+e^{2h_i}+e^{-2h_i}}\\
    \geq
    &\frac{e^{2J_{\min}}-e^{-2J_{\min}}}{e^{2J_{\min}}+e^{-2J_{\min}}+e^{2h_i}+e^{-2h_i}}
    = \Omega(e^{-2|h_i|}).
  \end{align*}
  The external fields in $Q$ are induced by the conditioning on $B, \tilde{U}(T),
  \tilde{A}$. As in the proof of Lemma~\ref{lem:random_cond_prob_lower_bound}, we have
  $|h_i|\leq O(1)$,
  so $Q(x_i=+|x_j =+)-Q(x_i = +|x_j = -) = \Omega(1)$. Hence,
  \begin{align*}
    \max_{x_i, x_j}|P(x_i|x_j, x_S) - P(x_i|x_j', x_S)|\geq
    \Omega(1)-O(p^{-\kappa}) = \Omega(1).
  \end{align*}

  Using this result, the lower bound $I(X_i; X_j|X_T) = \Omega(1)$ simply follows from
  the proof of \cite[Lemma 7]{Anima1}. Again we note that the constant $C$ in
  Lemma~\ref{lem:random_cond_prob_lower_bound} is referred to as $f_{\min}(T)$ in
  \cite{Anima1}. The details are omitted here.

\subsection{Proof of Theorem~\ref{thm:random_loose_ferro_neighbor}}

The proof of the theorem needs the following lemma.

\begin{lem}
  $X$ is a ferromagnetic Ising model (possibly with external fields). $\forall i\in V,
  \forall S\subset V\setminus i$,
  \begin{align*}
    P(x_i = +1|x_S = +1)\geq P(x_i= +1|x_S = -1).
  \end{align*}
\end{lem}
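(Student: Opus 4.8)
The plan is to derive the inequality from the association (positive correlation) property of ferromagnetic Ising models, Proposition~\ref{prop:association_ferro}, by pivoting through the unconditional marginal $P(X_i = +1)$. Since $P(x) > 0$ for all $x$, the conditional probabilities $P(X_i = +1 \mid X_S = \pm 1)$ are well defined, and it suffices to establish the two bounds
$$P(X_i = +1 \mid X_S = +1) \ge P(X_i = +1) \ge P(X_i = +1 \mid X_S = -1),$$
whose concatenation is exactly the claim.

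First I would prove the left inequality. Take $f(X) = \mathbb{I}\{X_i = +1\}$ and $g(X) = \mathbb{I}\{X_S = +1\} = \prod_{k \in S}\mathbb{I}\{X_k = +1\}$; both are nondecreasing functions of $X$ in the coordinatewise order on $\{-1,+1\}^p$, the latter being a product of nondecreasing nonnegative indicators. Association then gives $\mathrm{Cov}(f(X), g(X)) \ge 0$, i.e. $P(X_i = +1, X_S = +1) \ge P(X_i = +1)P(X_S = +1)$, and dividing by $P(X_S = +1) > 0$ yields $P(X_i = +1 \mid X_S = +1) \ge P(X_i = +1)$.

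Next I would prove the right inequality in the same spirit. Here $g'(X) = \mathbb{I}\{X_S = -1\} = \prod_{k \in S}\mathbb{I}\{X_k = -1\}$ is nonincreasing, so $-g'$ is nondecreasing; applying association to the pair $f$ and $-g'$ gives $\mathrm{Cov}(f(X), g'(X)) \le 0$, i.e. $P(X_i = +1, X_S = -1) \le P(X_i = +1)P(X_S = -1)$, and dividing by $P(X_S = -1) > 0$ gives $P(X_i = +1 \mid X_S = -1) \le P(X_i = +1)$. Chaining the two displayed bounds completes the argument.

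The proof is short, and the only point demanding care is the direction of the covariance inequalities: association as stated in Definition~\ref{def:association} compares two \emph{nondecreasing} functions, so the second bound must be obtained by observing that $\mathbb{I}\{X_S = -1\}$ is nonincreasing and passing to its negation. I do not anticipate a substantive obstacle, since the monotonicity of $\mathbb{I}\{X_i = +1\}$ and of the product indicators $\mathbb{I}\{X_S = \pm 1\}$ is immediate and the assumption $P(x) > 0$ guarantees every conditional probability is well defined; the entire content lies in invoking Proposition~\ref{prop:association_ferro}.
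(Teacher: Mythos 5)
Your proof is correct, but it takes a genuinely different route from the paper's. You invoke association directly on the full distribution, pairing $f(X) = \mathbb{I}\{X_i = +1\}$ with the nondecreasing indicator $g(X) = \mathbb{I}\{X_S = +1\}$ and with the negation of the nonincreasing $g'(X) = \mathbb{I}\{X_S = -1\}$, and you pivot through the unconditional marginal via $P(X_i = +1 \mid X_S = +1) \geq P(X_i = +1) \geq P(X_i = +1 \mid X_S = -1)$; your handling of the second bound through $-g'$ is exactly right, and positivity of $P$ keeps every conditional well defined. The paper instead flips the spins of $S$ one coordinate at a time: for each $j \in S$ it forms $\tilde{P}(x_i, x_j) = P(x_i, x_j \mid x_{S\setminus j})$, uses the fact that conditioning an Ising model only shifts external fields and hence preserves ferromagnetism, applies association to this two-variable conditional model to get $\tilde{P}(+1,+1)\tilde{P}(-1,-1) \geq \tilde{P}(+1,-1)\tilde{P}(-1,+1)$ and thus $P(x_i = +1 \mid x_j = +1, x_{S\setminus j}) \geq P(x_i = +1 \mid x_j = -1, x_{S\setminus j})$ for an arbitrary configuration of the remaining spins, and then chains these single-flip inequalities from $x_S = +1$ down to $x_S = -1$. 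Your argument is shorter and avoids both the closure of ferromagnetic Ising models under conditioning and the telescoping over coordinates; what the paper's argument buys is a strictly stronger by-product, namely that $P(x_i = +1 \mid x_S)$ is monotone in $x_S$ with respect to the coordinatewise partial order (monotone under every single-spin flip), rather than only a comparison of the two extreme configurations — though for the lemma as stated, and for its use in the proof of Theorem~\ref{thm:random_loose_ferro_neighbor}, your conclusion suffices.
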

\begin{proof}
  For any node $j\in S$, let probability $\tilde{P}(x_i, x_j) = P(x_i, x_j|x_{S\setminus
  j})$. The probability $\tilde{P}$ is still ferromagnetic and hence is associated. Then we have
  \begin{align*}
    &\tilde{P}(x_i = +1, x_j = +1)\tilde{P}(x_i = -1, x_j = -1)\\
    \geq &\tilde{P}(x_i = +1, x_j = -1)\tilde{P}(x_i = -1, x_j =
    +1).
  \end{align*}
  After some algebraic manipulation, we get
  \begin{align*}
    \tilde{P}(x_i = +1 | x_j = +1)\geq \tilde{P}(x_i = + 1| x_j = -1).
  \end{align*}
  This is equivalent saying that
  \begin{align*}
    P(x_i = +1|x_j = +1, x_{S\setminus j} = +1)\geq P(x_i = +1|x_j = -1, x_{S\setminus j} =
    +1).
  \end{align*}
  So flipping one node from $+1$ to $-1$ reduces the conditional probability regardless the
  what value the rest of the nodes take. Continuing this process till we flip all the
  nodes in $S$, we get the result
  \begin{align*}
    P(x_i = +1|x_S = +1)\geq P(x_i= +1|x_S = -1).
  \end{align*}
\end{proof}

\begin{proof}[Proof of Theorem~\ref{thm:random_loose_ferro_neighbor}]
  For $(i, j)\in E$, assume $J_{ij}>0$. Following the proof of
  Theorem~\ref{thm:random_loose_neighbor},
  \begin{align*}
    &\max_{x_i, x_j}|P(x_i|x_j, x_S) - P(x_i|x_j', x_S)|\\
    =& \max_{x_i, x_j}|P(x_i|x_{U(j)}, x_{U(S)}, x_A; \Tsaw) - P(x_i|x_{U(j)}', x_{U(S)}, x_A; \Tsaw)|\\
    \geq& P(x_i = +1|x_{\tilde{U}(j)} = +1, x_B^m, x_{\tilde{U}(S)}, x_{\tilde{A}}; \Tsaw)\\&-P(x_i = +1|x_{\tilde{U}(j)} = -1, x_B^M, x_{\tilde{U}(S)}, x_{\tilde{A}}; \Tsaw).
  \end{align*}

  The only difference here is that we might have more than one copy of $j$ in
  ${\tilde{U}(j)}$. Let $Z = {\tilde{U}(j)}\setminus j$. By the above lemma, we have
  \begin{align*}
    &\max_{x_i, x_j}|P(x_i|x_j, x_S) - P(x_i|x_j', x_S)|\\
    \geq & P(x_i = +1|x_j = +1, x_Z = +1, x_B^m, x_{\tilde{U}(S)}, x_{\tilde{A}}; \Tsaw)\\
    &-P(x_i = +1|x_j = -1, x_Z = +1, x_B^m, x_{\tilde{U}(S)}, x_{\tilde{A}}; \Tsaw)\\
    &+ P(x_i = +1|x_j = -1, x_Z = -1, x_B^m, x_{\tilde{U}(S)}, x_{\tilde{A}}; \Tsaw)\\
    &-P(x_i = +1|x_j = -1, x_Z = -1, x_B^M, x_{\tilde{U}(S)}, x_{\tilde{A}}; \Tsaw)\\
    \geq & Q(x_i=+1|x_j =+1)-Q(x_i = +1|x_j = -1) - |B|(\tanh{J_{\max}})^{\gamma_n}.
  \end{align*}
  As the size of $Z$ is only a constant, by the same reasoning, we finish the theorem.
\end{proof}


\section{Concentration}\label{appendix:concentration}

Before proving the concentration results in Lemma~\ref{lem:concentration}, we first present the following lemma which upper bounds the difference between the entropies of two distributions with their $l_1$-distance. Let $P$ and $Q$ be two probability mass functions on a discrete, finite set $\mathcal{X}$, and $H(P)$ and $H(Q)$ be their entropies respectively. The $l_1$ distance between $P$ and $Q$ is defined as $||P-Q||_1 = \sum_{x\in \mathcal{X}}|P(x)-Q(x)|$.

\begin{lem}\label{lem:entropy_bound}\cite[Theorem 17.3.3]{Cover}
  If $||P-Q||_1\leq\frac{1}{2}$, then $|H(P)-H(Q)|\leq
  -||P-Q||_1\log\frac{||P-Q||_1}{|\mathcal{X}|}$. When $||P-Q||_1\leq \frac{1}{e}$, the RHS is increasing in $||P-Q||_1$.
\end{lem}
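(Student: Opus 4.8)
The plan is to reduce the entropy difference to a sum of single-symbol contributions and then control each contribution by the modulus of continuity of $g(t) = -t\log t$. Write $T = \|P-Q\|_1$ and $r(x) = |P(x)-Q(x)|$, so that $\sum_{x} r(x) = T \le \tfrac12$. By the triangle inequality,
\begin{align*}
|H(P) - H(Q)| \le \sum_{x\in\mathcal{X}} \bigl|\,g(P(x)) - g(Q(x))\,\bigr|,
\end{align*}
so it suffices to bound each term by $-r(x)\log r(x)$ and then sum. Note that $r(x) \le T \le \tfrac12$ for every $x$, which is exactly the regime in which the per-symbol bound below applies.

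The analytic heart of the argument is the pointwise inequality
\begin{align*}
|g(a) - g(b)| \le -|a-b|\log|a-b|, \qquad 0\le a,b\le 1,\ |a-b|\le \tfrac12.
\end{align*}
To prove it I would set $u = |a-b|$, assume $a \le b$, and study $h(t) = g(t)-g(t+u) = (t+u)\log(t+u) - t\log t$ for $t\in[0,1-u]$. Since $h'(t) = \log\frac{t+u}{t} \ge 0$, the function $h$ is nondecreasing, so its range is $[h(0),h(1-u)] = [\,u\log u,\ -(1-u)\log(1-u)\,]$, giving $|h(t)| \le \max\{-u\log u,\ -(1-u)\log(1-u)\}$. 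The remaining point is that for $u\le\tfrac12$ the first term dominates: this follows from $\psi(u) = -u\log u + (1-u)\log(1-u)$, which satisfies $\psi(0)=\psi(\tfrac12)=0$ while $\psi'(u)$ changes sign exactly once (from positive to negative) on $(0,\tfrac12)$, forcing $\psi\ge 0$ and hence $-u\log u \ge -(1-u)\log(1-u)$ there. I expect this comparison, together with the delicate near-zero behaviour of $g$ where $g'$ blows up, to be the main obstacle; the monotonicity of $h$ is what lets us sidestep a naive mean-value estimate.

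With the per-symbol bound in hand, summing yields $|H(P)-H(Q)| \le \sum_x r(x)\log\frac{1}{r(x)}$, and the final step is a normalization trick: treating $r/T$ as a probability mass function on $\mathcal{X}$,
\begin{align*}
\sum_x r(x)\log\frac{1}{r(x)} = T\Bigl[\,H\!\left(\tfrac{r}{T}\right) + \log\tfrac{1}{T}\,\Bigr] \le T\log|\mathcal{X}| + T\log\tfrac{1}{T} = -T\log\frac{T}{|\mathcal{X}|},
\end{align*}
where we used $H(r/T)\le\log|\mathcal{X}|$. This is precisely the claimed bound. For the concluding monotonicity remark, I would differentiate the right-hand side $\rho(T) = -T\log\frac{T}{|\mathcal{X}|}$ in $T$, obtaining $\rho'(T) = \log\frac{|\mathcal{X}|}{T} - \log e$, which is positive whenever $T < |\mathcal{X}|/e$; since $|\mathcal{X}|\ge 1$, the condition $T\le \tfrac1e$ guarantees $\rho$ is increasing.
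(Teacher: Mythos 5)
Your proposal is correct: the pointwise estimate $|g(a)-g(b)|\le -|a-b|\log|a-b|$ for $|a-b|\le\tfrac12$ (via the monotonicity of $h(t)=g(t)-g(t+u)$ and the sign analysis of $\psi(u)=-u\log u+(1-u)\log(1-u)$, which is valid since $\psi'$ is strictly decreasing with $\psi'(0^+)>0>\psi'(\tfrac12)$), the normalization of $r/\|P-Q\|_1$ into a pmf bounded by $\log|\mathcal{X}|$, and the derivative computation for the monotonicity remark all check out. The paper itself gives no proof, deferring entirely to the citation \cite[Theorem 17.3.3]{Cover}, and your argument is essentially the standard proof in that reference, so the two approaches coincide.
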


\begin{proof}[Proof of Lemma~\ref{lem:concentration}]
By definition, $\forall S\subset V$ and $\forall x_S$, $|1_{\{X_S^{(i)}=\ x_S\}}-P(x_s)|\leq
1$ and
\begin{align*}
  \hat{P}(x_S) = \frac{1}{n}\sum_{i=1}^n{1_{\{X_S^{(i)}=\ x_S\}}}.
\end{align*}
By the Hoeffding inequality, 
\begin{align*}
  &P\left(|\hat{P}(x_S)-P(x_S)|\geq \gamma\right)\\
  = &P\left(\left|\sum_{i=1}^n{1_{\{X_S^{(i)}=\ x_S\}}}-nP(x_S)\right|\geq
  n\gamma\right)\leq 2e^{-\frac{n^2\gamma^2}{2n}}\leq 2e^{-\frac{n\gamma^2}{2}}.
\end{align*}
\begin{enumerate}
  \item By the union bound, we have
  \begin{align*}
    &P\left(\exists S\subset V, |S|\leq 2, \exists x_S, |\hat{P}(x_S)-P(x_S)|\geq
    \gamma\right)\\ <& p^2|\mathcal{X}|^22e^{-\frac{n\gamma^2}{2}} = 2e^{-\frac{n\gamma^2}{2}+2\log{p|\mathcal{X}|}}
  \end{align*}
  For our choice of $n$,  $\forall i, j\in V, \forall x_i, x_j$,
  \begin{align*}
    |\hat{P}(x_i, x_j)-P(x_i, x_j)|<\gamma, |\hat{P}(x_i)-P(x_i)|<\gamma,
  \end{align*}
  with probability $1-\frac{c_1}{p^{\alpha}}$ for some constant $c_1$,
  which gives $\hat{P}(x_j)>P(x_j)-\gamma\geq \frac{1}{2}-\gamma\geq\frac{1}{4}$ as
  $\gamma<\frac{1}{4}$. Hence,
  \begin{align*}
    &|\hat{P}(x_i| x_j)-P(x_i| x_j)|\\
    = &\frac{|\hat{P}(x_i, x_j)P(x_j)-P(x_i, x_j)\hat{P}(x_j)|}{P(x_j)\hat{P}(x_j)}\\
    \leq &\frac{\hat{P}(x_i, x_j)|P(x_j)-P(x_j)|}{P(x_j)\hat{P}(x_j)} +\frac{\hat{P}(x_j)|\hat{P}(x_i, x_j)-P(x_i, x_j)|}{P(x_j)\hat{P}(x_j)}\\
    \leq & \frac{2\gamma}{\frac{1}{2}} = 4\gamma.
  \end{align*}

  \item By the union bound, we have
  \begin{align*}
    &P\left(
    \begin{array}{c}
    \exists i\in V, \exists S\subset L_i, |S|\leq D_1+D_2+1, \exists x_S, \\
    |\hat{P}(x_S)-P(x_S)|\geq \gamma, |\hat{P}(x_i, x_S)-P(x_i, x_S)|\geq \gamma
    \end{array}
    \right)\\
    <& 2pL^{D_1+D_2+1}|\mathcal{X}|^{D_1+D_2+2}2e^{-\frac{n\gamma^2}{2}} \\
    < &4e^{-\frac{n\gamma^2}{2}+\log p+(D_1+D_2+1)\log L+(D_1+D_2+2)\log|\mathcal{X}|}.
  \end{align*}
  For our choice of $n$, $\forall i\in V, \forall j\in L_i, \forall S\subset L_i, |S|\leq
  D_1+D_2, \forall x_i, x_j, x_S$,
  \begin{align*}
    &|\hat{P}(x_i, x_j, x_S)-P(x_i, x_j, x_S)|\leq \gamma,\ |\hat{P}(x_j, x_S)-P(x_j, x_S)|\leq
    \gamma,
  \end{align*}
  with probability $1-\frac{c_2}{p^{\alpha}}$ for some constant $c_2$,
  which gives $\hat{P}(x_j, x_S)>P(x_j, x_S)-\gamma\geq \frac{\delta}{2}$ as
  $\gamma<\frac{\delta}{2}$. Hence,
  \begin{align*}
    &|\hat{P}(x_i| x_j, x_S)-P(x_i| x_j, x_S)|\\
    = &\frac{|\hat{P}(x_i, x_j, x_S)P(x_j, x_S)-P(x_i, x_j,
    x_S)\hat{P}(x_j, x_S)|}{P(x_j, x_S)\hat{P}(x_j, x_S)}\\
    \leq &\frac{\hat{P}(x_i, x_j, x_S)|P(x_j, x_S)-P(x_j, x_S)|}{P(x_j, x_S)\hat{P}(x_j, x_S)}\\
    &+\frac{\hat{P}(x_j, x_S)|\hat{P}(x_i, x_j, x_S)-P(x_i, x_j, x_S)|}{P(x_j, x_S)\hat{P}(x_j, x_S)}\\
    \leq & \frac{2\gamma}{\delta}.
  \end{align*}
  \item As in the previous case, for our choice of $n$,
  $\forall i, j\in V, \forall S\subset L_i, |S|\leq D_1+D_2, \forall x_i, x_j, x_S$,
  \begin{align*}
    |\hat{P}(x_i, x_j, x_S)-P(x_i, x_j, x_S)|\leq &\gamma, \\
    |\hat{P}(x_j, x_S)-P(x_j, x_S)|\leq & \gamma, \\
    |\hat{P}(x_S)-P(x_S)|\leq & \gamma
  \end{align*}
  with probability $1-\frac{c_3}{p^{\alpha}}$ for some constant $c_3$.
  So we get
  \begin{align*}
    ||\hat{P}(X_i, X_j, X_S)-P(X_i, X_j, X_S)||_1 \leq
    |\mathcal{X}|^{D_1+D_2+2}\gamma\leq \frac{1}{2}.
  \end{align*}
  By Lemma~\ref{lem:entropy_bound},
  \begin{align*}
    &|\hat{H}(X_i, X_j, X_S)-H(X_i, X_j, X_S)|\\
    \leq &-||\hat{P}(X_i, X_j, X_S)-P(X_i, X_j,
    X_S)||_1 \\&\log \frac{||\hat{P}(X_i, X_j, X_S)-P(X_i, X_j,
    X_S)||_1}{|\mathcal{X}|^{D_1+D_2+2}}\\
    \leq &-|\mathcal{X}|^{D_1+D_2+2}\gamma\log \gamma
    = -2 |\mathcal{X}|^{D_1+D_2+2}\gamma\log\sqrt{\gamma}\\
    \leq &2|\mathcal{X}|^{D_1+D_2+2}\sqrt{\gamma}.
  \end{align*}
  The last inequality used the fact that $0<-\sqrt{\gamma}\log\sqrt{\gamma}<1$ for $0<\gamma<1$.
  Similarly, we have the same upper bound for $|\hat{H}(X_i, X_S)-H(X_i, X_S)|, |\hat{H}(X_j, X_S)-H(X_j,
  X_S)|$ and $|\hat{H}(X_S)-H(X_S)|$. We finish the proof by noticing that
  \begin{align*}
    I(X_i;X_j|X_S) = H(X_i, X_S) + H(X_j, X_S)-H(X_i, X_j, X_S) - H(X_S).
  \end{align*}

\end{enumerate}
\end{proof}

\bibliographystyle{imsart-nameyear}
\bibliography{bibfile}

\end{document}